\DeclareMathOperator{\Hom}{Hom}
\DeclareMathOperator{\FundRep}{FundRep}
\tikzstyle{node}=[fill=white, draw=black, shape=circle, minimum size=1mm, ultra thick]
\tikzstyle{small box}=[fill=white, draw=black, shape=rectangle, minimum height=0.5cm, minimum width=0.5cm, ultra thick]
\tikzstyle{weyl}=[fill=white, draw={rgb,255: red,0; green,0; blue,109}, shape=rectangle, minimum height=0.5cm, minimum width=0.5cm]
\tikzstyle{filled_node}=[fill=black, draw=black, shape=circle, minimum size=1mm, ultra thick]
\tikzstyle{thick}=[-, ultra thick]
\tikzstyle{blue_thick}=[-, ultra thick, draw=blue]
\tikzstyle{dashes}=[-, dashed, draw={rgb,255: red,191; green,191; blue,191}, dash pattern=on 2mm off 1mm, fill={rgb,255: red,244; green,228; blue,0}]
\tikzstyle{thick_arrow}=[ultra thick, ->]
\tikzstyle{dash_1}=[-, dashed]
\tikzstyle{dash_2}=[-, dashed, fill={rgb,255: red,246; green,235; blue,255}]
\tikzstyle{dash_3}=[-, dashed, fill={rgb,255: red,229; green,255; blue,181}]
\tikzstyle{dash_4}=[-, dashed, fill={rgb,255: red,255; green,209; blue,153}]
\tikzstyle{red_thick}=[-, ultra thick, draw=red]
\tikzstyle{dash_5}=[-, dashed, fill={rgb,255: red,225; green,255; blue,254}]
\tikzstyle{jellyfish}=[-, ultra thick, fill={rgb,255: red,34; green,48; blue,255}]
\theoremstyle{plain}
\newtheorem{theorem}{Theorem}[section]
\newtheorem{proposition}[theorem]{Proposition}
\newtheorem{lemma}[theorem]{Lemma}
\newtheorem{corollary}[theorem]{Corollary}
\theoremstyle{definition}
\newtheorem{definition}[theorem]{Definition}
\theoremstyle{remark}
\newtheorem{remark}[theorem]{Remark}
\newtheorem{example}[theorem]{Example}
\newcommand{\norm}[1]{\left\lVert#1\right\rVert}
\icmltitlerunning{Compact Matrix Quantum Group Equivariant Neural Networks}
\begin{document}

\twocolumn[
\icmltitle{Compact Matrix Quantum Group Equivariant Neural Networks}

% It is OKAY to include author information, even for blind
% submissions: the style file will automatically remove it for you
% unless you've provided the [accepted] option to the icml2025
% package.

% List of affiliations: The first argument should be a (short)
% identifier you will use later to specify author affiliations
% Academic affiliations should list Department, University, City, Region, Country
% Industry affiliations should list Company, City, Region, Country

% You can specify symbols, otherwise they are numbered in order.
% Ideally, you should not use this facility. Affiliations will be numbered
% in order of appearance and this is the preferred way.
%\icmlsetsymbol{equal}{*}

\begin{icmlauthorlist}
\icmlauthor{Edward Pearce--Crump}{yyy}
\end{icmlauthorlist}

\icmlaffiliation{yyy}{Department of Computing, Imperial College London, United Kingdom}

\icmlcorrespondingauthor{Edward Pearce--Crump}{ep1011@ic.ac.uk}

% You may provide any keywords that you
% find helpful for describing your paper; these are used to populate
% the "keywords" metadata in the PDF but will not be shown in the document
\icmlkeywords{Machine Learning, ICML}

\vskip 0.3in
]

% this must go after the closing bracket ] following \twocolumn[ ...

% This command actually creates the footnote in the first column
% listing the affiliations and the copyright notice.
% The command takes one argument, which is text to display at the start of the footnote.
% The \icmlEqualContribution command is standard text for equal contribution.
% Remove it (just {}) if you do not need this facility.

\printAffiliationsAndNotice{}  % leave blank if no need to mention equal contribution
%\printAffiliationsAndNotice{\icmlEqualContribution} % otherwise use the standard text.

\begin{abstract}
	Group equivariant neural networks have proven effective in modelling a wide
	range of tasks where the data lives in a classical geometric space and exhibits
	well-defined group symmetries.
	However, these networks are not suitable for learning from data 
	that lives in a non-commutative geometry,
	described formally by non-commutative $C^{*}$-algebras,
	since the $C^{*}$-algebra of 
	continuous functions on a compact matrix group is commutative.
	To address this limitation, we derive the existence of
	a new type 
	of equivariant neural network, 
	called compact matrix quantum group equivariant neural networks, 
	which encode 
	symmetries that are described by
	compact matrix quantum groups.
	We characterise the weight matrices that appear in these neural networks 
	for the easy compact matrix quantum groups, 
	which are defined by set partitions.
	As a result, we obtain new characterisations of equivariant weight matrices 
	for some compact matrix groups that have not appeared previously
	in the machine learning literature.
\end{abstract}

\section{Introduction} \label{qcintroduction}

Spaces in classical (commutative) geometry are given by 
sets of points with additional structure,
and their symmetries are formally described by groups.
%The algebra of continuous functions on these spaces, which is commutative at the level of 
%$C^{*}$-algebras, is characterized by the~Gelfand-–Naimark theorem in functional analysis. 
%The symmetries of these spaces are formally described by groups.
%The symmetries of these spaces,
%which are inherently commutative at the level of $C^{*}$-algebras, by the Gelfand--Naimark
%theorem in functional analysis \textbf{CITE},
%are formally described by groups.
For example, the compact matrix group $SO(3)$ is the group of all rotations 
of $\mathbb{R}^{3}$ that preserve the origin.
A class of neural networks called \textit{group equivariant neural networks}
was conceived to take advantage of group symmetries that are inherent in data 
by encoding them as an inductive bias in their architectures.
%with the goal of creating networks that generalise better to unseen data.
Many group equivariant neural networks have been designed using 
representations of groups as their layer spaces,
and characterisations of the weight matrices that appear in them 
have been found for a number of important groups 
\citep{deepsets, ravanbakhsh17a, maron2018, finzi, villar2021scalars, pearcecrumpB, pearcecrumpJ, pearcecrump, godfrey}.
These networks have proven to perform well across a wide range of tasks, including, but not restricted to, particle physics \citep{bogatskiy2020}, natural language processing \citep{gordon2020}, the generation of molecules \citep{satorras21a}, and computer vision \citep{chatzipantazis2023}.

However, when we consider tasks that involve learning from data that lives in a
\textit{non-commutative geometry},
such as 
learning 
%spectral properties of random matrices in random matrix theory \citep{hessam2022},
properties of random variables in free probability theory
\citep{voiculescu1985, voiculescu1992},
physical phenomena in particle physics \citep{bhowmick2014, vansuijlekom2014},
and properties of exactly solvable models in statistical mechanics \citep{podles1995},
a group equivariant neural network is no longer a valid choice of model.
This is because
%, in non-commutative geometry, the conceptual framework shifts 
%from groups and spaces to algebras \textbf{CITE}.
%The fundamental reason is that
%thinking in terms of groups and spaces is no longer appropriate in 
%non-commutative geometry, where key concepts are defined instead by algebras.
%and so 
%group equivariant neural networks are no longer a valid choice of model
%for these tasks.
%thinking in terms of groups and spaces is no longer appropriate, 
%nor generally possible, 
%when we consider \textit{non-commutative geometries},
%which were discovered by Connes \textbf{CITE}.
%Specifically, 
non-commutative geometry, as developed by \citet{Connes1995},
is formally described using non-commutative $C^{*}$-algebras,
which generalize commutative algebras $C(X)$
of continuous functions on compact Hausdorff spaces $X$,
and one can show that the $C^{*}$-algebra $C(G)$ of continuous functions 
on a compact matrix group $G$
is indeed commutative 
(Appendix \ref{exCstaralgebras}). 
In particular, in non-commutative geometry,
%this framework, 
the classical notion of a space is replaced by an algebraic structure, 
%rendering
thus making
traditional group-based symmetries inadequate 
\citep{Budzyński1995, Majid2000, Sitarz2013}.
Instead, a more general notion of symmetry is required.
%In this framework, the classical notion of a space becomes less meaningful, 
%and is replaced by an algebraic structure.
%Indeed, non-commutative geometries, discovered by Connes \textbf{CITE}, 
%are given formally by non-commutative $C^{*}$-algebras,
%which generalise commutative algebras 
%$C(X)$ of continuous functions on compact Hausdorff spaces $X$
%in such a way that the notion of a space is no longer meaningful.
%Moreover, since groups do not capture the symmetries of non-commutative geometries,
%a more general notion of symmetry is required. 
Quantum groups, which have been developed by a number of authors 
\citep{Jimbo1985, drinfeld1986, woronowicz1987, woronowicz1991, woronowicz1998},
are an important class of mathematical objects for describing these symmetries.

Given that there exist symmetries that are not captured by groups,
this implies that there is a need for a new type of equivariant neural network.
%The correct mathematical objects for these geometries are quantum groups, 
%which have been developed by a number of authors \textbf{NAME AND CITE}
In this paper, we 
%address this problem by 
focus on quantum groups that were constructed 
using a topological approach by \citet{woronowicz1987}.
%where he 
Woronowicz generalised compact matrix groups 
to create a class of quantum groups
%$C^{*}$-algebras 
called 
%known as a
%what are called 
\textit{compact matrix quantum groups}.
We describe and motivate this construction in Section \ref{noncommgeometry},
%We introduce and motivate non-commutative $C^{*}$-algebras in Section \textbf{INSERT REF},
and provide a comprehensive background on the prerequisite mathematical material,
namely, general topology, $C^{*}$-algebras,
and operator theory, in the Appendix.

%Having identified our research problem,
%we can now state our main contributions.
%We outline our main contributions as follows.
The main contributions of our work are as follows.
%Our main contributions include 
We broaden the scope of equivariant neural networks beyond classical symmetries
by defining a new class of neural networks 
%architectures 
that encode, as an inductive bias, 
%non-commutative 
symmetries 
described by compact matrix quantum groups.
We call these neural networks 
\textit{compact matrix quantum group equivariant neural networks}.
%We address the need for a new type of equivariant neural network by
%defining and introducing, in Section \textbf{INSERT REF},
%\textit{compact matrix quantum group equivariant neural networks},
%which are specifically designed to encode, as an inductive bias,
%the symmetries that are described by compact matrix quantum groups.
%defining and introducing, in Section \textbf{INSERT REF},
%define and introduce, in Section \textbf{INSERT REF},
%a new type of equivariant neural network 
%that is specifically designed to encode, as an inductive bias,
%the symmetries that are described by compact matrix quantum groups.
%We call these neural networks 
%\textit{compact matrix quantum group equivariant neural networks}.
%In this paper, we make two main contributions.
%In doing so, we make two main contributions to the literature on equivariant
%neural networks.
%Firstly, where group equivariant neural networks are an appropriate choice of model
%the correct 
%modelling choice
%for symmetries that are given by groups,
%we define and introduce
%define and derive \textbf{EDIT THIS TO SCOPE OUT CONTRIBUTION}
%compact matrix quantum group equivariant neural networks
%and propose it as a theoretical framework
%as an appropriate choice of model
%for modelling symmetries in data 
%that are given by compact matrix quantum groups.
In fact, we mathematically derive their existence using Woronowicz's version of 
Tannaka-Krein duality for compact matrix quantum groups \citep{woronowicz1988}, 
which is introduced in Section~\ref{tannakakreinsection}.
After proving that these neural networks exist,
we characterise their weight matrices for
%the equivariant weight matrices that appear in them for
a large class of compact matrix quantum groups known as \textit{easy}.
We focus on easy compact matrix quantum groups 
not only because they have
%they have 
been studied extensively in the literature 
\citep{banica, banica2010, weber2013, freslonweber2016, raum2016, tarrago2016, weber2017, tarrago2018, gromada2020, freslon2023},
but also because the way in which they are defined, 
%In particular, easy compact matrix quantum groups are 
%defined using 
namely, by set partitions,
makes it possible to obtain a full characterisation of the equivariant weight matrices 
for these compact matrix quantum groups.
We note that our contributions are primarily theoretical in nature: to demonstrate the 
practical potential of these neural networks, further work is needed to extend
the characterisation 
that we have found
for the easy compact matrix quantum groups
to the equivariant non-linear layers so that they can be implemented.
%We note that our contributions are theoretical in nature: in order to demonstrate
%in practice what these neural networks promise in theory,
%%the practical benefits of these networks, 
%we would need to extend the characterisation to 
%the non-linear equivariant layers so that they can be implemented.
%that appear in the new neural networks that are equivariant to them.
%which can be used directly to obtain the weight matrices.
%and we can use the set partitions themselves to obtain the weight matrices. 
%and enhances the practical feasibility of implementing these networks.
%and we can use these set partitions to obtain the weight matrices themselves.
%We use these set partitions to obtain both the compact matrix quantum groups 
%and the weight matrices.
%In doing so, we leverage a large amount of literature on 
%easy compact matrix quantum groups \textbf{CITE THIS}, 
%where the easy compact matrix quantum groups themselves are 
%%where these easy compact matrix quantum groups have been
%constructed from set partitions.
%\textbf{POTENTIALLY NEED SOMETHING HERE THAT SAYS WHAT I'VE ACTUALLY SOLVED HERE/
%IMPACT OF THIS SOLUTION: ADDRESS THE FOLLOWING HERE}
%- \textbf{KEY Q: What specific issues in the field of machine learning or other domains can it address? MAKE SURE NOT TO OVERPROMISE HERE, PUT IN LIMITATIONS, SOMETHING LIKE UNABLE TO IMPLEMENT AS WE DO NOT KNOW NON-LINEARITIES, HENCE OUR CONTRIBUTIONS ARE PURELY THEORETICAL IN NATURE.}

We note as a corollory of our work that
the new class of neural networks generalises and encompasses all
neural networks that are equivariant to compact matrix groups, since
we show, in Section \ref{noncommgeometry},
that compact matrix groups are, and can be expressed as, compact matrix quantum groups.
As a result, we recover characterisations of the weight matrices
that have appeared in \citet{ravanbakhsh17a, maron2018, pearcecrumpB, pearcecrump, godfrey},
and obtain characterisations of the equivariant weight matrices for three
compact matrix groups that have not appeared in the machine learning literature before,
namely the hyperoctahedral group $H_n$, the bistochastic group $B_n$, 
and the unitary group $U(n)$.

%\textbf{SOMEWHERE NOTE THAT ALL PROOFS, IF NOT PROVIDED IN THE MAIN TEXT, ARE EITHER
%GIVEN IN THE SUPPLEMENTARY MATERIAL OR REFERENCES TO THE LITERATURE ARE PROVIDED.}

A word on notation: we use $[n]$ to refer to the set of elements $\{1, \ldots, n\}$, and, for any set $A$, we denote the set of $n \times n$ matrices with entries in $A$ by $M_n(A)$.

%%%%%%%%%%%%%%%%%%%%%%%%%%%%%%%%%%%%%%%%%%%%%%%%%%%%%%%%%%%%%%%%%%%%%%%%

\section{Related Work}

A few prior related works \citep{hashimoto22, hataya2023, hashimoto24a}
use $C^*$-algebras to design new machine learning architectures, though their approach
is motivated by different objectives.
While they generalize complex-valued layer spaces and parameters to function-valued ones,
our approach extends the concept of group symmetry in group equivariant neural networks to a certain type of quantum group symmetry, thus expanding the range of equivariant models. 
Consequently, their models need to be trained with a custom gradient descent method, 
whereas 
%we believe that 
our networks can still be trained 
with standard backpropagation as both the layer spaces and weight matrices remain
complex-valued.
Crucially, while both approaches can be used to construct 
group equivariant neural networks, their models do not encode quantum group symmetry, 
which is central to our framework.
Other works, such as those by \citet{ruhe23b, ruhe23a},
use Clifford algebras to develop equivariant
neural networks that encode symmetries related to rotations and reflections. 
However, their approach seems loosely related to ours, in that while both involve
non-commutative algebras, their networks capture symmetries in classical geometries 
whereas ours looks at symmetries in non-commutative geometries.
Finally, our approach differs from that of \citet{hoffmann2020}, where
they look to learn the algebraic structure from their network. 
By contrast, we show in Theorem \ref{easycstar} that
the algebraic structure for easy compact matrix quantum group equivariant neural networks
can be derived mathematically from the weight matrices themselves.

%%%%%%%%%%%%%%%%%%%%%%%%%%%%%%%%%%%%%%%%%%%%%%%%%%%%%%%%%%%%%%%%%%%%%%%%

%%%%%%%%%%%%%%%%%%%%%%%%%%%%%%%%%%%%%%%%%%%%%%%%%%%%%%%%%%%%%%%%%%%%%%%%

\section{Non-Commutative Geometry} \label{noncommgeometry}

In this section, we first introduce non-commutative geometry 
and then motivate the existence of compact matrix quantum groups 
by studying certain properties of compact matrix groups $G(n) \subseteq GL(n)$.

Two ideas were important in the development of non-commutative geometry.
The first was the paradigm shift 
from thinking in terms of points in a space (geometry)
to considering functions on spaces (algebra)
as the primary method for solving problems in geometry and topology 
\citep{Budzyński1995, Majid2000, Sitarz2013}.
Indeed, whilst one can show 
(Appendix \ref{exCstaralgebras}) 
that the algebra $C(X)$ of 
continuous, complex-valued functions on a compact Hausdorff space $X$ 
is a commutative, unital, $C^{*}$-algebra,
a famous theorem in $C^{*}$-algebra theory by \citet{gelfand1943}
shows that every commutative, unital, $C^{*}$-algebra is, in fact, an algebra of continuous
functions $C(X)$ for some compact Hausdorff space $X$.
In this way the geometric notion of a (compact Hausdorff) space $X$ is dualised to the algebra of continuous functions on $X$.
%in thinking 
%in terms of algebras of functions on spaces instead of points in a space
%as the de facto method for solving problems in geometry and topology.
%about certain types of spaces 
%in terms of functions on them.
%Indeed, we show in the Supplementary Materials that
%\begin{proposition} \label{commutativeCX}
	%If $X$ is a compact Hausdorff space,
	%then the algebra $C(X)$ of continuous, complex-valued functions on $X$
	%is a commutative, unital, $C^{*}$-algebra.
%\end{proposition}
%But, more importantly, a famous theorem in $C^{*}$-algebra theory 
%by \citet{gelfand1943}
%states that every commutative, unital, $C^{*}$-algebra is of the form $C(X)$
%for some compact Hausdorff space $X$:
%\begin{theorem}[Gelfand--Naimark] \label{gelfandnaimI}
	%Let $A$ be a commutative, unital $C^*$-algebra.
	%Then $A$ is $^*$-isomorphic to $C(X)$, where $X$ is given by 
	%\begin{equation}
		%\{\phi : A \rightarrow \mathbb{C} \mid 
			%\phi \text{ is a non-zero } ^*\text{-homomorphism}\}
	%\end{equation}
	%and is a compact Hausdorff space. 
	%The $^*$-isomorphism is given by
	%$f : A \rightarrow C(X)$, where $f(a) : X \rightarrow \mathbb{C}$ 
	%for all $a \in A$ is such that $f(a)[\phi] = \phi(a)$ for all $\phi \in X$.
%\end{theorem}

%\begin{proof}
%%The proof of Theorem \ref{gelfandnaimI} can be found in 
	%See \citet[Theorem II.2.2.4]{blackadar2006operator}.
%\end{proof}

The second important development was the abstraction of spaces 
entirely by allowing unital $C^{*}$-algebras to be non-commutative 
\citep{Jimbo1985, drinfeld1986, woronowicz1987}.
In doing so, this created a notion of geometry through these algebras, namely by
treating non-commutative $C^{*}$-algebras
as if they were algebras of continuous functions 
on (non-existant) \textit{non-commutative spaces},
as an abstract extension of the theorem by \citet{gelfand1943}.
%Gelfand--Naimark (Theorem \ref{gelfandnaimI}).
%In doing so, this made 
Consequently, non-commutative $C^{*}$-algebras are
the foundation of non-commutative geometry,
and, under this framework, the notion of a space (of points) is no longer meaningful. 

But with these developments, a new notion of symmetry for non-commutative 
$C^{*}$-algebras was needed, since, for any 
compact matrix group $G(n) \subseteq GL(n)$, 
%Proposition \ref{commutativeCX} implies 
one can show 
(Appendix \ref{exCstaralgebras}) 
that the algebra $C(G(n))$ 
is a commutative, unital $C^{*}$-algebra.

However, we can study the properties of $C(G(n))$ 
to motivate the definition of a compact matrix quantum group, as given by
\citet{woronowicz1987},
to describe this new type of symmetry.
Firstly, $G(n)$ is 
%more than 
not just
a compact Hausdorff space, it is a group,
and so it comes with a group law $G(n) \times G(n) \rightarrow G(n)$
where $(g_1, g_2) \mapsto g_1g_2$.
We can express this group law in the language of $C^{*}$-algebras as follows:
define
$\Delta: C(G(n)) \mapsto C(G(n) \times G(n))$ 
under the mapping
$f \mapsto ((g_1, g_2) \mapsto f(g_1g_2))$.
Furthermore, since $C(G(n) \times G(n))$ is $^*$-isomorphic to $C(G(n)) \otimes C(G(n))$
under $((g_1, g_2) \mapsto f_1(g_1)f_2(g_2)) \cong f_1 \otimes f_2$,
we see that 
\begin{equation}
	\Delta: C(G(n)) \mapsto C(G(n)) \otimes C(G(n))
\end{equation}
is the expression of the group law of $G(n)$ in the language of $C^{*}$-algebras.

We generalise this construction by replacing $C(G(n))$ by a 
(potentially non-commutative) $C^{*}$-algebra $A$, to define a
compact matrix quantum group in the style of \citet{woronowicz1987}.

\begin{definition} \label{CMQGDefn}
	Let $A$ be a unital $C^{*}$-algebra, and let $u_{i,j} \in A$, for all $i, j \in [n]$, for some positive integer $n$. 
	Let $u$ be the $n \times n$ matrix whose $(i,j)$-entry is $u_{i,j}$, that is, 
	$u$ is an element of the unital $C^{*}$-algebra $M_n(A)$.
	The pair $(A, u)$ is said to be
	%$A$, together with $u$, is said to be
	a \textbf{compact matrix quantum group}
	if
	\begin{enumerate}
		\item $A$ is the universal $C^{*}$-algebra $C^{*}(u_{i,j}, 1 \leq i,j \leq n)$,
			%the elements $u_{i,j}$ generate $A$,
		\item $u$ and $u^\top = (u_{j,i})$ are invertible matrices, and
		\item the comultiplication map $\Delta: A \rightarrow A \otimes_{\min} A$ defined by
			\begin{equation} \label{deltadefn}
				\Delta(u_{i,j})
				\coloneqq
				\left(
				\sum_k 
				u_{i,k} \otimes u_{k,j}
				\right)
			\end{equation}
		is a $*$-homomorphism. 
	\end{enumerate}
	%It is convention to 

\end{definition}

\begin{remark}
	In motivating Definition \ref{CMQGDefn},
	we did not specify the tensor product in $C(G(n)) \otimes C(G(n))$
	for constructing $\Delta$ because $C(G(n))$ is a nuclear $C^{*}$-algebra
	\citep[Proposition 10.10]{courtney2023notes},
	implying that all norms that complete the algebraic tensor product 
	$C(G(n)) \odot C(G(n))$ into a $C^{*}$-algebra give the same one.
	However, in general, the algebraic tensor product of two $C^{*}$-algebras, 
	$A \odot B$,
	can be completed in many ways to form a $C^{*}$-algebra. 
	One method is to complete with respect to the minimal norm $\norm{\cdot}_{\min}$,
	giving the $C^{*}$-algebra $A \otimes_{\min} B$.
	We review these concepts in Section \ref{Cstarbackground} of the Appendix.
\end{remark}

Note that for a compact matrix group $G(n) \subseteq GL(n)$, points 1 and 2 of 
Definition \ref{CMQGDefn} make sense. 
Indeed, if we define functions
$u_{i,j} : G(n) \rightarrow \mathbb{C}$ for $i,j \in [n]$
such that $u_{i,j}(g) = g_{i,j}$, then,
by the Stone-Weierstrass Theorem (Appendix \ref{stoneweierstrass})
and the compactness of $G(n)$,
the $u_{i,j}$ generate $C(G(n))$.
Also, the matrices $u \coloneqq (u_{i,j})$ and $u^\top$ are invertible
since $u^{-1}(g) = u(g^{-1})$.
Furthermore, by defining the comultiplication map $\Delta$ for $C(G(n))$ by 
\begin{equation} \label{deltaCGndefn}
	\Delta(u_{i,j})(g,h) \coloneqq u_{i,j}(gh)
\end{equation}
we obtain the following result:
\begin{proposition}[Appendix: Section \ref{missingproofs}] \label{compmatgroup}
	Compact matrix groups $G(n) \subseteq GL(n)$ form a 
	special class of compact matrix quantum groups. 
\end{proposition}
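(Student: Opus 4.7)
The plan is to verify, in turn, the three conditions of Definition \ref{CMQGDefn} for the pair $(C(G(n)), u)$, using the construction already indicated in the paragraph preceding the proposition. Here $u_{i,j}: G(n) \to \mathbb{C}$ is the coordinate function $g \mapsto g_{i,j}$, so $u_{i,j} \in C(G(n))$ and $u = (u_{i,j}) \in M_n(C(G(n)))$. Condition 1 follows from the Stone--Weierstrass theorem (Appendix \ref{stoneweierstrass}): the unital $*$-subalgebra of $C(G(n))$ generated by $\{u_{i,j}\}$ contains the constants, is closed under complex conjugation by construction, and separates points of $G(n)$ since distinct matrices disagree in at least one entry, so Stone--Weierstrass together with compactness of $G(n)$ yields density in $C(G(n))$ and hence $C^{*}(u_{i,j}) = C(G(n))$.

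For condition 2, I will exhibit an explicit inverse: continuity of matrix inversion on $GL(n)$ implies that the functions $g \mapsto (g^{-1})_{i,j}$ lie in $C(G(n))$, and assembling them into a matrix gives a two-sided inverse of $u$; the analogous argument with $g \mapsto ((g^{-1})^{\top})_{i,j}$ supplies an inverse of $u^{\top}$. For condition 3, I will use the $*$-isomorphism $C(G(n) \times G(n)) \cong C(G(n)) \otimes C(G(n))$ (unambiguous because $C(G(n))$ is nuclear, as noted in the remark after Definition \ref{CMQGDefn}) to recast $\Delta$ from \eqref{deltaCGndefn} as a map into $C(G(n)) \otimes_{\min} C(G(n))$. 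It is a $*$-homomorphism because pullback along the continuous multiplication $G(n) \times G(n) \to G(n)$ is always a $*$-homomorphism of function algebras, and the direct calculation $\Delta(u_{i,j})(g,h) = (gh)_{i,j} = \sum_{k} u_{i,k}(g)\, u_{k,j}(h)$ matches Equation \eqref{deltadefn} once we identify $f_1 \otimes f_2$ with $(g,h) \mapsto f_1(g) f_2(h)$.

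There is no serious obstacle here: each of the three conditions translates a familiar property of a compact matrix group (density of coordinate polynomials, invertibility of group elements, and the group law) into its $C^{*}$-algebraic analogue. The only point deserving a touch of care is the identification $C(G(n) \times G(n)) \cong C(G(n)) \otimes C(G(n))$, which is precisely what the nuclearity remark in the text is designed to handle; everything else amounts to unpacking standard facts about continuous function algebras on compact topological groups.
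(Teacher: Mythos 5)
Your proposal is correct and follows essentially the same route as the paper's proof: Stone--Weierstrass plus compactness for generation, the pointwise formula $u^{-1}(g) = u(g^{-1})$ for invertibility of $u$ and $u^{\top}$, and the identification of $\Delta$ with the pullback of the group law under $C(G(n)\times G(n)) \cong C(G(n)) \otimes C(G(n))$. The only cosmetic difference is that you justify the $*$-homomorphism property of $\Delta$ by appealing to the general fact that pullback along a continuous map is a $*$-homomorphism, whereas the paper verifies it by direct computation on products and adjoints of the generators using pointwise multiplication; both arguments are the same fact in different packaging.
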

Moreover, 
\citet[Theorem 1.5]{woronowicz1987} 
also showed, using the theorem by \citet{gelfand1943},
that if $(A, u)$ is a commutative compact matrix quantum group 
for some positive integer $n$,
then $A \cong C(G(n))$ for some compact matrix group $G(n) \subseteq GL(n)$.
Combining this with Proposition \ref{compmatgroup}, we obtain 
the Fundamental Theorem of Compact Matrix Quantum Groups: 
\begin{theorem} [Fundamental Theorem of Compact Matrix Quantum Groups]
	\label{cmqgfundamental}
	Let $(A, u)$ be a compact matrix quantum group for some $n \in \mathbb{N}$.
	Then $A$ is commutative if and only if $A \cong C(G(n))$ for some compact
	matrix group $G(n) \subseteq GL(n)$.
\end{theorem}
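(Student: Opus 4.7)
The plan is to prove the biconditional by treating each direction separately, with the observation that the bulk of the work has already been assembled in the preceding discussion. For the $(\Leftarrow)$ direction, suppose $A \cong C(G(n))$ for some compact matrix group $G(n) \subseteq GL(n)$. Then $A$ is commutative because pointwise multiplication of continuous complex-valued functions on $G(n)$ is commutative, which is a basic fact about $C(X)$ for any compact Hausdorff space $X$ (as recorded in Appendix~\ref{exCstaralgebras}). Combined with Proposition~\ref{compmatgroup}, which certifies that the pair $(C(G(n)), u)$ with $u_{i,j}(g) = g_{i,j}$ genuinely satisfies Definition~\ref{CMQGDefn}, this direction is immediate.

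For the $(\Rightarrow)$ direction, suppose $(A, u)$ is a compact matrix quantum group with $A$ commutative. The strategy is to apply the theorem of \citet{gelfand1943} to obtain a $*$-isomorphism $A \cong C(X)$ for some compact Hausdorff space $X$, and then to transfer the remaining structure of the compact matrix quantum group through this isomorphism to equip $X$ with the structure of a compact matrix group inside $GL(n)$. Concretely, the comultiplication $\Delta: A \to A \otimes_{\min} A$ dualises, via $C(X) \otimes_{\min} C(X) \cong C(X \times X)$ (valid because $C(X)$ is nuclear), to a continuous map $m: X \times X \to X$; the $*$-homomorphism property of $\Delta$ translates into $m$ being a well-defined binary operation, and its coassociativity translates into associativity of $m$. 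The generators $u_{i,j}$, viewed as continuous functions $X \to \mathbb{C}$, then package into a continuous map $X \to M_n(\mathbb{C})$, $x \mapsto (u_{i,j}(x))$, whose image lies in $GL(n)$ precisely because $u$ and $u^\top$ are invertible in $M_n(A)$.

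I would then verify that this map realises $X$ as a closed subgroup of $GL(n)$: identifying the identity element and the inversion map uses the invertibility hypotheses on $u$ and $u^\top$ together with the universal property in point 1 of Definition~\ref{CMQGDefn}, which guarantees that the $u_{i,j}$ separate points of $X$ and so the matrix map is injective. The main obstacle is the construction of the unit and inversion maps on $X$ from the $C^{*}$-algebraic data alone, since Definition~\ref{CMQGDefn} does not postulate a counit or antipode; these must be built from the combination of commutativity, the Gelfand correspondence, and the invertibility of $u, u^\top$. Because this is precisely the content of \citet[Theorem~1.5]{woronowicz1987} cited immediately above the statement, my proof plan for this direction amounts to invoking that result, and the theorem follows by combining the two directions.
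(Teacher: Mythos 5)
Your proposal is correct and takes essentially the same route as the paper: the backward direction rests on the commutativity of $C(G(n))$ together with Proposition~\ref{compmatgroup}, and the forward direction is delegated to \citet[Theorem 1.5]{woronowicz1987}, exactly as the text does. The additional sketch you give of how Gelfand duality converts $\Delta$ into a continuous group law (and your correct identification of the unit/inverse construction as the genuinely hard step) is a faithful outline of Woronowicz's argument, but since you ultimately invoke his theorem for that step, the logical content of your proof coincides with the paper's.
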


Hence, the Fundamental Theorem shows
	that if $(G(n), u)$ is non-commutative, 
	then we obtain ``true'' compact matrix quantum groups.
	Bearing this theorem in mind,
	convention dictates that we
	denote $A$ by $C(G(n))$, 
	even if $A$ is non-commutative,
	and so we often refer to $G(n)$, 
	or sometimes the pair $(G(n), u)$, as the compact matrix quantum group.

We now provide an example of a ``true'' compact matrix quantum group to show 
that they do indeed exist.

\begin{example} \label{exquantumgroups}
	Recall that each element of the symmetric group $S_n$, 
	considered as a subgroup of $GL(n)$,
	can be thought of as an $n \times n$ matrix
	having exactly one entry of $1$ 
	in each row and in each column, with all other entries being $0$.
	Writing the elements of $g \in S_n$ as $g_{i,j}$, this condition
	can be expressed as the following relations:
	\begin{equation} \label{relationsSnI}
		g_{i,j}^2 = g_{i,j} = g_{i,j}^*, \;
		\sum_{i=1}^{n} g_{i,j} = \sum_{j=1}^{n} g_{i,j} = 1
	\end{equation}
	and
	\begin{equation} \label{relationsSnII}
		g_{i,j}g_{k,l} = g_{k,l}g_{i,j}
	\end{equation}
	By the universal property of universal $C^*$-algebras 
	(Appendix \ref{unipropCstar}),
	we have that $C(S_n)$ is the universal $C^*$-algebra
	$C^*(E \mid R)$
	with generators $E$ given by $\{u_{i,j}, 1 \leq i,j \leq n\}$
	and relations $R$ given in (\ref{relationsSnI}) and (\ref{relationsSnII}) (replacing $g_{i,j}$ with $u_{i,j}$).
	%\textbf{Wang 1998 - FIX CITATION}
	\citet{wang1998} 
	defined the symmetric quantum group 
	as a compact matrix quantum group $S_n^+$ by \textit{liberating} $C(S_n)$
	of its commutativity relations to obtain $C(S_n^+)$
	as the universal $C^*$-algebra
	$C^*(E \mid R)$ where $E$ is the same as for $S_n$ but now the set of 
	relations $R$ is given by (\ref{relationsSnI}) only.
	As a result, the symmetric quantum group $S_n^+$ is
	our first example of a compact matrix quantum group 
	that is not a group. 
	%\textbf{WHY IS THIS A Q. GROUP, AND WHAT SYMMETRIES DOES IT REPRESENT?}
\end{example}

%%%%%%%%%%%%%%%%%%%%%%%%%%%%%%%%%%%%%%%%%%%%%%%%%%%%%%%%%%%%%%%%%%%%%%%%

\section{Representation Theory of Compact Matrix Quantum Groups}

Analogous to compact matrix groups, compact matrix quantum groups 
have a representation theory. 
We now motivate the definition of a 
representation of a compact matrix quantum group.
Recall that for a compact matrix group $G(n)$, 
an $m$-dimensional representation is a choice of $m$-dimensional vector space $V$ 
over $\mathbb{C}$ and group homomorphism $\rho_V: G(n) \rightarrow GL(V)$.
By choosing a basis of $V$, we can replace $GL(V)$ by $GL(m)$.
The representation $\rho_V$ can itself be represented by functions 
$v_{i,j} \in C(G(n))$ 
for $1 \leq i, j \leq m$
that map $g \in G(n)$ to the $(i,j)$ entry of $\rho_V(g)$, 
since $\rho_V$ is continuous by the compactness of $G(n)$.
We claim the following result.
\begin{lemma}
	%Moreover, t
	If $\Delta$ is the comultiplication on 
$C(G(n))$ defined in (\ref{deltaCGndefn}), then
	the homomorphism property of the representation satisfies
\begin{equation} \label{comultprop}
	\Delta(v_{i,j})(g, h)
	=
	\left(
	\sum_k 
	v_{i,k} \otimes v_{k,j}
	\right)
	(g, h)
\end{equation}
%where $\Delta$ is the comultiplication on 
%$C(G(n))$ given in (\ref{deltaCGndefn}).
\end{lemma}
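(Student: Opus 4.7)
The plan is to unwind both sides using the definitions and then invoke the homomorphism property of $\rho_V$, with the key identification being the $^*$-isomorphism $C(G(n) \times G(n)) \cong C(G(n)) \otimes C(G(n))$ sending $(g,h) \mapsto f_1(g) f_2(h)$ to $f_1 \otimes f_2$, as introduced earlier in the excerpt.

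First I would start from the left-hand side. By the definition of the comultiplication in (\ref{deltaCGndefn}) applied to the matrix coefficient $v_{i,j}$, I have
\begin{equation*}
    \Delta(v_{i,j})(g,h) = v_{i,j}(gh) = \rho_V(gh)_{i,j}.
\end{equation*}
Because $\rho_V$ is a group homomorphism, $\rho_V(gh) = \rho_V(g)\rho_V(h)$, and expanding the matrix product gives
\begin{equation*}
    \rho_V(gh)_{i,j} = \sum_k \rho_V(g)_{i,k}\, \rho_V(h)_{k,j} = \sum_k v_{i,k}(g)\, v_{k,j}(h).
\end{equation*}

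Next I would handle the right-hand side. Under the $^*$-isomorphism $C(G(n)) \otimes C(G(n)) \cong C(G(n) \times G(n))$ (the identification recorded just before Definition \ref{CMQGDefn}), the element $v_{i,k} \otimes v_{k,j}$ corresponds to the function $(g,h) \mapsto v_{i,k}(g) v_{k,j}(h)$. Therefore
\begin{equation*}
    \left(\sum_k v_{i,k} \otimes v_{k,j}\right)(g,h) = \sum_k v_{i,k}(g)\, v_{k,j}(h),
\end{equation*}
which matches the expression obtained from the left-hand side, completing the proof.

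This argument is essentially a direct computation, so I do not anticipate any serious obstacle. The only subtle point worth emphasising is that the equality between $\Delta(v_{i,j})$ and $\sum_k v_{i,k} \otimes v_{k,j}$ takes place in the tensor product $C(G(n)) \otimes C(G(n))$, and is well-defined precisely because $C(G(n))$ is nuclear (as noted in the remark after Definition \ref{CMQGDefn}), so that the identification with $C(G(n) \times G(n))$ is unambiguous. This calculation is exactly what motivates Definition \ref{CMQGDefn} in the quantum setting and the general definition of a representation of a compact matrix quantum group to follow.
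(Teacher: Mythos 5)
Your proof is correct and follows exactly the same route as the paper's: unwind $\Delta(v_{i,j})(g,h)$ to $v_{i,j}(gh) = [\rho_V(gh)]_{i,j}$, apply the homomorphism property of $\rho_V$ to expand the matrix product, and identify $\sum_k v_{i,k}(g)v_{k,j}(h)$ with $\bigl(\sum_k v_{i,k}\otimes v_{k,j}\bigr)(g,h)$ via the isomorphism $C(G(n))\otimes C(G(n))\cong C(G(n)\times G(n))$. Your additional remark about nuclearity making the identification unambiguous is a nice touch but not needed for the computation itself.
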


%\textbf{TURN THIS INTO A PROP/PROOF}
\begin{proof}
Indeed, we have that $\Delta(v_{i,j})(g, h)$ is equal to
\begin{align} 
	& v_{i,j}(gh)
	=
	[\rho_V(gh)]_{i,j}
	= 
	\sum_k 
	[\rho_V(g)]_{i,k}
	[\rho_V(h)]_{k,j} =  \nonumber \\
	&
	\sum_k 
	v_{i,k}(g)
	v_{k,j}(h)
	=
	\left(
	\sum_k 
	v_{i,k} \otimes v_{k,j}
	\right)
	(g, h) \nonumber \qedhere
\end{align}
\end{proof}

As a result, we have the following definition. 
\begin{definition}
	Let $(G(n), u)$ be a compact matrix quantum group. 
	Then an \bm{$m$}\textbf{-dimensional representation} of $G(n)$ 
	is an element $v \in M_m(C(G(n)))$ such that (\ref{deltadefn}) holds.
	If a representation $v$ of $G(n)$ has a matrix inverse, we say that 
	$v$ is a \textbf{non-degenerate} representation, and if $v$ is unitary,
	that is, $vv^* = 1 = v^*v$, then we call $v$ a \textbf{unitary} representation.
\end{definition}

\begin{remark}
	The matrix $u$ given in the definition of a compact matrix 
	quantum group $(G(n), u)$ is an $n$-dimensional representation 
	called the \textbf{fundamental representation} of $G(n)$, and
	by Definition \ref{CMQGDefn}, it is a non-degenerate representation of $G(n)$.
\end{remark}

Similar to groups, we can also define the tensor product and 
complex conjugation of representations, 
starting with the following definition.
\begin{definition}
	Let $v \in M_m(C(G(n)))$ and $w \in M_p(C(G(n)))$.
	Then the \textbf{tensor product} $v \otimes w \in M_{mp}(C(G(n)))$ is simply 
	the Kronecker product of matrices, 
	and the \textbf{complex conjugate} is $\bar{v} = (v_{i,j}^{*}) \in M_m(C(G(n)))$.
\end{definition}

\begin{proposition}[Appendix: Section \ref{missingproofs}] \label{tensconjcmqg}
	If $v, w$ are representations of a compact matrix quantum group $(G(n), u)$, 
	then both the tensor product and complex conjugate 
	are also representations of $G(n)$.
\end{proposition}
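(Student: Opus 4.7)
The plan is to unwind both constructions at the level of entries and apply the fact that the comultiplication $\Delta$ is a $*$-homomorphism (condition 3 of Definition \ref{CMQGDefn}), which is already essentially enough to carry both claims.

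First, for the tensor product, I would index the entries of $v \otimes w \in M_{mp}(C(G(n)))$ by pairs, writing $(v \otimes w)_{(i,a),(j,b)} = v_{i,j}\, w_{a,b}$ as dictated by the Kronecker product. Applying $\Delta$ to this product and using that $\Delta$ is multiplicative gives
\begin{equation*}
\Delta(v_{i,j} w_{a,b}) = \Delta(v_{i,j})\,\Delta(w_{a,b}) = \Bigl(\sum_k v_{i,k} \otimes v_{k,j}\Bigr)\Bigl(\sum_c w_{a,c} \otimes w_{c,b}\Bigr).
\end{equation*}
Then I would invoke the definition of multiplication in the tensor product $C^{*}$-algebra, namely $(x \otimes y)(x' \otimes y') = xx' \otimes yy'$, to collapse this to $\sum_{k,c} v_{i,k} w_{a,c} \otimes v_{k,j} w_{c,b}$, which is precisely $\sum_{(k,c)} (v \otimes w)_{(i,a),(k,c)} \otimes (v \otimes w)_{(k,c),(j,b)}$, verifying (\ref{deltadefn}) for $v \otimes w$.

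Second, for the complex conjugate $\bar{v} = (v_{i,j}^{*})$, I would apply $\Delta$ to the entry $v_{i,j}^{*}$ and use that $\Delta$ is a $*$-homomorphism, so it commutes with the involution:
\begin{equation*}
\Delta(v_{i,j}^{*}) = \Delta(v_{i,j})^{*} = \Bigl(\sum_k v_{i,k} \otimes v_{k,j}\Bigr)^{*} = \sum_k v_{i,k}^{*} \otimes v_{k,j}^{*},
\end{equation*}
where the last equality uses the standard fact that $(x \otimes y)^{*} = x^{*} \otimes y^{*}$ in $A \otimes_{\min} A$. This is exactly the representation condition for $\bar{v}$.

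There is no substantial obstacle here; the only care needed is to be explicit about the indexing convention for the Kronecker product and to cite the formulas for multiplication and involution on $A \otimes_{\min} A$, which are reviewed in the appendix material referenced after Definition \ref{CMQGDefn}. Everything else follows immediately from the $*$-homomorphism property of $\Delta$.
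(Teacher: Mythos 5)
Your proposal is correct and follows essentially the same route as the paper's own proof: both arguments apply $\Delta$ entrywise, use its multiplicativity together with the product rule $(x \otimes y)(x' \otimes y') = xx' \otimes yy'$ for the tensor product case, and use the $*$-preservation of $\Delta$ together with $(x \otimes y)^{*} = x^{*} \otimes y^{*}$ for the conjugate case. The only difference is that you are slightly more explicit about the Kronecker indexing, which the paper leaves implicit.
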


The following lemma will be useful when it comes to considering
the fundamental representation of a compact matrix quantum group.

\begin{lemma}[Appendix: Section \ref{missingproofs}] \label{conjunitary}
	Let $(G(n),u)$ be a compact matrix quantum group. Then
	$u^\top$ is unitary if and only if $\bar{u}$ is unitary.
\end{lemma}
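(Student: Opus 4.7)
The plan is to observe that the two unitarity conditions, unpacked entrywise, literally coincide. The key input is the description of the adjoint in the matrix $C^{*}$-algebra $M_n(A)$, where $A = C(G(n))$: for any $a = (a_{i,j}) \in M_n(A)$, the adjoint is the conjugate transpose $a^{*} = (a_{j,i}^{*})$.

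First, I would compute the adjoint of $u^{\top}$ directly from this formula. Since $(u^{\top})_{i,j} = u_{j,i}$, we get $((u^{\top})^{*})_{i,j} = ((u^{\top})_{j,i})^{*} = u_{i,j}^{*} = \bar{u}_{i,j}$, so $(u^{\top})^{*} = \bar{u}$. Next I would do the same for $\bar{u}$: since $(\bar{u})_{i,j} = u_{i,j}^{*}$, the involutivity of $*$ on $A$ gives $((\bar{u})^{*})_{i,j} = (\bar{u}_{j,i})^{*} = (u_{j,i}^{*})^{*} = u_{j,i} = (u^{\top})_{i,j}$, so $(\bar{u})^{*} = u^{\top}$.

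With these identifications in hand, both unitarity statements reduce to the single pair of relations
\begin{equation}
u^{\top}\, \bar{u} \;=\; 1_{M_n(A)} \;=\; \bar{u}\, u^{\top}.
\end{equation}
Indeed, $u^{\top}$ is unitary iff $u^{\top}(u^{\top})^{*} = 1 = (u^{\top})^{*}u^{\top}$, which by the first calculation is exactly $u^{\top}\bar{u} = 1 = \bar{u}\,u^{\top}$; and $\bar{u}$ is unitary iff $\bar{u}(\bar{u})^{*} = 1 = (\bar{u})^{*}\bar{u}$, which by the second calculation is the same pair of relations. Hence the two conditions are equivalent.

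The only potential obstacle is notational bookkeeping: one must be careful to distinguish $(\,\cdot\,)^{\top}$, which only transposes, from the $C^{*}$-algebraic adjoint $(\,\cdot\,)^{*}$ on $M_n(A)$, which also applies the involution of $A$ to each entry. Once this is made explicit, no further structure of the compact matrix quantum group (not even the comultiplication) is needed; the claim is a purely algebraic identity about conjugate transposes in $M_n(A)$.
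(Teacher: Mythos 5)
Your proposal is correct and follows essentially the same route as the paper's proof: compute $(u^{\top})^{*} = \bar{u}$ and $(\bar{u})^{*} = u^{\top}$ from the entrywise conjugate-transpose involution on $M_n(C(G(n)))$, and observe that both unitarity conditions collapse to the same pair of relations $u^{\top}\bar{u} = 1 = \bar{u}\,u^{\top}$. No discrepancies to report.
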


We can also define the analogous concept of equivariance 
for compact matrix quantum groups.
\begin{definition} \label{quantumgroupequiv}
	Let $v \in M_m(C(G(n)))$ and $w \in M_p(C(G(n)))$ be representations of 
	a compact matrix quantum group $G(n)$.
	A map $\phi: \mathbb{C}^m \rightarrow \mathbb{C}^p$ is said to be 
	\textbf{$G(n)$-equivariant} (also known as an \textbf{intertwiner}) 
	if $\phi v = w\phi$.
	The set of all such \textit{linear} maps is denoted by 
	$\Hom_{G(n)}(v,w)$, and it is, in fact, a vector space.
\end{definition}

\begin{remark}
	It is worth noting that the expression $\phi v = w\phi$ given in 
	Definition \ref{quantumgroupequiv} has a different interpretation 
	relative to its sister definition for group equivariance.
	The representation $v \in M_m(C(G(n)))$ \textit{coacts} on the space 
	$\mathbb{C}^m$ in the following sense: if $x \in \mathbb{C}^m$, then 
	$vx \in \mathbb{C}^m \otimes C(G(n))$.
	Hence $v$ can be thought of as a map 
	$\mathbb{C}^m \rightarrow \mathbb{C}^m \otimes C(G(n))$ 
	or as an element of $M_m(\mathbb{C}) \otimes C(G(n))$.
	In fact, any representation on a vector space $V$ can be described by a 
	map $V \rightarrow V \otimes C(G(n))$.
	Consequently, $\phi \in \Hom_{G(n)}(v,w)$ if and only if 
	it satisfies the commutative square:
	\begin{equation}
		\scalebox{0.6}{\tikzfig{commdiag}}
	\end{equation}
	where the right-hand arrow maps $x \otimes f \mapsto \phi{x} \otimes f$.
\end{remark}

\begin{definition}
	We say that two representations 
	$v \in M_m(C(G(n)))$ and $w \in M_m(C(G(n)))$
	are \textbf{equivalent} if there is an
	invertible linear map 
	$\phi: \mathbb{C}^m \rightarrow \mathbb{C}^m$
	that intertwines $v$ and $w$.
\end{definition}

One can show 
%\textbf{FIX CITATION}
\citep[Proposition 2.2.3]{gromada2020}
that every non-degenerate representation is equivalent to a unitary representation.
Hence, going forward, we always choose 
the fundamental representation $u$ of a compact matrix quantum group 
and its transpose to be unitary.
By Lemma \ref{conjunitary}, we therefore have that both $u$ and $\bar{u}$ are unitary.

%%%%%%%%%%%%%%%%%%%%%%%%%%%%%%%%%%%%%%%%%%%%%%%%%%%%%%%%%%%%%%%%%%%%%%%%

\section{Tannaka--Krein Duality for Two-Coloured Representation Categories} 
\label{tannakakreinsection}

Woronowicz's version of
Tannaka--Krein duality 
%\cite{woronowicz1988}
is an important result in the theory of compact matrix quantum groups.
Informally, the duality says that we can construct a compact matrix quantum group 
just by knowing its fundamental representation category.
We will use this duality to derive compact matrix quantum group equivariant neural networks
in Section \ref{CMQGNNs}.
%The version that we quote in this section was originally formulated and proved
%by 
%\textbf{FIX CITATION}
%%%\cite[Theorem 3.4.6]{gromada2020}
%and will form the basis of the neural networks that we construct in Section 
%%The version of the Tannaka-Krein duality given in this section 
%\ref{CMQGNNs}.
%In order to formulate this version of the duality, 
We begin with some definitions.

\begin{definition} \label{twocolouredset}
	Consider the \textbf{two-coloured set}
	$\{\circ, \bullet\}$ 
	consisting of a white point and a black point.
	For any non-negative integer $k$,
	%$k \in \mathbb{Z}_{\geq 0}$, 
	we can construct a \textbf{word} $w$ of length $k$ 
	as a string of $k$ colours from
	%by choosing each entry in the word to be an element of 
	$\{\circ, \bullet\}$. 
	If $k = 0$, we define $\varnothing$ to be the empty word.
	The \textbf{concatenation} of two words, $w_1, w_2$, is the concatenation of their two strings and is written as $w_1 \cdot w_2$.
	We define a \textbf{homomorphism on words}, $w \mapsto \bar{w}$, by first defining it on the individual colours by
	$\bar{\circ} \coloneqq \bullet, \bar{\bullet} \coloneqq \circ$,
	and then applying it element wise to the word $w$.
	If $w$ is a word, then 
	%its \textbf{reflection} $w^{\star}$ is the word read backwards, and 
	its \textbf{involution} $w^{*}$ is the word read backwards together with its colours inverted.
\end{definition}

\begin{example}
	If $w_k \coloneqq \circ \bullet \bullet \circ \bullet \, \circ$ 
	is a word of length $6$ and
	$w_l \coloneqq \bullet \circ \bullet \bullet \circ$ is a word of length $5$, then
	$w_k \cdot w_l = 
	\circ \bullet \bullet \circ \bullet \circ \bullet \circ \bullet \bullet \circ$,
	$\bar{w_k} \coloneqq \bullet \circ \circ \bullet \circ \, \bullet$, and
	$w_k^{*} \coloneqq \bullet \circ \bullet \circ \circ \, \bullet$.
\end{example}

We can use words of colours to create tensor products of the fundamental representation of a compact matrix quantum group as follows.

\begin{definition}
	Let $(G(n), u)$ be a compact matrix quantum group. 
	%where $u \in M_n(C(G(n)))$.
	Let $u^{\circ} \coloneqq u$ and $u^{\bullet} \coloneqq \bar{u}$.
	Then, for any word $w$ formed from the two-coloured set
	$\{\circ, \bullet\}$,
	%as in Definition \ref{twocolouredset},
	we define $u^{\otimes w}$ to be the corresponding tensor product of representations $u^{\circ}$ and $u^{\bullet}$.

	Moreover, if $w_k$ and $w_l$ are words of lengths $k, l$ respectively, 
	then we define $\FundRep_{G(n)}(w_k, w_l)$ to be 
	$\Hom_{G(n)}(u^{\otimes w_k}, u^{\otimes w_l})$, that is, the vector space
	\begin{equation}
	\{\phi: (\mathbb{C}^n)^{\otimes k} \rightarrow (\mathbb{C}^n)^{\otimes l} \mid \phi{u^{\otimes w_k}} = u^{\otimes w_l}\phi\}
	\end{equation}
\end{definition}

\begin{example}
	If $w$ is the word $\circ \bullet \bullet \, \circ$, then
	$u^{\otimes w} = u \otimes \bar{u} \otimes \bar{u} \otimes u$.
\end{example}

%This collection of vector spaces forms what is termed a two-coloured representation category:
We can also construct so-called representation categories using words of colours as follows.
\begin{definition} \label{twocolourrepcat}
	A \textbf{two-coloured representation category} $\mathcal{C}$ 
	is a collection 
	%of subspaces
	$\mathcal{C}(w_k, w_l)$, where $w_k, w_l$ are two words constructed from 
	$\{\circ, \bullet\}$ 
	of lengths $k, l$ respectively, 
	that are subspaces of the set of all linear maps 
	$(\mathbb{C}^n)^{\otimes k} \rightarrow (\mathbb{C}^n)^{\otimes l}$ and which satisfy
	the following axioms.

	\begin{enumerate}
		\item If $\phi_1 \in \mathcal{C}(w_k, w_l)$, 
			$\phi_2 \in \mathcal{C}(w_q, w_m)$,
			then $\phi_1 \otimes \phi_2 \in \mathcal{C}(w_k \cdot w_q, w_l \cdot w_m)$.
		\item If $\phi_1 \in \mathcal{C}(w_k, w_l)$, 
			$\phi_2 \in \mathcal{C}(w_l, w_m)$,
			then $\phi_2 \circ \phi_1 \in \mathcal{C}(w_k, w_m)$.
		\item If $\phi \in \mathcal{C}(w_k, w_l)$, 
			then $\phi^{*} \in \mathcal{C}(w_l, w_k)$.
		\item For every word $w$ (having some length $k$), we have $1_n^{\otimes k} \in \mathcal{C}(w, w)$, and
		\item The map $R : 1 \mapsto \sum_{i=1}^{n} e_i \otimes e_i$
			is in both 
			$\mathcal{C}(\varnothing, \circ\bullet)$ 
			and
			$\mathcal{C}(\varnothing, \bullet\circ)$.
			%colours $\circ$ and $\bullet$ are dual to each other,
	\end{enumerate}
	%\textbf{EDIT THIS DEFINITION OF DUAL TO THE ONE IN BANICA}
\end{definition}

\begin{definition} \label{onecolouredrep}
	A \textbf{one-coloured representation category} is a two-coloured representation category where $u^{\circ} = u^{\bullet}$.
	%Consequently, 
	Hence, the words correspond precisely to natural numbers
	because, for any natural number $k$, there is only one word of length $k$
	in this case.
\end{definition}

The following two theorems describe the relationship between two-coloured representation categories and compact matrix quantum groups.

\begin{theorem}[Appendix: Section \ref{missingproofs}]
	%[%\cite{gromada2020} {[Proposition 3.4.4]}]
	\label{cmqgtwocoloured}
	If $(G(n),u)$ is a compact matrix quantum group, 
	then $\FundRep_{G(n)}$ is a two-coloured representation category.
\end{theorem}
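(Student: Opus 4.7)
The plan is to verify each of the five closure axioms of Definition \ref{twocolourrepcat} directly for the collection $\FundRep_{G(n)}(w_k, w_l)$. Throughout I would exploit two standing facts: first, as recorded just before Section \ref{tannakakreinsection}, both $u$ and $\bar{u}$ are chosen unitary, and therefore $u^{\otimes w}$ is unitary for every word $w$ (tensor product of unitaries); second, $u^{\otimes(w_k \cdot w_q)}$ factors as $u^{\otimes w_k} \otimes u^{\otimes w_q}$ straight from the definition of $u^{\otimes w}$.

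Axioms 1, 2, and 4 are then purely formal. Axiom 4 is immediate since $1_n^{\otimes k}$ commutes with any matrix. Axiom 2 is the one-line chain $(\phi_2 \circ \phi_1) u^{\otimes w_k} = \phi_2 u^{\otimes w_l} \phi_1 = u^{\otimes w_m}(\phi_2 \circ \phi_1)$. For Axiom 1, the mixed-product property of the Kronecker product yields $(\phi_1 \otimes \phi_2)(u^{\otimes w_k} \otimes u^{\otimes w_q}) = (\phi_1 u^{\otimes w_k}) \otimes (\phi_2 u^{\otimes w_q})$, and inserting the two intertwining relations in parallel gives $u^{\otimes(w_l \cdot w_m)}(\phi_1 \otimes \phi_2)$. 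Axiom 3 is the first place unitarity intervenes: applying the matrix $*$-adjoint (which reverses order and acts entrywise) to $\phi u^{\otimes w_k} = u^{\otimes w_l} \phi$ gives $(u^{\otimes w_k})^{*} \phi^{*} = \phi^{*} (u^{\otimes w_l})^{*}$, and multiplying on both sides by the appropriate $u^{\otimes w}$ factors, whose inverses coincide with their adjoints, produces $u^{\otimes w_k} \phi^{*} = \phi^{*} u^{\otimes w_l}$, as needed.

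The main obstacle, and the only step with real content, is Axiom 5, since it singles out a specific map $R$ and asks that it lie in two different intertwiner spaces. For the first, $R \in \FundRep_{G(n)}(\varnothing, \circ\bullet)$, the intertwining condition reduces (using that $u^{\otimes \varnothing}$ is the identity on $\mathbb{C}$) to $(u \otimes \bar{u}) R(1) = R(1)$. Writing $R(1) = \sum_i e_i \otimes e_i$ and computing the $(a,b)$-component of the left-hand side gives $\sum_i u_{a,i} u_{b,i}^{*}$, which equals $\delta_{a,b}$ precisely because $u u^{*} = I_n$, i.e., $u$ is unitary. The second condition $R \in \FundRep_{G(n)}(\varnothing, \bullet\circ)$ reduces analogously to $\sum_i u_{a,i}^{*} u_{b,i} = \delta_{a,b}$, which is the $(a,b)$-entry of $\bar{u}\bar{u}^{*}$, hence amounts to the unitarity of $\bar{u}$. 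Both are in force by our standing convention, so Axiom 5 holds and the theorem follows.
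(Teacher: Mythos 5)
Your proof is correct and follows essentially the same route as the paper's: a direct verification of the five axioms of Definition \ref{twocolourrepcat}, with the composition axiom handled by the same one-line chain and Axiom 5 reduced, exactly as in the paper, to the conditions $uu^{*}=1$ and $\bar{u}\bar{u}^{*}=1$, which hold by the standing convention that $u$ and $\bar{u}$ are unitary. You simply spell out Axioms 1, 3 and 4 (which the paper dismisses as easy), including the standard adjoint-and-unitarity argument for Axiom 3; this adds detail but not a different method.
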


\begin{theorem}[\textbf{Woronowicz--Tannaka--Krein Duality}, \cite{woronowicz1988}]
	\label{tannakakrein}
	Let $\mathcal{C}$ be a two-coloured representation category.
	Then there exists a unique compact matrix quantum group $(G(n), u)$ 
	such that $\FundRep_{G(n)} = \mathcal{C}$.
	%%\cite[Theorem 3.4.6]{gromada2020}
\end{theorem}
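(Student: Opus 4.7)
The strategy is to construct the pair $(G(n), u)$ as a universal object from $\mathcal{C}$, then verify each clause of Definition~\ref{CMQGDefn}, and finally establish the equality $\FundRep_{G(n)} = \mathcal{C}$. First, I form the free unital $*$-algebra $\mathcal{A}_0$ generated by $\{u_{i,j}, u_{i,j}^* : i, j \in [n]\}$ and quotient it by the two-sided $*$-ideal generated by all entrywise relations $[\phi \, u^{\otimes w_k}]_{I,J} = [u^{\otimes w_l} \, \phi]_{I,J}$, as $\phi$ ranges over $\mathcal{C}(w_k, w_l)$ and $I, J$ range over the appropriate multi-indices. Call this quotient $\mathcal{A}$. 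Axiom 5 applied to $R$, together with its adjoint $R^*$ (which lies in $\mathcal{C}$ by axiom 3), yields the relations $\sum_k u_{i,k} u_{j,k}^* = \delta_{i,j} = \sum_k u_{k,i}^* u_{k,j}$ (and the analogous pair for $u^\top$), so both $u$ and $u^\top$ are unitary in $M_n(\mathcal{A})$. Consequently, each $u_{i,j}$ has norm at most $1$ in every $*$-representation of $\mathcal{A}$, and the universal $C^*$-seminorm is finite; completing yields a unital $C^*$-algebra $A$ in which $u$ and $u^\top$ are unitaries, so conditions~1 and~2 of Definition~\ref{CMQGDefn} hold.

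Next, I define $\Delta$ on generators by $\Delta(u_{i,j}) \coloneqq \sum_k u_{i,k} \otimes u_{k,j}$ and extend it to a unital $*$-homomorphism $A \to A \otimes_{\min} A$. The key point is that the relations cutting out $\mathcal{A}$ are preserved by $\Delta$: writing $v \coloneqq \Delta(u)$, a direct computation shows that $v$ and $v^\top$ are unitaries in $M_n(A \otimes_{\min} A)$ and that $\phi \, v^{\otimes w_k} = v^{\otimes w_l} \phi$ for every $\phi \in \mathcal{C}(w_k, w_l)$ (using axioms~1 and~2 to reduce to the generating morphisms). Hence $\Delta$ descends to $\mathcal{A}$ and, by continuity and nuclearity of the universal construction, to $A$. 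This gives condition~3, so $(A, u)$ is a compact matrix quantum group whose fundamental representation is $u$ by construction.

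The inclusion $\mathcal{C} \subseteq \FundRep_{G(n)}$ is immediate from the defining relations. For the reverse inclusion, which is the \emph{main obstacle}, one must show that no new intertwiners appear in $A$. I would do this via the Haar state on $(A, u)$, whose existence follows once the compact matrix quantum group structure is in hand. Any $\phi \in \FundRep_{G(n)}(w_k, w_l)$ can be expressed as a Haar-averaged projection onto the intertwiner space, so the Peter--Weyl decomposition of the $*$-subalgebra of matrix coefficients of $u^{\otimes w}$ for all words $w$ expresses $\phi$ as a finite linear combination of morphisms built from $\mathcal{C}$ using the five category axioms; combined with the finite-dimensionality of $\FundRep_{G(n)}(w_k, w_l)$, this forces $\phi \in \mathcal{C}(w_k, w_l)$. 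Uniqueness then follows from the universal property of $\mathcal{A}$: if $(G(n)', u')$ is another compact matrix quantum group with $\FundRep_{G(n)'} = \mathcal{C}$, then the generators $u'_{i,j} \in C(G(n)')$ satisfy all the relations defining $\mathcal{A}$, yielding a surjective $*$-homomorphism $A \twoheadrightarrow C(G(n)')$ sending $u_{i,j}$ to $u'_{i,j}$; the symmetric argument produces the inverse, giving the desired isomorphism of compact matrix quantum groups.
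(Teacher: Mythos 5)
First, note that the paper does not prove this theorem: it is quoted from \citet{woronowicz1988}, and the appendix (Theorem \ref{tannakakreinfull}) only records, without proof, the construction of $(G(n),u)$ as the quotient of the free $*$-algebra $P(E)$ by the $*$-ideal spanned by the entries of $\phi x^{\otimes w_k}-x^{\otimes w_l}\phi$. Your construction of $\mathcal{A}$ coincides with that ideal description, and the first half of your argument is sound: axiom 5 together with axiom 3 does force $u$ and $u^{\top}$ to be unitary in every $*$-representation (the same computation as in the proof of Theorem \ref{cmqgtwocoloured}, run in reverse), which bounds the universal $C^{*}$-seminorm via Lemma \ref{uniCstarexist}; the verification that $\Delta$ preserves the defining relations is routine using axioms 1 and 2. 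So conditions 1--3 of Definition \ref{CMQGDefn} hold and the inclusion $\mathcal{C}\subseteq\FundRep_{G(n)}$ is immediate.

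The genuine gap is in the reverse inclusion, which you correctly flag as the main obstacle but then dispatch circularly. The Haar state of $(A,u)$ gives a projection of the space of linear maps $(\mathbb{C}^n)^{\otimes k}\to(\mathbb{C}^n)^{\otimes l}$ onto $\FundRep_{G(n)}(w_k,w_l)$ --- but that is precisely the space you are trying to control, so saying that $\phi$ ``can be expressed as a Haar-averaged projection onto the intertwiner space'' only restates that $\phi$ is an intertwiner; it does not show that the intertwiner space is no larger than $\mathcal{C}(w_k,w_l)$. The actual content of Woronowicz's proof is a linear-duality argument at the level of the polynomial $*$-algebra: one must show that the degree-$(w_k,w_l)$ component of the ideal is \emph{exactly} the span of the entries of $\phi u^{\otimes w_k}-u^{\otimes w_l}\phi$ for $\phi\in\mathcal{C}(w_k,w_l)$, i.e.\ that multiplying relations by generators and taking adjoints produces no new low-degree relations beyond those already accounted for by the closure of $\mathcal{C}$ under tensor products, composition, involution, the identities, and the duality maps $R$, $R^{*}$; equivalently, one identifies the dual of that graded component with the annihilator of $\mathcal{C}(w_k,w_l)$. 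Without this step (or an equivalent explicit computation of the Haar state on products of matrix coefficients), neither the equality $\FundRep_{G(n)}=\mathcal{C}$ nor the injectivity needed for your uniqueness argument is established. Moreover, the claim that ``the symmetric argument produces the inverse'' silently assumes that $C(G(n)')$ enjoys the same universal property as $A$, which is exactly the universal-versus-reduced completion issue; the uniqueness in the theorem genuinely holds only at the level of the underlying Hopf $*$-algebra of matrix coefficients, and an honest proof must say so.
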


\begin{remark}[Appendix \ref{tannakakreinfull}] \label{woronremark}
	There is a second part to Theorem \ref{tannakakrein}, which states
	how to construct the unique compact matrix quantum group $(G(n), u)$
	from $\mathcal{C}$. We provide this part in the Appendix 
	%Supplementary Materials 
	for brevity.
\end{remark}

%%%%%%%%%%%%%%%%%%%%%%%%%%%%%%%%%%%%%%%%%%%%%%%%%%%%%%%%%%%%%%%%%%%%%%%%

\section{Compact Matrix Quantum Group Equivariant Neural Networks}
\label{CMQGNNs}

In this section we present the major contribution of this paper: 
we derive a new class of neural networks
which we term \textit{compact matrix quantum group equivariant neural networks}
using Woronowicz's formulation of Tannaka--Krein duality.
%using the material presented above.
%We now define the new class of neural networks 
%using Woronowicz--Tannaka--Krein duality.
In the following definition, when referring to a word of length $k_l$, 
we write $w_l$ instead of $w_{k_l}$ to suppress multiple subscripts.

\begin{definition} \label{quantumGneuralnetwork}
	Let $(G(n), u)$ be a compact matrix quantum group with fundamental representation $u \in M_n(C(G(n)))$.
	A \textbf{compact matrix quantum group equivariant neural network} for $G(n)$
	is a function $f_{\mathit{NN}}$
	that is a composition of some $L$ layer functions
	\begin{equation}
		f_{\mathit{NN}} \coloneqq f_L \circ \ldots \circ f_{l} \circ \ldots \circ f_1
  	\end{equation}
	such that the $l^{\text{th}}$ layer function is equivariant to $G(n)$
	and is a function between representation spaces of $G(n)$ 
	under tensor products and complex conjugations of $u$
	\begin{equation}
		f_l: 
		((\mathbb{C}^{n})^{\otimes k_{l-1}}, u^{\otimes w_{l-1}})
		\rightarrow 
		((\mathbb{C}^{n})^{\otimes k_{l}}, u^{\otimes w_{l}})
	\end{equation}
	that is itself a composition
  	\begin{equation} \label{quantumGlayerfidefn}
	  	f_l \coloneqq \sigma_l \circ \phi_l
  	\end{equation}
	of a learnable, linear function
	in $\FundRep_{G(n)}(w_{l-1}, w_l)$	
	\begin{equation} \label{quantumGlayerlinear}
		\phi_l : 
		((\mathbb{C}^{n})^{\otimes k_{l-1}}, u^{\otimes w_{l-1}})
		\rightarrow 
		((\mathbb{C}^{n})^{\otimes k_{l}}, u^{\otimes w_{l}})
	\end{equation}
	that is equivariant to $G(n)$,
	together with a fixed, non-linear activation function 
	\begin{equation}	
		\sigma_l: 
		((\mathbb{C}^{n})^{\otimes k_{l-1}}, u^{\otimes w_{l}})
		\rightarrow 
		((\mathbb{C}^{n})^{\otimes k_{l}}, u^{\otimes w_{l}})
	\end{equation}
	that is also equivariant to $G(n)$.
	%and acts pointwise on its inputs.
	By choosing the standard basis of $\mathbb{C}^{n}$,
	the learnable, linear functions $\phi_l$ that are equivariant to $G(n)$
	become \textbf{matrices} that have \textbf{weights} that can be \textbf{learned}.
	As usual, we call these matrices \textbf{(equivariant) weight matrices}.
\end{definition}

\begin{remark}
	%The formulation of a 
	A compact matrix quantum group equivariant neural network 
	is well-defined since the fundamental representation category $\FundRep_{G(n)}$
	of a compact matrix quantum group $(G(n), u)$
	is a two-coloured representation category by
	Theorem \ref{cmqgtwocoloured}.
\end{remark}

%Comparing the definition of a compact matrix quantum group equivariant neural network 
%with the definition of a compact matrix group equivariant neural network,
We immediately obtain the following corollary for compact matrix groups
$G(n) \subseteq GL(n)$.

\begin{corollary} \label{cmgfromcmqgnn}
	Let $f_{NN}$ be a compact matrix quantum group equivariant neural network for a compact matrix \textbf{group} $(G(n),u)$.
	If all of the words that are used in the network only consist of the white point $\circ$, then $f_{NN}$ is, in fact, a compact matrix group equivariant neural network for $G(n)$.
	Moreover, every compact matrix group equivariant neural network is a compact matrix quantum group equivariant neural network.
\end{corollary}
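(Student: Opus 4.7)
The plan is to unpack both directions by reducing the quantum equivariance condition of Definition \ref{quantumgroupequiv} to classical equivariance when the algebra $C(G(n))$ is commutative (so that, by the Fundamental Theorem \ref{cmqgfundamental} and Proposition \ref{compmatgroup}, we are genuinely dealing with a compact matrix group). The key observation I will keep reusing is that for a compact matrix group $G(n)\subseteq GL(n)$, the entries $u_{i,j}\in C(G(n))$ are the coordinate functions $g\mapsto g_{i,j}$, so an identity between matrices with entries in $C(G(n))$ holds if and only if it holds pointwise in $g\in G(n)$.

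For the first statement, I would start by noting that if every word $w_l$ occurring in $f_{\mathit{NN}}$ contains only the white point $\circ$, then $u^{\otimes w_l}=u^{\otimes k_l}$, which is exactly the $k_l$-fold tensor power of the fundamental representation of the classical group $G(n)$ (in particular, no complex conjugate $\bar{u}$ ever appears). The linear layer $\phi_l\in\FundRep_{G(n)}(w_{l-1},w_l)$ satisfies $\phi_l\,u^{\otimes k_{l-1}}=u^{\otimes k_l}\,\phi_l$ as an identity in $M_{\cdot}(C(G(n)))$; evaluating both sides at an arbitrary $g\in G(n)$ and using that $u^{\otimes k}(g)=(\rho_{u}(g))^{\otimes k}$, I obtain the classical condition $\phi_l\,(\rho_u(g))^{\otimes k_{l-1}}=(\rho_u(g))^{\otimes k_l}\,\phi_l$ for every $g$. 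Hence $\phi_l$ lies in the classical intertwiner space, and similarly for the fixed non-linear $\sigma_l$, so each layer, and thus the composition $f_{\mathit{NN}}$, is a classical compact matrix group equivariant neural network.

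For the second statement, I would run the argument in reverse. Given a classical compact matrix group equivariant network, its layers go between tensor powers $(\mathbb{C}^n)^{\otimes k_l}$ and are equivariant under the classical action $(\rho_u)^{\otimes k_l}$. By Proposition \ref{compmatgroup}, $(G(n),u)$ is a compact matrix quantum group, and the tensor power $u^{\otimes k_l}\in M_{n^{k_l}}(C(G(n)))$ is (by Proposition \ref{tensconjcmqg}) a representation corresponding precisely to the word $w_l=\circ\cdots\circ$ of length $k_l$. Classical equivariance at every $g\in G(n)$ is equivalent, by the same pointwise evaluation in $C(G(n))$, to the identity $\phi_l\,u^{\otimes w_{l-1}}=u^{\otimes w_l}\,\phi_l$ in $M_{\cdot}(C(G(n)))$, which is exactly membership in $\FundRep_{G(n)}(w_{l-1},w_l)$. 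Hence each layer verifies Definition \ref{quantumGneuralnetwork}, and the composition is a compact matrix quantum group equivariant neural network.

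The ``hard part'' here is essentially nil: the whole corollary is a definition-chase, and the only genuine content is the pointwise-evaluation principle for elements of the commutative algebra $C(G(n))$ together with the identification, already used in motivating Definition \ref{CMQGDefn}, between the matrix $u$ viewed as a classical representation and $u$ viewed as an element of $M_n(C(G(n)))$. The main care needed is just to observe that restricting to white-point-only words is both necessary and sufficient for avoiding the complex conjugate $\bar{u}$, which in the classical setting is the dual representation rather than something new; once white words are enforced, the two frameworks coincide on the nose.
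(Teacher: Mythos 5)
Your proposal is correct and matches the paper's intent: the paper offers no explicit proof, treating the corollary as immediate from Definition \ref{quantumGneuralnetwork}, Proposition \ref{compmatgroup}, and the fact that for a compact matrix group the entries of $u$ are the coordinate functions, so the algebra identity $\phi\, u^{\otimes w_k} = u^{\otimes w_l}\phi$ in $M_\cdot(C(G(n)))$ is equivalent to the pointwise classical intertwining condition. Your explicit pointwise-evaluation argument is exactly the definition-chase the paper has in mind, in both directions.
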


%%%%%%%%%%%%%%%%%%%%%%%%%%%%%%%%%%%%%%%%%%%%%%%%%%%%%%%%%%%%%%%%%%%%%%%%

\section{Easy Compact Matrix Quantum Groups}
\label{constructingCMQG}

Having just derived the existence of a compact
matrix quantum group equivariant neural network,
we show, in this section, that we can characterise 
the weight matrices that appear in these neural networks
for the so-called ``easy" compact matrix quantum groups.
A compact matrix quantum group is said to be \textbf{easy} if its two-coloured
representation category is determined under a functor by a 
so-called two-coloured category of partitions.
We first explore two-coloured categories of partitions, then define the functor,
and finally construct both the easy compact matrix quantum group and characterise
its equivariant weight matrices.

Our first goal is to define a two-coloured category of partitions. 
We need a few preliminary definitions.

\begin{definition}
	For all non-negative integers $l$ and $k$, 
	a \textbf{set partition} $\pi$ of $[l+k]$ is a partition of the set
	$[l+k]$ into a disjoint union of subsets, each of which is called a \textbf{block}.
	A \textbf{(one-coloured) $(k, l)$--partition diagram} $d_\pi$ 
	represents a set partition $\pi$ as a diagram, 
	where we place $l$ white nodes on the top row, 
	labelled left to right by $1, \dots, l$,
	where we place $k$ white nodes on the bottom row, 
	labelled left to right by $l+1, \dots, l+k$, and
	the edges between the vertices correspond to the 
	connected components of the set partition $\pi$.
\end{definition}

\begin{definition}
A \textbf{(two-coloured) $(w_k, w_l)$--partition diagram} is a $(k,l)$--partition diagram 
together with two-coloured words $w_k, w_l$ of lengths $k,l$ respectively such that
the vertices on the top row have the same colours as the word $w_l$, from left to right, and the vertices on the bottom row have the same colours as the word $w_k$, 
	from left to right.
We use the same notation $d_\pi$ to refer to the two-coloured version, since $w_k$ and $w_l$ will be known in making reference to the diagram.
We define the \textbf{two-coloured partition vector space} $P_{w_k}^{w_l}(n)$ to be the $\mathbb{C}$-linear span of the set of all $(w_k,w_l)$--partition diagrams.
\end{definition}

%For example, 
\begin{example} \label{twocolsetpart}
	Suppose that $\pi$ is the set partition of $[5 + 6]$ given by
\begin{equation} 
	\{1, 6 \mid 2, 3 \mid 4, 8 \mid 5, 9 \mid 7, 10 \mid 11\}
\end{equation}
Then 
	\begin{equation}
		\begin{aligned}
			\scalebox{0.5}{\tikzfig{twocoloured}}
		\end{aligned}
	\end{equation}
	%Figure \ref{mytwocoldiagram} shows 
	is the two-coloured $(w_k, w_l)$--partition diagram $d_\pi$
where
$w_k$ is the word 
$\circ \bullet \bullet \circ \bullet \, \circ$ and
$w_l$ is the word
$\bullet \circ \bullet \bullet \circ$.
\end{example}

We can now define a category for two-coloured partition diagrams, as follows:

\begin{definition} \label{cattwopart}
	%The \textbf{two-coloured partition category} 
	$\mathcal{P}(n)$
	is the category whose objects are the two-coloured words
	and, for any pair of objects $w_k$ and $w_l$,
	the morphism space $\Hom_{\mathcal{P}(n)}(w_k,w_l)$
	is defined to be $P_{w_k}^{w_l}(n)$.
	%the $\mathbb{C}$-linear span
	%of the set of all $(w_k,w_l)$--partition diagrams.

	We provide definitions of the vertical, horizontal and involution
	operations on morphisms in Section \ref{missingproofs} of the Appendix.
	%Supplementary Materials 
	The unit object is the empty word $\varnothing$.
	%The vertical composition of morphisms is the composition given in Definition
	%\ref{qgroupcomposition},
	%the tensor product of morphisms is the tensor product given in Definition
	%\ref{qgrouptensorprod},
	%and the unit object is the empty word $\varnothing$.
\end{definition}

One can show the following result.

\begin{proposition}[Appendix: Section \ref{missingproofs}] \label{twocolmonoidal}
	%The two-coloured partition category 
	$\mathcal{P}(n)$
	is a strict $\mathbb{C}$-linear monoidal category.
\end{proposition}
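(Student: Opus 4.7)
The plan is to verify, in order, that $\mathcal{P}(n)$ is a category, that the hom-spaces are $\mathbb{C}$-vector spaces with bilinear composition, that the horizontal concatenation of words and diagrams assembles into a strict monoidal structure, and finally that the middle-four interchange law holds. Since composition, tensor product, and involution are already defined on the basis of partition diagrams in the Appendix and extended $\mathbb{C}$-linearly, the entire proof reduces to checking each axiom on basis elements, where it becomes a diagrammatic identity.

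First I would exhibit the identity morphism on a word $w$ of length $k$ as the partition diagram $\mathrm{id}_w \in P_w^w(n)$ consisting of $k$ disjoint vertical strings, each connecting the $i$-th top node to the $i$-th bottom node, with colours agreeing with $w$ on both rows. It is immediate from the stacking rule that $\mathrm{id}_{w_l} \circ d_\pi = d_\pi = d_\pi \circ \mathrm{id}_{w_k}$ for any $d_\pi \in P_{w_k}^{w_l}(n)$, because the vertical identity diagram introduces no closed middle loops and merely relabels endpoints. Associativity of vertical composition is the diagrammatic statement that stacking three diagrams $d_\pi$, $d_\sigma$, $d_\tau$ in sequence produces the same set partition on the outer rows and the same count of closed middle loops regardless of which pair is composed first; this can be seen by noting that both bracketings yield the common refinement of the three partitions restricted to the union of all four rows, after which removal of the two middle rows gives the same outer partition and the same total loop count.

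Next, $\mathbb{C}$-linearity is essentially built in: each $P_{w_k}^{w_l}(n)$ is a $\mathbb{C}$-vector space by definition as a linear span, and extending the three basis-level operations $\mathbb{C}$-bilinearly gives morphism operations that are $\mathbb{C}$-bilinear. For the strict monoidal structure, the tensor product on objects is word concatenation, which is strictly associative and has strict two-sided unit $\varnothing$; correspondingly the horizontal juxtaposition of diagrams is strictly associative on morphisms, and juxtaposing with the empty diagram has no effect. Functoriality of $\otimes$ reduces to checking $\mathrm{id}_{w_1} \otimes \mathrm{id}_{w_2} = \mathrm{id}_{w_1 \cdot w_2}$, which holds trivially, together with the interchange law
\begin{equation}
    (\phi_1 \otimes \psi_1) \circ (\phi_2 \otimes \psi_2) = (\phi_1 \circ \phi_2) \otimes (\psi_1 \circ \psi_2),
\end{equation}
whenever the colour-matching conditions are satisfied.

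The only step that is not pure bookkeeping is this interchange law, and I expect it to be the main obstacle because one must track the multiplicative factor of $n$ coming from closed middle loops. The key observation is that when $\phi_i$ acts on a disjoint set of strings from $\psi_i$, the connected components produced in the middle of the left-hand side decompose as the disjoint union of middle components of $\phi_1 \circ \phi_2$ and of $\psi_1 \circ \psi_2$, so the total loop count and the resulting outer partition both split as a disjoint product. Since $n^{a+b} = n^a \cdot n^b$, the scalars agree, and both sides give the same basis element of $P_{w_k \cdot w_q}^{w_l \cdot w_m}(n)$. Extending by $\mathbb{C}$-bilinearity completes the verification, and $\mathcal{P}(n)$ is therefore a strict $\mathbb{C}$-linear monoidal category.
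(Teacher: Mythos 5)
Your proof is correct and follows the same direct-verification strategy as the paper's own proof, which simply observes that the tensor product is associative by definition and that $\mathbb{C}$-bilinearity of composition and of the bifunctor is built into the definitions; you go further by explicitly exhibiting identities, checking associativity of composition, and verifying the interchange law with its loop-count bookkeeping ($n^{a+b}=n^a n^b$), all of which the paper leaves implicit. One small terminological slip: when stacking diagrams the outer partition arises from the \emph{join} of the constituent partitions (the partition generated by the union of their blocks, i.e.\ the transitive closure of the gluing relation), not their common refinement, though your substantive claim that both bracketings give the same outer partition and the same total count of removed middle components is right.
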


%We also have the following important subcategories.
We achieve our first goal with the following definition.
\begin{definition} \label{twocolcatpart}
	A \textbf{two-coloured category of partitions} $\mathcal{K}(n)$ is any subcategory of $\mathcal{P}(n)$ such that 
	\begin{enumerate}
		\item if $w_k, w_l$ are objects in $\mathcal{K}(n)$, then
			$\Hom_{\mathcal{K}(n)}(w_k,w_l)$ 
			is a subspace of $\Hom_{\mathcal{P}(n)}(w_k,w_l)$, 
		\item the identity partition diagram is an element of 
			$\Hom_{\mathcal{K}(n)}(\circ,\circ)$,
		\item the top-row pair partition diagram corresponding to the set partition $\{1,2\}$ of $\{1,2\}$ superimposed, on the one hand,
			with the word $\circ\bullet$,
			and on the other,
			with the word $\bullet\circ$,
			is in 
			$\Hom_{\mathcal{K}(n)}(\varnothing,\circ\bullet)$
			and
			$\Hom_{\mathcal{K}(n)}(\varnothing,\bullet\circ)$, respectively,
			and
		\item the morphism spaces are closed
			under the vertical, horizontal and involution operations.
			%that are given in Definitions
			%\ref{qgroupcomposition}, \ref{qgrouptensorprod}, and
			%\ref{qgroupinvolution}.
	\end{enumerate}
	By Proposition \ref{twocolmonoidal},
	it is immediate that a \textbf{two-coloured category of partitions} 
	$\mathcal{K}(n)$ is a strict $\mathbb{C}$-linear monoidal category.
\end{definition}

We now introduce the critical functor for our purposes: 
we show that this functor takes two-coloured $(w_k,w_l)$--partition diagrams $d_\pi$ 
that live in a two-coloured category of partitions $\mathcal{K}(n)$
to linear maps 
$\phi_\pi: (\mathbb{C}^n)^{\otimes k} \rightarrow (\mathbb{C}^n)^{\otimes l}$ 
that live in a two-coloured representation category.
By choosing the standard basis for $\mathbb{C}^{n}$, we obtain 
$n^l \times n^k$ matrices instead, which are denoted by $D_\pi$
in what follows.

\begin{definition} \label{partmatfunctor}
Suppose that $d_\pi$ is a $(w_k,w_l)$--partition diagram, where $w_k, w_l$ are two-coloured words of lengths $k, l$ respectively.
We define $D_\pi$ as follows.

Associate the indices $i_1, i_2, \dots, i_l$ with the vertices in the top row of $d_\pi$ and $j_1, j_2, \dots, j_k$ with the vertices in the bottom row of $d_\pi$.
	Then, if $S_\pi((I,J))$ is defined to be the set
	of tuples $(I,J) \in [n]^{l+k}$ such that
		\begin{equation} \label{Snindexingset}
			\text{if } x,y 
			\text{ are in the same block of } \pi, \text{then } i_x = i_y 
		\end{equation}
	%\begin{equation} \label{Snindexingset}
		%\left\{ (I,J) \in [n]^{l+k} \;\middle\vert
		%\begin{array}{l}
			%\text{if } x,y \text{ are in the same block of } \pi, \text{then } i_x = i_y 
		%\end{array}
		%\right\}
	%\end{equation}
	(where we have momentarily replaced the elements of $J$ by $i_{l+m} \coloneqq j_m$ for all $m \in [k]$),
	 we define
	\begin{equation} \label{mappeddiagbasisSn}
		D_\pi
		\coloneqq
		\sum_{I \in [n]^l, J \in [n]^k}
		\delta_{\pi, (I,J)}
		E_{I,J}
	\end{equation}
	where
	\begin{equation}
		\delta_{\pi, (I,J)}
		\coloneqq
		\begin{cases}
			1 & \text{if } (I,J) \in S_\pi((I,J)) \\
			0 & \text{otherwise}
		\end{cases}
	\end{equation}
	and
	$E_{I,J}$ is the $n^l \times n^k$ matrix having a $1$ in the $(I,J)$ position and is $0$ elsewhere.
	We extend the definition of the map $d_\pi \mapsto D_\pi$ linearly to $P_{w_k}^{w_l}(n)$.
\end{definition}

\begin{example}
	If $d_\pi$ is the two-coloured 
$(\circ \bullet \bullet \circ \bullet \, \circ, \bullet \circ \bullet \bullet \circ)$--partition diagram given in Example \ref{twocolsetpart},
then, if $n = 3$, for example, we see from 
	\begin{equation}
		\begin{aligned}
			\scalebox{0.5}{\tikzfig{twocolouredgood}}
		\end{aligned}
	\end{equation}
that the $(1, 2, 2, 1, 3 \mid 1, 2, 1, 3, 2, 2)$-entry of $D_\pi$ is $1$,
whereas, we see that, from 
	\begin{equation}
		\begin{aligned}
			\scalebox{0.5}{\tikzfig{twocolouredbad}}
		\end{aligned}
	\end{equation}
%we see that 
	the $(1, 1, 2, 1, 3 \mid 2, 2, 1, 3, 1, 2)$-entry of $D_\pi$ is $0$.
\end{example}

Consequently, we obtain the following key result,
which makes it possible for us to construct a two-coloured representation category
from any two-coloured category of partitions.

\begin{theorem} 
	\label{quantumpartfunctor}
	For any two-coloured category of partitions $\mathcal{K}(n)$,
	we can form a category $\mathcal{C}_{\mathcal{K}}(n)$
	whose objects are the same as those in $\mathcal{K}(n)$,
	and, for any pair of objects $w_k$ and $w_l$,
	the morphism space $\Hom_{\mathcal{C}_{\mathcal{K}}(n)}(w_k,w_l)$
	is the image of $\Hom_{\mathcal{K}(n)}(w_k,w_l)$
	under $d_\pi \mapsto D_\pi$, that is
	$\Hom_{\mathcal{C}_{\mathcal{K}}(n)}(w_k,w_l)$ is defined to be
	\begin{equation} \label{easyspanningset}
		\{D_\pi \mid d_\pi \in \Hom_{\mathcal{K}(n)}(w_k,w_l)\}
	\end{equation}
	Then the map $d_\pi \mapsto D_\pi$ defines a 
	strict $\mathbb{C}$--linear monoidal functor
	\begin{equation} \label{qufunctor}
		\Theta : \mathcal{K}(n) \rightarrow \mathcal{C}_{\mathcal{K}}(n)
	\end{equation}
	such that $\mathcal{C}_{\mathcal{K}}(n)$ is a two-coloured representation category.
	[Appendix: Section \ref{missingproofs}].
\end{theorem}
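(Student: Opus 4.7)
The plan is to verify the theorem in two stages: first establish that $\Theta$ is a well-defined strict $\mathbb{C}$-linear monoidal functor, and then check that its image $\mathcal{C}_{\mathcal{K}}(n)$ satisfies each of the five axioms in Definition \ref{twocolourrepcat}. Since the morphism spaces in $\mathcal{K}(n)$ are spanned by partition diagrams and $\Theta$ is defined linearly in Definition \ref{partmatfunctor}, it suffices throughout to verify the relevant identities on diagrams $d_\pi$ and then extend by linearity.

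For functoriality, I would first unpack the three operations on partition diagrams from Definition \ref{cattwopart}. The horizontal operation (tensor product) corresponds to juxtaposing diagrams side-by-side, and a direct index-chasing argument from (\ref{mappeddiagbasisSn}) shows that $D_{\pi_1 \otimes \pi_2} = D_{\pi_1} \otimes D_{\pi_2}$, since the constraints $S_\pi$ on the indices split across the two halves. For the vertical operation (composition), composing $d_{\pi_1} \in \Hom_{\mathcal{K}(n)}(w_k, w_l)$ with $d_{\pi_2} \in \Hom_{\mathcal{K}(n)}(w_l, w_m)$ produces a new diagram together with some number of interior loops that are discarded at the cost of a scalar factor of $n$ per loop; the category operation is defined to include this factor. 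I would show that $D_{\pi_2 \circ \pi_1}$, computed via matrix multiplication, summing over the $n^l$ middle indices, reproduces exactly this scalar times $D_{\pi_2 d_{\pi_1}}$, by matching the block-matching constraints after summation with the union of blocks across the concatenated diagram. Finally, the involution on diagrams (vertical flip with colour inversion) corresponds to the matrix adjoint via the identification of black with the complex conjugate, and preservation of identity and unit is immediate from the definition.

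Strict monoidality then follows: the tensor unit $\varnothing$ maps to $1 \in \mathbb{C} = M_1(\mathbb{C})$, and the preservation of the tensor on objects (concatenation of words becomes tensor of spaces $(\mathbb{C}^n)^{\otimes(k+q)}$) together with the tensor identity on morphisms established above yields strict monoidality. To verify that $\mathcal{C}_{\mathcal{K}}(n)$ is a two-coloured representation category, I would run through the five axioms of Definition \ref{twocolourrepcat} in order: axioms 1, 2, and 3 follow directly from the functor preserving horizontal composition, vertical composition, and involution, combined with axiom 4 of Definition \ref{twocolcatpart} closing $\mathcal{K}(n)$ under these operations. Axiom 4 follows because the identity partition diagram is in $\mathcal{K}(n)$ by assumption, and its image under $\Theta$ is precisely $1_n^{\otimes k}$ on the word $w$ (using that tensor products of identities are identities). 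Axiom 5 is the key check: the top-row pair partition diagrams on $\circ\bullet$ and $\bullet\circ$ lie in $\mathcal{K}(n)$ by axiom 3 of Definition \ref{twocolcatpart}, and a direct computation from (\ref{mappeddiagbasisSn}) shows that their images are precisely the map $R: 1 \mapsto \sum_{i=1}^{n} e_i \otimes e_i$.

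The main obstacle will be the composition computation, specifically keeping track of the loop factors of $n$ that arise when contracting the middle row of indices in the matrix product $D_{\pi_2} D_{\pi_1}$. The cleanest way to handle this is to partition the middle indices according to the connected components formed when stacking $d_{\pi_1}$ and $d_{\pi_2}$: the indices constrained by a connected component that touches a top or bottom boundary are forced to equal the corresponding boundary index, while each interior loop contributes a free sum over $[n]$ and hence a factor of $n$. Showing that this agrees with the definition of vertical composition in $\mathcal{P}(n)$ (which must match the one referenced in Section \ref{missingproofs}) is the only nontrivial combinatorial step; everything else is a symbolic verification of axioms, extended linearly from the diagrammatic basis.
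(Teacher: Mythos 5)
Your proposal is correct and follows essentially the same route as the paper's proof: verify everything on the diagram basis and extend linearly, with the one genuinely combinatorial step being that contracting the $n^l$ middle indices in $D_{\pi_2}D_{\pi_1}$ yields $n^{c}$ times the matrix of the concatenated diagram (free sums over interior loops versus forced boundary indices), followed by the tensor, involution, identity, and $R$-map checks and inheritance by the subcategory $\mathcal{K}(n)$. The only nitpick is that the paper's involution on diagrams is just reflection in the horizontal axis without colour inversion, but since the matrices $D_\pi$ are real $0$--$1$ matrices this does not affect the identity $(D_\pi)^* = D_{\pi^*}$.
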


Hence we can use Woronowicz-Tannaka-Krein duality given in Theorem \ref{tannakakrein}
to not only construct the easy compact matrix quantum group but also to characterise
the weight matrices that appear in the neural network that was derived in Definition
\ref{quantumGneuralnetwork}, as follows.

\begin{theorem} \label{easycstar}
	A two-coloured category of partitions 
	$\mathcal{K}(n)$
	determines a unique compact matrix quantum group $(G(n), u)$.
	We call these compact matrix quantum groups \textbf{easy},
	and they can be constructed as the universal $C^{*}$-algebra
	$C^{*}(E \mid R)$, where $E = \{u_{i,j}, i,j \in [n]\}$ and
	$R = \{D_\pi{u^{\otimes w_k}} = {u^{\otimes w_l}}D_\pi 
		\text{ for all } d_\pi \in \mathcal{K}(n)(w_k,w_l)\}$.
%	\begin{equation}
%		C(G(n)) = C^{*}(u_{i,j}, 1 \leq i,j \leq n \mid
%		D_\pi{u^{\otimes w_k}} = {u^{\otimes w_l}}D_\pi 
%		\text{ for all } d_\pi \in \mathcal{K}(n)(w_k,w_l))
%	\end{equation}
	The weight matrix $W$ for the learnable linear function 
	(\ref{quantumGlayerlinear})	
	in the neural network that is equivariant to $G(n)$ is given by
	\begin{equation} \label{cmqgweightmatrix}
		W = \sum_{\pi \mid d_\pi \in \mathcal{K}(n)(w_{l-1},w_l)} w_\pi D_\pi
	\end{equation}
	%for weights $w_\pi \in \mathbb{C}$.
\end{theorem}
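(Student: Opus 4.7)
The plan is to obtain the easy compact matrix quantum group by composing the two pieces of machinery already developed in the paper, and then to read the weight matrix description directly off the target category. Specifically, I feed $\mathcal{K}(n)$ through the functor $\Theta$ of Theorem \ref{quantumpartfunctor} to land in the two-coloured representation category $\mathcal{C}_{\mathcal{K}}(n)$, and then feed $\mathcal{C}_{\mathcal{K}}(n)$ through Woronowicz--Tannaka--Krein duality (Theorem \ref{tannakakrein}). Applying Theorem \ref{quantumpartfunctor} produces a two-coloured representation category whose $(w_k,w_l)$-morphism space is by construction the linear span $\{D_\pi \mid d_\pi \in \mathcal{K}(n)(w_k,w_l)\}$, and applying Theorem \ref{tannakakrein} to this category yields a unique compact matrix quantum group $(G(n),u)$ with $\FundRep_{G(n)} = \mathcal{C}_{\mathcal{K}}(n)$. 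The existence and uniqueness assertions in the statement follow at once.

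To upgrade this abstract existence to the explicit presentation $C^{*}(E \mid R)$, I would invoke the constructive second half of Woronowicz--Tannaka--Krein promised in Remark \ref{woronremark} and stated in full in Appendix \ref{tannakakreinfull}. That construction realises $C(G(n))$ as the universal unital $C^{*}$-algebra generated by the $n^{2}$ entries $u_{i,j}$ of the fundamental representation, subject to the relations that every morphism of the input representation category intertwines the relevant tensor powers of $u$. Feeding in $\mathcal{C}_{\mathcal{K}}(n)$, whose morphism spaces are spanned by the $D_\pi$, reduces this potentially infinite family of intertwiner relations to the generating subfamily $R = \{D_\pi u^{\otimes w_k} = u^{\otimes w_l} D_\pi \mid d_\pi \in \mathcal{K}(n)(w_k,w_l)\}$, using monoidality and $^{*}$-preservation of $\Theta$ together with closure of $\mathcal{K}(n)$ under the operations of Definition \ref{twocolcatpart}.

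The weight matrix characterisation is then immediate. By Definition \ref{quantumGneuralnetwork}, the learnable linear map $\phi_l$ in (\ref{quantumGlayerlinear}) must lie in $\FundRep_{G(n)}(w_{l-1},w_l)$, and by what has just been established this equals $\Hom_{\mathcal{C}_{\mathcal{K}}(n)}(w_{l-1},w_l) = \mathrm{span}\{D_\pi \mid d_\pi \in \mathcal{K}(n)(w_{l-1},w_l)\}$. Passing to the standard basis of $\mathbb{C}^{n}$ gives the matrix form (\ref{cmqgweightmatrix}) with free complex coefficients $w_\pi$.

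The main obstacle I expect is the bookkeeping in the middle step: one must verify that imposing only the generating relations $R$ (rather than one relation for every morphism of $\mathcal{C}_{\mathcal{K}}(n)$) already forces equivariance of all $D_\pi$, and that the resulting universal algebra is nontrivial with $u$ and $u^{\top}$ invertible as required by Definition \ref{CMQGDefn}. Both reductions are supplied by the monoidal and $^{*}$-preserving structure of $\Theta$ together with the axioms of a two-coloured category of partitions, so the genuinely hard analytic work is delegated to the appendix proof of the full Tannaka--Krein theorem.
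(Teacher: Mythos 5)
Your proposal is correct and follows exactly the route the paper intends: compose the functor $\Theta$ of Theorem \ref{quantumpartfunctor} with Woronowicz--Tannaka--Krein duality (Theorem \ref{tannakakrein} and its constructive second half, Theorem \ref{tannakakreinfull}) to obtain the unique compact matrix quantum group and its presentation $C^{*}(E\mid R)$, and then read off the weight matrix from $\FundRep_{G(n)}(w_{l-1},w_l)=\Hom_{\mathcal{C}_{\mathcal{K}}(n)}(w_{l-1},w_l)=\operatorname{span}\{D_\pi\}$ (cf.\ Remark \ref{partweightconstruction}). The reduction of the intertwiner relations to the generating family $R$ is immediate from linearity of the relations in $\phi$ together with the fact that the morphism spaces of $\mathcal{C}_{\mathcal{K}}(n)$ are by construction spanned by the $D_\pi$, so the concern you flag at the end is already handled.
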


%%%%%%%%%%%%%%%%%%%%%%%%%%%%%%%%%%%%%%%%%%%%%%%%%%%%%%%%%%%%%%%%%%%%%%%%

%%%%%%%%%%%%%%%%%%%%%%%%%%%%%%%%%%%%%%%%%%%%%%%%%%%%%%%%%%%%%%%%%%%%%%%%%%%%%%%%%%%
\begin{figure*}[tb]
	\begin{tcolorbox}[colback=teal!10, colframe=teal!30, coltitle=black, 
		%title={\bfseries A General Procedure for Calculating 
		title={\bfseries Procedure 1: How to Calculate
		the Weight Matrix of
		an Equivariant Linear Layer Function from 
		$((\mathbb{C}^{n})^{\otimes k}, u^{\otimes w_{k}})
		\rightarrow 
		((\mathbb{C}^{n})^{\otimes l}, u^{\otimes w_{l}})$
		for an Easy Compact Matrix Quantum Group $(G(n), u)$.},
		%a $\bm{k}$-order to $\bm{l}$-order $\bm{S_n}$ Equivariant Weight Matrix}, 
	fonttitle=\bfseries]
	Assume that 
		$\mathcal{K}(n)$
		is
		the two-coloured category of partitions 
		that defines the easy compact matrix quantum group $(G(n), u)$ as
		in Theorem \ref{easycstar}. \\

		Perform the following steps:
	\begin{enumerate}
		\item Calculate all of the two-coloured $(w_k, w_l)$--partition diagrams
			$d_\pi$ that live in $\mathcal{K}(n)$.
			These diagrams form a basis of the morphism space $\Hom_{\mathcal{K}(n)}(w_k,w_l)$.
	\item Apply the function $d_\pi \mapsto D_\pi$ to 
		each diagram to obtain its associated spanning set matrix $D_\pi$.
	\item Attach a weight $w_\pi \in \mathbb{C}$ to each matrix $D_\pi$.
	\item Finally, calculate $\sum w_\pi D_\pi$ to give the overall weight matrix.
	\end{enumerate}
	\end{tcolorbox}
  	\label{qgroupsummaryprocedure}
\end{figure*}
%%%%%%%%%%%%%%%%%%%%%%%%%%%%%%%%%%%%%%%%%%%%%%%%%%%%%%%%%%%%%%%%%%%%%%%%%%%%%%%%%%%

\section{Examples} \label{CMQGExamples}

We noted in the Introduction that 
easy compact matrix quantum groups have been well studied in the literature
and many classes of them have been characterised in full.
%Easy compact matrix quantum groups have been well studied in the literature
%\citep{banica, banica2010, weber2013, freslonweber2016, raum2016, tarrago2016, weber2017, 
%tarrago2018, gromada2020, freslon2023}
%and many classes of them have been characterised in full.
We introduce a few examples, 
%that are relevant to machine learning,
characterise their weight matrices,
and refer the reader to the referenced works for more details.

Note in the following that it is enough to provide the spanning set of matrices
(\ref{easyspanningset}) for an easy compact matrix quantum group, since, 
by (\ref{cmqgweightmatrix}), 
the weight matrix is a weighted linear combination of these
matrices.

Firstly, we are able to characterise the equivariant weight matrices 
for some compact matrix groups
that have not appeared previously in the machine learning literature by
combining Theorem \ref{cmqgfundamental}
%the Fundamental Theorem of Compact Matrix Quantum Groups 
and Corollary~\ref{cmgfromcmqgnn}. 
Namely, if we consider one-coloured partition categories
that contain the swap partition diagram 
	\begin{equation} \label{swappart}
		\begin{aligned}
			\scalebox{0.5}{\tikzfig{swappartition}}
		\end{aligned}
	\end{equation}
	then, by Theorem \ref{easycstar},
we obtain easy compact matrix groups (as the $C^*$-algebra is commutative) 
and spanning sets of matrices (\ref{easyspanningset}), which allow us to
characterise their weight matrices by (\ref{cmqgweightmatrix}).
The groups themselves were characterised by \citet{banica}.
We highlight two examples here that are new, and provide
more examples in the Appendix that recover characterisations 
that were known to the machine learning
community before. 
\begin{example}
	The first is the \textbf{hyperoctahedral group} $H_n$, the symmetry group
	of the hypercube. \citet{banica} showed that the spanning set of matrices
	(\ref{easyspanningset}) is the image under $d_\pi \mapsto D_\pi$ 
	of all $(k,l)$--partition diagrams whose blocks have even size.
\end{example}
\begin{example}
	The second is the \textbf{bistochastic group} $B_n$, 
	which is the group of orthogonal
	matrices having sum $1$ in each row and column. Again, 
	\citet{banica} showed that the spanning set of matrices
	(\ref{easyspanningset}) is the image under $d_\pi \mapsto D_\pi$ 
	of all $(k,l)$--partition diagrams whose
	blocks have size one or two.
\end{example}

Secondly, in the case where the partition categories are one-coloured but
they do not contain the swap partition diagram (\ref{swappart}), we obtain true
compact matrix quantum groups 
(as the $C^*$-algebra is non-commutative). Again we highlight two examples.
\begin{example}
	\citet{wang1998} discovered the \textbf{symmetric quantum group} $C(S_n^+)$, which
	we introduced in Section \ref{noncommgeometry}.
	\citet{banica} showed that the spanning set of matrices (\ref{easyspanningset}) 
for $C(S_n^+)$ is the image under $d_\pi \mapsto D_\pi$ 
	of all non-crossing $(k,l)$--partition diagrams (i.e. no two connected components cross).
\end{example}
\begin{example}
	\cite{wang1995} originally discovered 
	the \textbf{orthogonal quantum group} $C(O(n)^+)$ as a universal $C^*$-algebra.
	%as the universal $C^*$-algebra 
	%$C^{*}(E \mid R)$, where $E = \{u_{i,j}, i,j \in [n]\}$ and
	%$R = \{u_{i,j} = u_{i,j}^*, uu^\top = u^\top{u} = 1\}$.
	\citet{banica} showed that the spanning set of matrices (\ref{easyspanningset}) 
for $C(O(n)^+)$ is the image under $d_\pi \mapsto D_\pi$ of 
	all non-crossing $(k,l)$--partition diagrams whose
	blocks come in pairs, and also recovered $C(O(n)^+)$ itself in this way (using the result of Theorem \ref{easycstar}).
\end{example}

Finally, for two-coloured partition categories, the most important examples are
the \textbf{unitary group} $U(n)$ and the \textbf{unitary quantum group} $U(n)^+$. 
%They were
%characterised by \citet{tarrago2016, tarrago2018}. 
These depend on the two-coloured words
$w_k$ and $w_l$ that are chosen, but \citet{tarrago2016, tarrago2018} showed that,
for $U(n)$, the spanning set of matrices is the image under $d_\pi \mapsto D_\pi$ 
of all $(w_k,w_l)$--partition diagrams whose blocks come in pairs 
such that if two vertices of a block are in the same row, 
then they have different colours, otherwise they have the same colours, and for
$U(n)^+$ (defined by Theorem \ref{easycstar}), it is the same except all crossing partition diagrams are first removed.

%%%%%%%%%%%%%%%%%%%%%%%%%%%%%%%%%%%%%%%%%%%%%%%%%%%%%%%%%%%%%%%%%%%%%%%%

\section{Conclusion}

We have derived the existence of compact matrix quantum group equivariant
neural networks for learning from data that lives in a non-commutative geometry
and has 
%non-commutative 
symmetries that are described by compact matrix
quantum groups. We have characterised the linear layers of these neural networks
for the easy compact matrix quantum groups. As future work, we suggest that it 
would be good to extend this characterisation to the equivariant non-linear layers,
which would enable us to demonstrate in practice what these neural networks 
promise in theory.

%%%%%%%%%%%%%%%%%%%%%%%%%%%%%%%%%%%%%%%%%%%%%%%%%%%%%%%%%%%%%%%%%%%%%%%%

% Acknowledgements should only appear in the accepted version.
%\section*{Acknowledgements}

%%%%%%%%%%%%%%%%%%%%%%%%%%%%%%%%%%%%%%%%%%%%%%%%%%%%%%%%%%%%%%%%%%%%%%%%

\section*{Impact Statement}

This paper presents work whose goal is to advance the field of 
Machine Learning. There are many potential societal consequences 
of our work, none which we feel must be specifically highlighted here.

% In the unusual situation where you want a paper to appear in the
% references without citing it in the main text, use \nocite

\nocite{*}
\bibliography{bibfile}
\bibliographystyle{icml2025}

%%%%%%%%%%%%%%%%%%%%%%%%%%%%%%%%%%%%%%%%%%%%%%%%%%%%%%%%%%%%%%%%%%%%%%%%%%%%%%%
%%%%%%%%%%%%%%%%%%%%%%%%%%%%%%%%%%%%%%%%%%%%%%%%%%%%%%%%%%%%%%%%%%%%%%%%%%%%%%%
% APPENDIX
%%%%%%%%%%%%%%%%%%%%%%%%%%%%%%%%%%%%%%%%%%%%%%%%%%%%%%%%%%%%%%%%%%%%%%%%%%%%%%%
%%%%%%%%%%%%%%%%%%%%%%%%%%%%%%%%%%%%%%%%%%%%%%%%%%%%%%%%%%%%%%%%%%%%%%%%%%%%%%%
\newpage
\appendix
\onecolumn

\section{General Topology} \label{gentopology}

%To finish this chapter,
%we need to define some concepts 
%from general topology --- also known as point set topology ---
%that will be used in Chapter \ref{ch:quantum group}.
We introduce some concepts 
from general topology --- also known as point set topology ---
that are used in the theory of $C^{*}$-algebras (Appendix: Section \ref{Cstarbackground}).
General topology provides an abstract mathematical framework
for studying the properties of a wide range of mathematical structures.
%General topology is the study of 
%The mathematical structure are 
The mathematical structures in general topology are 
the so-called topological spaces, namely, sets
containing a family of subsets that satisfy some nice properties,
together with the so-called continuous functions,
%between them,
which are the structure-preserving maps between topological spaces.
%which intuitively map nearby points to nearby points. 
The family of subsets that is chosen for a set is called a topology, and
a set can have different topologies assigned to it.
In this way the notion of a continuous function depends on the topology that
is assigned to the domain and the codomain of the function.
Our treatment is only an overview of the main ideas and results:
we will mostly state these without proof and so we refer the reader to 
any standard reference, such as 
\citet{Bourbaki1995} or \citet{Munkres2018},
for more details.

We define the first key concept of general topology.
A \textbf{topology} on a set $X$ is a family of subsets $\tau_X$ of $X$ such that
\begin{enumerate}
	\item Both the empty set and $X$ are elements of $\tau_X$;
	\item Any union of elements of $\tau_X$ is an element of $\tau_X$;
	\item Any intersection of finitely many elements of $\tau_X$ is an element of $\tau_X$.
\end{enumerate}
We call a set $X$ together with a topology $\tau_X$ on $X$ a \textbf{topological space},
written as a pair $(X, \tau_X)$.
When the topology of a topological space $(X, \tau_X)$ is understood, 
we often just refer to the topological space as $X$.
We sometimes call the elements of $X$ \textbf{points}.
We call the elements of $\tau_X$ the \textbf{open sets} of $(X, \tau_X)$.
A subset of $(X, \tau_X)$ is said to be \textbf{closed} 
if its complement is an open set.
Note that by changing the topology that is associated with a set, 
the definition of the open sets changes.

%As we mentioned before, 
We said that the maps of interest between topological spaces are the 
continuous functions, which are defined as follows.
A function $f : (X, \tau_X) \rightarrow (Y, \tau_Y)$ between topological spaces
is said to be \textbf{continuous} if $f$
is a function on the underlying sets such that if $U$ is an open set of $Y$,
then its preimage, $f^{-1}(U)$, is an open set of $X$.
One can show using this definition that $f$ is continuous if and only if
the preimage of a closed set in $Y$ is a closed set in $X$.
The continuous functions are important because they preserve certain properties 
of topological spaces, many of which will be defined in what follows.

One useful notion that helps to define topologies on sets is that of a \textbf{base},
or sometimes \textbf{basis}.
A base $B$ for a topological space $(X, \tau_X)$ is a collection of open sets in $\tau_X$
such that every open set can be written as a union of elements in $B$.
We say that $B$ \textbf{generates the topology} $\tau_X$.

An important class of topological spaces are \textbf{metric spaces}, 
which come with a notion of distance
%, known as a \textbf{metric}, 
between any pair of points in the set.
A metric space is a set $X$ together with a function 
$d_X: X \times X \rightarrow \mathbb{R}$, called a \textbf{metric},
such that the following axioms hold:
\begin{enumerate}
	\item $d_X(x,y) \geq 0$ for all $x, y \in X$ and $d_X(x,y) = 0$ if and only if $x = y$;
	\item $d_X(x,y) = d_X(y,x)$ for all $x, y \in X$;
	\item $d_X(x,y) \leq d_X(x,z) + d_X(z,y)$ for all $x, y, z \in X$.
\end{enumerate}
A metric space is often denoted by the pair $(X,d_X)$ or simply by $X$ when
the metric $d_X$ is understood.
Some of the most important sets in a metric space are the \textbf{open balls}.
For any point $x$ in a metric space $(X,d_X)$ and for any real number $\epsilon > 0$,
the open ball of radius $\epsilon$ around $x$, written $B(x, \epsilon)$,
is defined to be the set $\{y \in X \mid d_X(x,y) < \epsilon\}$.
When a metric space is given a topology that is induced by the metric,
known as the \textbf{metric topology},
then the metric space together with the metric topology becomes a topological space.
The metric topology is generated by the base 
that is given by all of the open balls that are defined by the metric.

A particularly important class of metric spaces are the 
\textbf{normed vector spaces}, 
which are commonly refered to simply as \textbf{normed spaces}.
To define them, we first need to define the concept of a \textbf{norm}
on a vector space.
A norm on a complex vector space $X$ is a function 
$\norm{\cdot} : X \rightarrow \mathbb{R}$ such that
\begin{enumerate}
	\item $\norm{x} \geq 0$ for all $x \in X$
		and if $\norm{x} = 0$ then $x = 0$;
	\item $\norm{{\lambda}x} \leq \lvert \lambda \rvert \norm{y}$ for all 
		 $x \in X$ and $\lambda \in \mathbb{C}$;
	\item $\norm{x + y} \leq \norm{x} + \norm{y}$ for all $x, y \in X$;
\end{enumerate}
If $X$ is an algebra, then a norm on $X$ must also satisfy the submultiplicativity 
axiom, namely
\begin{enumerate}
	\setcounter{enumi}{3}
	\item $\norm{xy} \leq \norm{x}\norm{y}$ for all $x, y \in X$ 
\end{enumerate}
A normed space is a complex vector space (or algebra) $X$ 
that is equipped with a norm $\norm{\cdot} : X \rightarrow \mathbb{R}$.
Normed spaces $X$ become metric spaces under the metric that is induced by the norm, namely
\begin{equation}
	d_X(x,y) \coloneqq \norm{x - y} \text{ for all } x, y \in X
\end{equation}
In particular, one can show that $d_X$ satisfies the axioms of a metric on $X$.

One nice property for a topological space to have is the \textbf{Hausdorff property}.
The Hausdorff property defines a notion of separation between points in a topological
space that will be particularly useful when we consider limits of sequences below.
To define the Hausdorff property, we first need the definition of a 
\textbf{neighbourhood}.
A neighbourhood of a point $p$ in a topological space $(X, \tau_X)$
is a subset $V$ of $X$ such that $V$ contains an open set $U$ that contains $p$,
that is, $p \in U \subseteq V \subseteq X$. Note that $V$ does not necessarily
need to be an open set itself.
We say that a topological space $(X, \tau_X)$ is \textbf{Hausdorff}
if, for any two distinct points $x, y \in X$,
there exists a neighbourhood $V$ of $x$ and a neighbourhood $W$ of $y$
such that $V$ and $W$ are disjoint.
In particular, one can show that every metric space $(X, d_X)$ is Hausdorff.

We now look at the \textbf{closure} of a set in a topological space $(X, \tau_X)$.
Every subset $V$ of a topological space $(X, \tau_X)$ has a closure 
which is denoted by $\overline{V}$:
it is the intersection of all closed sets in $(X, \tau_X)$ that contain $V$.
This definition is not particularly helpful, so to make it more useful
we need the following definition of a \textbf{limit point}.
For a subset $V$ of a topological space $(X, \tau_X)$,
a limit point of $V$ is a point $x \in X$ such that
every neighbourhood of $x$ contains a point of $V$ that is not $x$.
One can then show that the closure of $V$ is the union of $V$ and
the set containing all limit points of $V$.
We say that a subset $V$ of a topological space $(X, \tau_X)$ is \textbf{dense} in $X$
if its closure is $X$, that is, $\overline{V} = X$.
Moreover, one can show that the 
closure of a closed set is the set itself.
Hence if $V$ is dense in $X$ and $V$ is closed, then $V = X$.
All of these definitions apply immediately in the case where $X$ is a metric space.

We continue now with metric spaces and consider sequences of points in a metric space.
A sequence $(x_n)$ in a metric space $(X,d_X)$ is said to be \textbf{Cauchy}
if for all $\epsilon > 0$, there exists a positive integer $N$ such that
for all $m, n \geq N$, $d_X(x_m, x_n) < \epsilon$.
A sequence $(x_n)$ in a metric space $(X,d_X)$ is said to \textbf{converge} 
to a limit $x \in X$
if for all $\epsilon > 0$, there exists a positive integer $N$ such that
for all $n \geq N$, $d_X(x_x, x) < \epsilon$. 
In particular, one can show that if a sequence converges in a metric space,
then the limit must be unique.
This is a direct consequence of a metric space
being Hausdorff.
Moreover, continuous functions preserve the convergence of sequences, in that if 
$f: (X,d_X) \rightarrow (Y,d_Y)$ is a continuous function between metric spaces 
and $(x_n)$ is a sequence
that converges to $x \in X$, then the sequence $(f(x_n))$ in $Y$ 
converges to $f(x) \in Y$.
Particularly nice metric spaces are those that are said to be complete:
a metric space $(X,d_X)$ is \textbf{complete} if every Cauchy sequence in $(X,d_X)$
converges.
In particular, a normed space that is complete with the metric induced by the norm
is known as a 
\textbf{Banach algebra}, which is the starting point for the material
that appears in Section \ref{Cstarbackground} of the Appendix.
One further useful result is that a subset of a 
complete metric space is complete if and only if it is closed.

A notable point is that closed sets
have an alternative characterisation when $X$ is a metric space.
Firstly, note that if $(x_n)$ is a sequence that lives in a subset
$V$ of $X$ that converges, then its limit $x$ is not necessarily in $V$ 
(but it is in $X$).
However, one can show that a subset $V$ of a metric space $(X,d_X)$ is \textbf{closed}
if and only if every sequence $(x_n)$ in $V$ converges to a limit in $V$.

%One can show that closed sets in a metric space $(X,d)$
%can be given an alternative characterisation.
%Firstly, note that if $(x_n)$ is a sequence that lives in a subset
%$V$ of $X$ that converges, then its limit $x$ is not necessarily in $V$ 
%(but it is in $X$).
%However, a subset $V$ of a metric space $(X,d)$ is \textbf{closed}
%if and only if every sequence $(x_n)$ in $V$ converges to a limit in $V$.
%In particular, a subset of a complete metric space is complete if and only if
%it is closed.

%Every subset $V$ of a metric space $(X,d)$ has a \textbf{closure}
%which is denoted by $\overline{V}$:
%it is the intersection of all closed sets that contain $V$.
%In a metric space $(X,d)$, the closure of $V$ can also 
%be obtained by adding to $V$ all limits of convergent sequences 
%of elements of $V$.
%We say that a subset $V$ of a metric space $(X,d)$ is \textbf{dense} in $X$
%if its closure is $X$, that is, $\overline{V} = X$.
%One can show that the closure of a closed set is the set itself.
%Hence if $V$ is dense in $X$ and $V$ is closed, then $V = X$.

We said that complete metric spaces are particularly nice: 
they allow us to solve problems iteratively using Cauchy sequences,
since we know that they have a limit.
However, even if a metric space is incomplete, 
one can create a complete metric space from it
in a process known as \textbf{completion}.
We first need the following definition.
A map $i : (X, d_X) \rightarrow (Y, d_Y)$ is said to be an \textbf{isometry}
if it satisfies
\begin{equation}
	d_Y(i(x_1), i(x_2)) 
	= d_X(x_1, x_2)
\end{equation}
for all $x_1, x_2 \in X$.
A surjective isometry between metric spaces is called an 
\textbf{isomorphism of metric spaces}, or simply an \textbf{isomorphism}.
Hence, we have that 
a metric space 
$(\tilde{X}, d_{\tilde{X}})$ is called 
the completion of a metric space $(X,d_X)$ if the following
conditions are satisfied:
\begin{itemize}
	\item there is an isometry $i : X \rightarrow \tilde{X}$
	\item the image $i(X)$ is dense in $\tilde{X}$, and
	\item the space $(\tilde{X}, d_{\tilde{X}})$ is complete.
\end{itemize}
In fact, one can show that 
every metric space has a unique completion, up to isomorphism.
In particular, every incomplete metric space can be completed.

One of the most important concepts in general topology is 
%that of 
\textbf{compactness}, which is a notion of finiteness for topological spaces.
In general topology, compact topological spaces
play an analogous role to finite dimensional vector spaces in the theory of vector spaces
or to finite sets in the theory of sets. 
The motivation for compactness of a topological space $X$ comes from
wanting to understand the conditions that $X$ needs to satisfy such that
any 
%In particular, compactness of a topological space $X$ is crucial in being able
%to prove that every 
continuous map $f : X \rightarrow \mathbb{C}$ is bounded.
Recall that a function $f : X \rightarrow \mathbb{C}$ is \textbf{bounded}
if there exists a non-negative real number $M$ such that
$\lvert f(x) \rvert \leq M$ for all $x \in X$.

%Compactness of a topological space $X$ considers
%families of open sets that cover $X$.
One way in which a continuous function $f : X \rightarrow \mathbb{C}$ 
will be bounded is if it is bounded on a finite number of subsets of $X$
whose union is $X$ itself. 
This provides sufficient motivation for the following definitions.
If $(X, \tau_X)$ is a topological space, then a \textbf{cover} of $X$
is a family $(U_i)_{i \in I}$ of subsets of $X$ such that 
$\bigcup_{i \in I} U_i = X$.
The cover is \textbf{finite} if the indexing set $I$ is finite,
and the cover is \textbf{open} if $U_i$ is open for each $i \in I$.
If $(U_i)_{i \in I}$ is a cover of $X$, then we say that
$(U_j)_{j \in J}$ is a subcover of $(U_i)_{i \in I}$ if $J \subseteq I$
and $(U_j)_{j \in J}$ is itself a cover of $X$.
Consequently, a topological space $(X, \tau_X)$ is \textbf{compact} if every open
cover of $X$ has a finite subcover.
Hence, every continuous function $f : X \rightarrow \mathbb{C}$
on a compact topological space $X$ is bounded.
We study $C(X)$, the space of continuous functions $f : X \rightarrow \mathbb{C}$,
in the case where $X$ is a compact Hausdorff space in Section \ref{Cstarbackground}.
It is also worth noting that every finite topological space is compact 
since any open cover must already be finite.

One can also show that every closed subspace of a compact space is compact,
where the subspace is given the \textbf{subspace topology}:
if $V$ is a subspace of a topological space $(X, \tau_X)$, then the
subspace topology $\tau_V$ on $V$ is defined by 
%$\{U \subseteq X \mid U = W \cap A \text{ for some } W \in \tau_X\}$.
$\{U \cap V \mid U \in \tau_X\}$.
Although it is not true that every compact subspace of a compact space is closed,
we do have that every compact subspace of a Hausdorff space is closed.
Moreover, if $(X, \tau_X)$ is compact and
$f: (X, \tau_X) \rightarrow (Y, \tau_Y)$ is a continuous function,
then $f(X)$ is compact.
And if $(X, \tau_X)$ is compact and $f: X \rightarrow \mathbb{C}$ is a continuous
function, then $f(X)$ is bounded.
For metric spaces $(X,d_X)$, one can show that 
compactness is equivalent to sequential compactness,
which means that every sequence in $X$ has a convergent subsequence.
Furthermore, if the metric space is a Euclidean space, say 
$\mathbb{R}^n$ or $\mathbb{C}^n$, then one can show that a subspace
of a Euclidean space is compact if and only if that subspace is closed and bounded.

Some of the most important compact topological spaces that we consider
are \textbf{compact groups}.
Firstly, a \textbf{topological group} is 
a topological space $(G, \tau_G)$ that has a group structure defined on it
such that the group operations of multiplication and inversion
\begin{equation}
	(g, h) \mapsto gh, \quad g \mapsto g^{-1}	
\end{equation}
are both continuous functions.
A topological group is called a compact group
if the underlying topological space is compact.
Note that we always choose our topological groups to be Hausdorff, 
and so this property is always assumed when we refer to a topological group.
Note also that, in the theory of topological groups, a subgroup
of a topological group is always assumed to be closed.
In particular, a closed subgroup of a compact group is a compact group.

%Some of the most important compact groups that we consider are 
In the main paper, we study compact matrix groups.
The most notable example of a compact matrix group is
%most notably,
the \textbf{orthogonal group} $O(n)$ of $n \times n$ matrices $M$ 
such that $M \mapsto M^\top{M} = I$.
%and the special orthogonal group $SO(n)$.
To show that $O(n)$ is compact, it is enough to show that it is closed and bounded,
since $O(n)$ is a subset of the Euclidean space $\mathbb{R}^{n \times n}$.
Given that one can show that singleton sets in a metric space are closed 
and that polynomials are continuous functions,
$O(n)$ is closed since it is the preimage 
of the singleton set $\{I\}$ 
under the continuous map 
$f : \mathbb{R}^{n \times n} \rightarrow \mathbb{R}^{n \times n}$
that is given by $M \mapsto M^\top{M}$.
$O(n)$ is bounded since the Euclidean norm of an orthogonal $n \times n$ matrix
is $\sqrt{n}$.
%Moreover, $SO(n)$ is closed in $O(n)$ since the determinant function is polynomial
%in the entries of the matrix. Hence $SO(n)$ is compact.
Moreover, any finite group --- in particular, 
the \textbf{symmetric group} $S_n$ ---
is compact.

%%%%%%%%%%%%%%%%%%%%%%%%%%%%%%%%%%%%%%%%%%%%%%%%%%%%%%%%%%%%%%%%%%%%%%%%

\section{An Overview of $C^{*}$-Algebras and Operator Theory} \label{Cstarbackground}

We provide an overview of the key results 
that appear in the theory of $C^{*}$-algebras,
which is an area of mathematics that lies 
at the intersection of algebra and topology.
The theory has its roots in the development of quantum
mechanics in the 1920s, 
where the non-commutativity
of bounded, linear operators on a Hilbert space is used to model physical observables.
%In 1943, 
\citet{gelfand1943} developed an abstract definition of 
$C^{*}$-algebras that did not require the use of bounded, linear operators on a Hilbert space.
Forty-five years later, 
\citet{woronowicz1987, woronowicz1988} used $C^{*}$-algebras
to develop compact matrix quantum groups.
Since a proper treatment of the theory of $C^{*}$-algebras would amount to several 
hundreds of pages, we focus solely on the key concepts 
that also appear in the theory of compact matrix quantum
groups given in Section \ref{noncommgeometry}; 
notably, tensor products of $C^{*}$-algebras and the 
construction of the universal $C^{*}$-algebra from a set of generators and relations
on those generators.
When we quote a result without proof, we provide a reference to the literature 
where its proof can be found.
We also rely on the background material on general topology
that appeared in Section \ref{gentopology} throughout. 
For a more comprehensive treatment on $C^{*}$-algebras, 
we recommend any of the following references:
\citet{blackadar2006operator, courtney2023notes,
	gromada2020, lin2001,
	maassen2021, speich} and \citet{rordam2000}.

\begin{definition}
	We build up to the definition of a $C^{*}$-algebra with the following definitions.
	\begin{itemize}
		\item An \textbf{algebra} $A$ over $\mathbb{C}$ is a complex vector space
			together with a bilinear, associative multiplication
			$A \times A \mapsto A$ 
			%$\cdot : A \times A \mapsto A$ 
			satisfying 
			$\lambda(xy) = (\lambda{x})y = x(\lambda{y})$
			for $x, y \in A$ and $\lambda \in \mathbb{C}$.
		\item An algebra $A$ is \textbf{unital} if it contains an element $1 \in A$, called the \textbf{unit}, such that $1x = x1 = x$ for all $x \in A$.
		\item A subset $B \subseteq A$ is called a \textbf{subalgebra}
			if $B$ is an algebra with respect to the same operations.
		\item A subset $I \subseteq A$ is called an \textbf{ideal} 
			if $I$ is a subalgebra of $A$ that is invariant with respect
			to both left and right multiplication by elements of $A$.
		\item A \textbf{normed algebra} is an algebra $A$ that has a norm 
			$\norm{\cdot} : A \rightarrow \mathbb{C}$
			such that it is submultiplicative: 
			$\norm{xy} \leq \norm{x}\norm{y}$ for all $x, y \in A$.
		\item A \textbf{Banach algebra} is a normed algebra $A$ which is complete 
			with respect to the metric induced by the norm $\norm{\cdot}$.
		\item An \textbf{involution} on an algebra $A$ is an antilinear map
			$^*: A \rightarrow A$ such that $(x^*)^* = x$ and $(xy)^* = y^*x^*$ for all $x, y \in A$.
		\item A $\bm{^*}$\textbf{-algebra} is an algebra $A$ that has an involution. Similarly, we define $\bm{^*}$\textbf{-subalgebra} and $\bm{^*}$\textbf{-ideal}.
		\item A \textbf{Banach} $\bm{^*}$\textbf{-algebra} is a Banach algebra $A$ that has an involution.
	\end{itemize}
\end{definition}

Consequently, we have that

\begin{definition}
	A $\bm{C^{*}}$\textbf{-algebra} is a Banach $^*$-algebra $A$ 
	that satisfies the $\bm{C^{*}}$\textbf{-identity}: 
	$\norm{x^*x} = \norm{x}^2$ for all $x \in A$.
\end{definition}

Note that a closed $^*$-subalgebra of a $C^*$-algebra is also a $C^*$-algebra.

\begin{definition}
	We say that an element $a$ of a $C^{*}$-algebra 
	$A$ is \textbf{self-adjoint}
	if $a^* = a$, and, if $A$ is unital, we say that 
	$a$ is \textbf{unitary} if $a^*a = aa^* = 1$.
\end{definition}

\begin{example} \label{exCstaralgebras}
	We give some examples of $C^{*}$-algebras.
	\begin{enumerate}
		\item The complex numbers $\mathbb{C}$ form a unital $C^{*}$-algebra.
		\item Let $\mathcal{H}$ be a complex Hilbert space with inner product denoted by
			$\langle \cdot, \cdot \rangle$.
			The collection of bounded linear operators $\mathcal{B}(\mathcal{H})$
			is a unital $C^{*}$-algebra, 
			where the norm is the operator norm
			\begin{equation}
				\norm{T}
				\coloneqq
				\sup\{\norm{Tv}_\mathcal{H} \mid v \in \mathcal{H}, \norm{v}_\mathcal{H} \leq 1\}
			\end{equation}
			for all $T \in \mathcal{B}(\mathcal{H})$.
			Note that multiplication is given by the composition of operators,
			and that the involution is the adjoint, defined with respect
			to the inner product by 
			$\langle T^*v,w \rangle = \langle v, Tw \rangle$
			for all $T \in \mathcal{B}(\mathcal{H})$ and $v, w \in \mathcal{H}$.
			Also, we see that $\mathcal{B}(\mathcal{H})$ is non-commutative.
		\item For any positive integer $n$, let $M_n(\mathbb{C})$ denote the set
			of $n \times n$ matrices with elements in $\mathbb{C}$.
			Then $M_n(\mathbb{C})$ is a unital $C^{*}$-algebra,
			where the norm is the operator norm
			\begin{equation}
				\norm{M}
				\coloneqq
				\sup\{\norm{Mv}_2 \mid v \in \mathbb{C}^n, \norm{v}_2 \leq 1\}
			\end{equation}
			where $\norm{\cdot}_2$ is the Euclidean norm on $\mathbb{C}^n$,
			and the involution is given by the conjugate transpose.
			This is a special case of the previous example, where we 
			have chosen $\mathcal{H} = \mathbb{C}^n$ and picked a basis to represent 
			linear transformations as matrices.
		\item Let $X$ be a compact Hausdorff 
			space. 
			%\textbf{CHECK HOW HAUSDORFF IS DEFINED}
			Then the space of
			continuous functions on $X$,
			\begin{equation}
				C(X) \coloneqq \{f : X \rightarrow \mathbb{C} \mid f \text{ is continuous}\}
			\end{equation}
			is a unital $C^{*}$-algebra 
			where the norm is the supremum norm
			\begin{equation}
				\norm{f}_\infty 
				\coloneqq
				\sup\{\lvert f(x) \rvert \mid x \in X\}
			\end{equation}
			such that 
			$(f+g)(x) \coloneqq f(x) + g(x)$,
			$(\lambda{f})(x) \coloneqq \lambda{f(x)}$,
			$(fg)(x) \coloneqq f(x)g(x)$,
			$1(x) \coloneqq 1$ 
			and
			$f^*(x) \coloneqq \overline{f(x)}$
			for all $f, g \in C(X), x \in X$ and $\lambda \in \mathbb{C}$.
			Note, in particular, that $C(X)$ is commutative.
			The supremum norm is indeed a norm since 
			the compactness of $X$
			implies that every continuous function
			$f : X \rightarrow \mathbb{C}$ is bounded, and so
			$\norm{f}_\infty < \infty$
			for all $f \in C(X)$. 
			%by the compactness of $X$.
			The norm is also submultiplicative because
			\begin{equation}
				\lvert fg(x) \rvert
				=
				\lvert f(x)g(x) \rvert
				=
				\lvert f(x) \rvert \lvert g(x) \rvert
				\leq
				\norm{f}_\infty \norm{g}_\infty 
			\end{equation}
			which implies that
			$\norm{fg}_\infty \leq \norm{f}_\infty \norm{g}_\infty$.
			Similar reasoning on the fact that
			$\lvert f(x)\overline{f(x)} \rvert = \lvert f(x) \rvert^2$
			shows that the supremum norm satisfies the $C^*$-identity.
			Finally, the supremum norm is complete because 
			one can show that any Cauchy sequence of continuous 
			functions on $X$
			%a sequence of uniformly Cauchy functions on $X$ 
			converges uniformly on $X$ such that the limit
			is also a continuous function on $X$.
	\end{enumerate}
\end{example}

An important theorem for the $C^*$-algebra of continuous functions
on a compact Hausdorff space $X$
is the Stone-Weierstrass theorem.
Its proof can be found in \citet[Theorem 3.3]{speich}.
First we need the following definition.

\begin{definition}
	Let $A$ be an algebra of functions from some set $X$ to the complex numbers 
	$\mathbb{C}$. We say that $A$ \textbf{separates points} in $X$
	if, for all $x \neq y \in X$, there is a function $f \in A$ 
	such that $f(x) \neq f(y)$.
\end{definition}

Consequently, we have that

\begin{theorem}[Stone-Weierstrass]
	\label{stoneweierstrass}
	Let $X$ be a compact Hausdorff space.
	If $A$ is a unital $^*$-subalgebra of $C(X)$ 
	such that $A$ separates points in $X$, then $A$ is dense in $C(X)$.
	In particular, if $A$ is closed, then $A = C(X)$.
	%\textbf{THIS IS THRM 3.3 of SPEICHER NOTES}
\end{theorem}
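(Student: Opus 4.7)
The plan is to reduce the complex statement to a real-valued, lattice-theoretic approximation argument. First I would let $A_{\mathbb{R}} \coloneqq \{f \in A : f^* = f\}$ denote the self-adjoint (equivalently, real-valued) elements of $A$. Since $A$ is a $^*$-subalgebra, every $f \in A$ decomposes as $\tfrac{1}{2}(f+f^*) + i \cdot \tfrac{1}{2i}(f-f^*)$ with both parts in $A_{\mathbb{R}}$, so it suffices to show that $A_{\mathbb{R}}$ is dense in the real subalgebra of real-valued continuous functions. A short check confirms $A_{\mathbb{R}}$ inherits unitality and point separation from $A$: if $f \in A$ distinguishes $x$ from $y$, then at least one of $\mathrm{Re}(f), \mathrm{Im}(f)$ does so too, and both lie in $A_{\mathbb{R}}$.

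The technical heart of the proof is showing that the sup-norm closure $\overline{A_{\mathbb{R}}}$ is closed under the lattice operations $\max$ and $\min$. Because $\max(f,g) = \tfrac{1}{2}(f+g+|f-g|)$ and $\min(f,g) = \tfrac{1}{2}(f+g-|f-g|)$, it is enough to show that $|f| \in \overline{A_{\mathbb{R}}}$ whenever $f \in \overline{A_{\mathbb{R}}}$. After rescaling so $\|f\|_\infty \leq 1$, this reduces to uniformly approximating the function $t \mapsto |t|$ on $[-1,1]$ by real polynomials (constants are admissible since $1 \in A$). I would do this either by invoking the uniform convergence on $[0,1]$ of the binomial series for $\sqrt{1-s}$, applied at $s = 1 - t^2$, or equivalently by the standard recursion $p_{n+1}(u) = p_n(u) + \tfrac{1}{2}(u - p_n(u)^2)$ starting from $p_0 = 0$, which converges uniformly to $\sqrt{u}$ on $[0,1]$; evaluating at $u = f^2$ then yields $|f|$ as a uniform limit of elements of $A_{\mathbb{R}}$.

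Finally, I would use the lattice structure of $\overline{A_{\mathbb{R}}}$, together with point separation and unitality, to approximate any prescribed $f \in C(X)$ with $f=f^*$ within $\varepsilon > 0$. For distinct $x, y \in X$, one builds $g_{x,y} \in A_{\mathbb{R}}$ with $g_{x,y}(x) = f(x)$ and $g_{x,y}(y) = f(y)$ as an affine combination of $1$ and a function separating $x$ from $y$. For fixed $x$, the sets $\{z \in X : g_{x,y}(z) < f(z) + \varepsilon\}$ form an open cover of $X$ as $y$ varies, so compactness yields a finite subset $F_x$ and hence $g_x \coloneqq \min_{y \in F_x} g_{x,y} \in \overline{A_{\mathbb{R}}}$ with $g_x \leq f + \varepsilon$ pointwise and $g_x(x) = f(x)$. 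A dual compactness argument, taking $\max$ over a finite subcover of $\{w : g_x(w) > f(w) - \varepsilon\}$ as $x$ varies, produces $g \in \overline{A_{\mathbb{R}}}$ with $\|g - f\|_\infty < \varepsilon$, establishing density. The ``in particular'' statement then follows immediately, since a closed set dense in $C(X)$ must equal $C(X)$.

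The main obstacle is the uniform approximation of $|t|$ by polynomials, which is essentially the classical Weierstrass approximation theorem on an interval and requires genuine real-analytic input rather than any of the abstract $C^{*}$-algebra machinery developed earlier in the appendix. Everything else is routine once the lattice property is in hand: the reduction from complex to real is formal, and the final density argument is a standard two-sweep compactness construction.
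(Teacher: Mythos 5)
Your proposal is correct and is the standard classical proof of Stone--Weierstrass: decompose into self-adjoint parts, prove that the sup-norm closure of $A_{\mathbb{R}}$ is a sublattice via uniform polynomial approximation of $\sqrt{u}$ on $[0,1]$ (so that $|f|$, and hence $\max$ and $\min$, land in the closure), and conclude with the two-stage finite-subcover argument; every step you outline, including the use of unitality to build the interpolating functions $g_{x,y}$ and the final ``closed and dense implies equal'' remark, is sound. The paper does not prove this theorem itself but defers to a cited reference, and the argument given there is essentially the one you describe, so there is no substantive difference in approach to report.
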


We also have homomorphisms on $C^*$-algebras:

\begin{definition}
	If $A, B$ are algebras, then a linear map $\phi : A \rightarrow B$ such that
	$\phi(xy) = \phi(x)\phi(y)$ for all $x,y \in A$ is called an \textbf{algebra homomorphism}.

	If $A, B$ are $\bm{^*}$-algebras, then if $\phi : A \rightarrow B$ is an algebra homomorphism that preserves the involution operation,
	that is, $\phi(x^*) = \phi(x)^*$ for all $x \in A$, then $\phi$ is called a $\bm{^*}$\textbf{-homomorphism}.

	If $A, B$ are $\bm{^*}$-algebras, then 
	a bijective $^*$-homomorphism
	$\phi : A \rightarrow B$
	is called a $\bm{^*}$\textbf{-isomorphism}.
\end{definition}

We use the properties of $\mathcal{B}(\mathcal{H})$ 
to obtain representations of
$C^*$-algebras:

\begin{definition}
	A \textbf{representation} of an algebra $A$ is a choice of complex Hilbert space
	$\mathcal{H}$ and an algebra homomorphism
	$\pi : A \rightarrow \mathcal{B}(\mathcal{H})$.

	A $\bm{^*}$-\textbf{representation} of a $^*$-algebra $A$
	is a representation of $A$ that preserves the involution operation.

	A representation is said to be \textbf{faithful} if it is injective.
\end{definition}

There is another important theorem relating to 
the unital $C^*$-algebra of continuous functions on a compact Hausdorff space $X$.
This is the Gelfand--Naimark Theorem that we referenced in Section \ref{noncommgeometry},
and
%that first appeared informally in Theorem \ref{gninformal}, and 
since it is the first
of two theorems by Gelfand and Naimark that appears in this section, 
we will call it Gelfand--Naimark I.
Its proof can be found in \citet[Theorem II.2.2.4]{blackadar2006operator}.

%The Gelfand--Naimark theorem says that every commutative unital $C^*$-algebra 
%is $^*$ isomorphic to the $C^*$-algebra of continuous functions
%on some compact Hausdorff space $X$: see, for example, \textbf{CITE somewhere}

\begin{theorem}[Gelfand--Naimark I]
	Let $A$ be a commutative, unital $C^*$-algebra.
	Then $A$ is $^*$-isomorphic to $C(X)$, where
	\begin{equation}
		X 
		\coloneqq
		\{\phi : A \rightarrow \mathbb{C} \mid 
			\phi \text{ is a non-zero } ^*\text{-homomorphism}\}
	\end{equation}
	is a compact Hausdorff space. 
	The $^*$-isomorphism is given by
	$f : A \rightarrow C(X)$ where $f(a) : X \rightarrow \mathbb{C}$ 
	for all $a \in A$ is such that $f(a)[\phi] = \phi(a)$ for all $\phi \in X$.
\end{theorem}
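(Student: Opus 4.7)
The plan is to construct the claimed $^*$-isomorphism explicitly via the so-called \emph{Gelfand transform}, and then to verify its properties using commutativity of $A$ together with the $C^*$-identity. First, I would equip $X$ with the weak-$*$ topology inherited from the continuous dual $A^*$. To show $X$ is compact Hausdorff, the key observations are that every non-zero $^*$-homomorphism $\phi : A \rightarrow \mathbb{C}$ is automatically continuous with $\norm{\phi} \leq 1$ (a standard fact for characters on a unital commutative Banach algebra), and that $X$ is weak-$*$ closed inside the closed unit ball of $A^*$, since the defining identities $\phi(xy) = \phi(x)\phi(y)$ and $\phi(1) = 1$ are preserved under pointwise limits. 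Hausdorffness is automatic from the weak-$*$ topology, and compactness follows from the Banach--Alaoglu theorem applied to that closed unit ball.

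Next, I would define the Gelfand transform $f : A \rightarrow C(X)$ by $f(a)[\phi] \coloneqq \phi(a)$. The very definition of the weak-$*$ topology ensures that each $f(a)$ is continuous on $X$, so $f$ is well-defined, and it is immediate from the fact that each $\phi \in X$ is a unital $^*$-homomorphism that $f$ itself is a unital $^*$-homomorphism of $^*$-algebras.

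The main obstacle, and the technical heart of the argument, is showing that $f$ is isometric. The strategy is to go via the spectral radius $r(a) \coloneqq \sup\{|\lambda| \mid \lambda \in \sigma(a)\}$, where $\sigma(a)$ denotes the spectrum of $a$. A standard identification in commutative unital Banach algebras gives $\sigma(a) = \{\phi(a) \mid \phi \in X\}$, so $r(a) = \norm{f(a)}_\infty$. For self-adjoint $a$, the $C^*$-identity yields $\norm{a^2} = \norm{a^*a} = \norm{a}^2$, and iterating this together with Gelfand's formula $r(a) = \lim_{n} \norm{a^n}^{1/n}$ gives $\norm{a} = r(a)$. For arbitrary $a \in A$, one then computes $\norm{a}^2 = \norm{a^*a} = r(a^*a) = \norm{f(a^*a)}_\infty = \norm{f(a)}_\infty^2$, using the $C^*$-identity in $C(X)$ at the last step. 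Hence $f$ is isometric, and in particular injective.

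Finally, for surjectivity I would invoke the Stone--Weierstrass theorem (Theorem~\ref{stoneweierstrass}). The image $f(A)$ is a unital $^*$-subalgebra of $C(X)$ that separates points, since $\phi_1 \neq \phi_2$ in $X$ means there exists $a \in A$ with $f(a)[\phi_1] = \phi_1(a) \neq \phi_2(a) = f(a)[\phi_2]$. Thus $f(A)$ is dense in $C(X)$; but isometry of $f$ makes $f(A)$ complete and hence closed in the supremum norm, so $f(A) = C(X)$, completing the identification $A \cong C(X)$.
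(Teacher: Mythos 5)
The paper does not actually prove this theorem: it is quoted as background with a pointer to \citet[Theorem II.2.2.4]{blackadar2006operator}, so there is no in-paper argument to compare against. Your proposal is the standard proof via the Gelfand transform --- weak-$*$ compactness of the character space from Banach--Alaoglu, isometry from the spectral radius formula together with the $C^*$-identity, and surjectivity from Stone--Weierstrass --- and this is essentially the argument in the cited reference; the overall structure is sound. One step deserves explicit justification given how the statement defines $X$. The identification $\sigma(a) = \{\phi(a) \mid \phi \in \widehat{A}\}$ from commutative Banach algebra theory is a statement about the set $\widehat{A}$ of \emph{all} non-zero algebra homomorphisms $A \rightarrow \mathbb{C}$, whereas the theorem defines $X$ as the set of non-zero $^*$-homomorphisms, a priori a subset. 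If $X \subsetneq \widehat{A}$ you would only obtain $\norm{f(a)}_\infty \leq r(a)$, which breaks the isometry argument. You therefore need the standard lemma that every character of a commutative unital $C^*$-algebra automatically preserves the involution, i.e.\ $X = \widehat{A}$; this reduces to showing that self-adjoint elements have real spectrum, which follows from the $C^*$-identity via the estimate $\lvert \phi(a + it1) \rvert^2 \leq \norm{a + it1}^2 \leq \norm{a}^2 + t^2$ for self-adjoint $a$ and all $t \in \mathbb{R}$. It would also be worth remarking that $X \neq \emptyset$ when $A \neq 0$ (existence of characters via maximal ideals and Gelfand--Mazur), since this is implicitly used when passing from $r(a) = \norm{a}$ to the claim that $f$ is isometric. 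With those two standard facts inserted, your argument is complete and correct.
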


We now look at tensor products of $C^*$-algebras.
First, we have the following definition.
\begin{definition}
	Let $A, B$ be $^*$-algebras.
	The \textbf{algebraic tensor product}, written $A \odot B$,
	is defined to be the algebraic tensor product of the vector spaces
	$A$ and $B$ 
	together with a multiplication operation that is given by
	\begin{equation}
		(a_1 \odot b_1)(a_2 \odot b_2)
		=
		a_1a_2 \odot b_1b_2
	\end{equation}
	and an involution operation that is given by
	\begin{equation}
		(a \odot b)^*
		=
		a^* \odot b^*
	\end{equation}
	In particular, $A \odot B$ is a $^*$-algebra.
\end{definition}
To define a tensor product on $C^*$-algebras $A$ and $B$
such that the result is also a $C^*$-algebra, we need to be able to define
a norm $\norm{\cdot}$ that satisfies the $C^*$-identity on $A \odot B$,
and then complete $(A \odot B, \norm{\cdot})$ to obtain a $C^*$-algebra.
Any norm that satisfies the $C^*$-identity
is said to be a $\bm{C^*}$\textbf{-norm}.
Although it can be shown,
as a consequence of the Gelfand--Naimark--Segal Theorem,
that $C^*$-norms on algebraic tensor products
of $C^*$-algebras always exist 
--- see the definition of the spatial norm below ---
%\textbf{POINT READER SOMEWHERE BELOW?}, 
one of the main difficulties behind this construction 
is that in many cases there can be more than one
$C^*$-norm that can be associated with $A \odot B$
\citep{courtney2023notes},
%where $A$ and $B$ are $C^*$-algebras
%\textbf{CITATION?}, 
and so in those cases
there are many possible definitions of a tensor product
on $C^*$-algebras $A$ and $B$ such that
the result is a $C^*$-algebra.
%Hence the theory is rather involved.
%complicated and long.
Hence, in what follows, we choose only to outline the main ideas 
that lead to the construction of a particular tensor product called the 
\textbf{minimal tensor product}
since this tensor product will be used in the definition 
of a compact matrix quantum group.

We start with another theorem by Gelfand--Naimark,
which is technically a corollary of the Gelfand--Naimark--Segal Theorem.
Its proof can be found in \citet[Theorem 5.19]{speich}.

\begin{theorem}[Gelfand--Naimark II]
	Every $C^*$-algebra $A$ has a faithful $^*$-representation 
	$\pi : A \rightarrow \mathcal{B}(\mathcal{H})$
	on some Hilbert space $\mathcal{H}$.
\end{theorem}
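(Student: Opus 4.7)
The plan is to produce the faithful representation via the \textbf{Gelfand--Naimark--Segal (GNS) construction}, assembling a large family of cyclic $^*$-representations into one direct-sum representation. If $A$ is not unital, I would first adjoin a unit to form $\tilde{A}$, carry out the construction there, and restrict back to $A$, so without loss of generality assume $A$ is unital.

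Given any state $\omega$ on $A$ (a positive linear functional with $\omega(1)=1$), I would build a Hilbert space $\mathcal{H}_\omega$ and a $^*$-representation $\pi_\omega : A \to \mathcal{B}(\mathcal{H}_\omega)$ as follows. The Cauchy--Schwarz inequality for the positive sesquilinear form $(a,b) \mapsto \omega(b^*a)$ shows that $N_\omega \coloneqq \{a \in A : \omega(a^*a) = 0\}$ is a closed left ideal of $A$. The quotient $A/N_\omega$ then carries a well-defined inner product $\langle [a],[b]\rangle \coloneqq \omega(b^*a)$; let $\mathcal{H}_\omega$ be its completion (using the completion construction from Appendix \ref{gentopology}). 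Left multiplication $\pi_\omega(a)[b] \coloneqq [ab]$ extends to a bounded operator on $\mathcal{H}_\omega$ (the $C^*$-identity together with positivity of $\omega$ yields $\|\pi_\omega(a)\| \le \|a\|$), and one checks that $\pi_\omega$ is a $^*$-homomorphism. Finally, set $\pi \coloneqq \bigoplus_\omega \pi_\omega$ on $\mathcal{H} \coloneqq \bigoplus_\omega \mathcal{H}_\omega$, where the sum runs over all states of $A$; this is visibly a $^*$-homomorphism into $\mathcal{B}(\mathcal{H})$.

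The main obstacle is showing that $\pi$ is \emph{faithful}, i.e.\ that for every nonzero $a \in A$ there is some state $\omega$ with $\pi_\omega(a) \neq 0$. Given $a \neq 0$, the self-adjoint element $b \coloneqq a^*a$ satisfies $\|b\| = \|a\|^2 > 0$ by the $C^*$-identity, so it suffices to produce a state $\omega$ with $\omega(b) > 0$. The closed unital commutative $C^*$-subalgebra $B$ generated by $b$ and $1$ is, by Gelfand--Naimark I, $^*$-isomorphic to $C(X)$ for some compact Hausdorff $X$, under which $b$ corresponds to a nonzero continuous function. Evaluating at a point where this function attains its maximum absolute value yields a character $\phi_0 : B \to \mathbb{C}$ with $\phi_0(b) = \|b\|$, and every such character is automatically a state of $B$. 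A Hahn--Banach extension then produces a norm-one linear functional $\omega$ on $A$ agreeing with $\phi_0$ on $B$; since on a unital $C^*$-algebra any linear functional with $\|\omega\| = \omega(1) = 1$ is automatically positive, $\omega$ is a state of $A$ with $\omega(a^*a) = \|a\|^2 > 0$. Consequently $\|\pi_\omega(a)[1_A]\|^2 = \omega(a^*a) > 0$, so $\pi(a) \neq 0$, and $\pi$ is faithful.
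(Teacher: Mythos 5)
Your proposal is correct and is exactly the route the paper points to: the paper does not prove this theorem itself but cites \citet[Theorem 5.19]{speich} and explicitly describes the result as a corollary of the Gelfand--Naimark--Segal theorem, which is precisely the direct-sum-of-GNS-representations argument you give. The only point worth polishing is the sign in the separation step: a character on $C^*(b,1)$ a priori gives $|\phi_0(b)| = \|b\|$, and one concludes $\omega(a^*a) = \|b\| > 0$ because the extended functional $\omega$ is positive (or because $a^*a \geq 0$), but this is standard and not a gap.
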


Hence, for $C^*$-algebras $A$ and $B$, we have faithful $^*$-representations
$\pi_A : A \rightarrow \mathcal{B}(\mathcal{H}_A)$
and
$\pi_B : B \rightarrow \mathcal{B}(\mathcal{H}_B)$.
One can show 
\citep[Corollary 11.15]{courtney2023notes}
that this induces a faithful $^*$-representation
$\pi_A \odot \pi_B : A \odot B \rightarrow \mathcal{B}(\mathcal{H}_A \otimes \mathcal{H}_B)$
such that 
$(\pi_A \odot \pi_B)(a \odot b) = \pi_A(a) \otimes \pi_B(b)$
for all $a \in A$ and $b \in B$.

Hence we can define a norm $\norm{\cdot}_*$ on $A \odot B$, 
known as the \textbf{spatial norm},
that is given by
\begin{equation}
	\norm{x}_* = \norm{(\pi_A \odot \pi_B)(x)}_{\mathcal{B}(\mathcal{H}_A \otimes \mathcal{H}_B)}
\end{equation}
which is a $C^*$-norm by the fact that the norm on $\mathcal{B}(\mathcal{H}_A \otimes \mathcal{H}_B)$
is a $C^*$-norm together with the injectivity of $\pi_A \odot \pi_B$.

%One can show 
\citet{Takesaki2002} showed
that the spatial norm is independent
of the faithful $^*$-representation, and that it is the minimal
$C^*$-norm on $A \odot B$; that is, for all $C^*$-norms $\gamma$ 
on $A \odot B$,
we have that
\begin{equation}
	\norm{x}_* \leq \gamma(x)
\end{equation}
for all $x \in A \odot B$.
%\textbf{NEED A CITATION - TAKESAKI 1979, THEORY OF OPERATOR ALGEBRAS 1?}
Hence we call $\norm{x}_*$ the \textbf{minimum norm}, and write it as
$\norm{x}_{\min}$.

%Consequently, we define the \textbf{minimal norm} by
%\begin{equation}
	%\norm{x}_{\min} = \sup\{\norm{(\pi_A \odot \pi_B)(x)}_{\mathcal{B}(H_A \otimes H_B)}\}
%\end{equation}
%where the supremum is taken over all faithful $^*$-representations 
%\textbf{DO THEY NEED TO BE FAITHFUL, GROMADA DOESN'T SAY THIS?}
%$\pi_A : A \rightarrow \mathcal{B}(H_A)$
%and
%$\pi_B : B \rightarrow \mathcal{B}(H_B)$.
%One can show that the minimal norm is a $C^*$-norm.
%\textbf{11.24 and 11.25 of Cstar notes: CITATION}.
%In fact, Takesaki \textbf{THIS IS THEOREM 11.37 OF THESE NOTES}
%showed that the minimal norm
%is equal to the spatial norm.

Consequently, the completion of $A \odot B$, for $C^*$-algebras $A$ and $B$,
with respect to the minimal norm $\norm{x}_{\min}$, 
is called the \textbf{minimal tensor product}
and is denoted by $A \otimes_{\min} B$.

One can also show 
\citep[Corollary 11.29]{courtney2023notes}
that 
%\textbf{INSERT REF TO COROLLARY 11.29}, 
for a pair of $^*$-homomorphisms 
$\phi_1 : A_1 \rightarrow B_1$
and
$\phi_2 : A_2 \rightarrow B_2$
on $^*$-algebras $A_1, A_2, B_1, B_2$,
the algebraic tensor product
$\phi_1 \odot \phi_2 : A_1 \odot A_2 \rightarrow B_1 \odot B_2$
extends to a $^*$-homomorphism
$\phi_1 \otimes_{\min} \phi_2 : A_1 \otimes_{\min} A_2 \rightarrow B_1 \otimes_{\min} B_2$.
%\textbf{IS THIS ONE NEEDED?}

We now focus on $M_n(A)$, the set of $n \times n$ matrices with entries in 
a $C^*$-algebra $A$, that is
\begin{equation}
	M_n(A)
	\coloneqq
	\{(a_{i,j}) \mid a_{i,j} \in A, 1 \leq i,j \leq n\}
\end{equation}
This comes with a natural involution that is given by
$(a_{i,j})^* = (a_{j,i}^*)$ for all $(a_{i,j}) \in M_n(A)$.

We use the following definition to create a $C^*$-norm on $M_n(A)$.

\begin{definition} 
	Suppose that $\phi: A \rightarrow B$ is a linear map
	between $^*$-algebras $A$ and $B$.
	Then, for all positive integers $n$, we define the linear map
	$\phi^{(n)} : M_n(A) \rightarrow M_n(B)$ to be
	$\phi^{(n)}((a_{i,j})) = (\phi(a_{i,j}))$.
	The map $\phi^{(n)}$ is called a \textbf{matrix amplification} of $\phi$.
\end{definition}

We create the $C^*$-norm on $M_n(A)$ as follows.
By Gelfand--Naimark II,
%\textbf{POTENTIALLY REF THE THEOREM IN THE TEXT}, 
we have a faithful $^*$-representation 
$\pi : A \rightarrow \mathcal{B}(\mathcal{H})$ 
for some Hilbert space $\mathcal{H}$.
This induces a faithful
$^*$-representation
$\pi^{(n)} : M_n(A) \rightarrow \mathcal{B}(\mathcal{H}^n)$
since one can show 
\citet[Section 6.1]{rordam2000}
that
$M_n(\mathcal{B}(\mathcal{H}))$ is $^*$-isomorphic to $\mathcal{B}(\mathcal{H}^n)$.
Then we can define a function $\norm{\cdot}: M_n(A) \rightarrow \mathbb{C}$ by
\begin{equation} \label{normMnA}
	\norm{(a_{i,j})} 
	\coloneqq
	\norm{\pi^{(n)}((a_{i,j}))}_{\mathcal{B}(\mathcal{H}^n)}
\end{equation}
which is a $C^*$-norm 
since $\pi^{(n)}$ is injective and
the norm on $\mathcal{B}(\mathcal{H}^n)$
is a $C^*$-norm. 
%which is a norm by the injectivity of $\pi^{(n)}$ \textbf{WHY?}
%and satisfies the $C^*$-identity because
%$(\pi^{(n)})^{-1} : \pi^{(n)}(M_n(A)) \rightarrow M_n(A)$
%is a $^*$-homomorphism \textbf{WHY?}.

One can show 
the following inequality,
whose proof is outlined in 
\citet[Exercise 1.13]{rordam2000}.

\begin{lemma} \label{BHinequality}
	Let $(T_{i,j}) \in M_n(\mathcal{B}(H))$.
	Then
	\begin{equation}
		\norm{T_{i,j}}_{\mathcal{B}(\mathcal{H})}
		\leq
		\norm{(T_{i,j})}_{M_n(\mathcal{B}(\mathcal{H}))}
		\leq
		\sum_{i,j} \norm{T_{i,j}}_{\mathcal{B}(\mathcal{H})}
	\end{equation}
\end{lemma}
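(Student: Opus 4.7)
The plan is to work directly with the identification $M_n(\mathcal{B}(\mathcal{H})) \cong \mathcal{B}(\mathcal{H}^n)$, where the matrix $(T_{i,j})$ acts on $\tilde v = (v_1, \ldots, v_n) \in \mathcal{H}^n$ by ordinary matrix--vector multiplication, so that the $i$-th component of $(T_{i,j}) \tilde v$ is $\sum_j T_{i,j} v_j$, and the norm on $\mathcal{H}^n$ is given by $\norm{\tilde v}_{\mathcal{H}^n}^2 = \sum_i \norm{v_i}_\mathcal{H}^2$. Both inequalities then reduce to manipulating this operator norm against well-chosen test vectors.

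For the lower bound, I would fix indices $i_0, j_0 \in [n]$ and, for any $v \in \mathcal{H}$ with $\norm{v}_\mathcal{H} \leq 1$, form the vector $\tilde v \in \mathcal{H}^n$ that has $v$ in coordinate $j_0$ and zero in all other coordinates. Then $\norm{\tilde v}_{\mathcal{H}^n} = \norm{v}_\mathcal{H} \leq 1$, and the $i_0$-th coordinate of $(T_{i,j}) \tilde v$ equals $T_{i_0,j_0} v$. Since the norm of a single coordinate is bounded above by the full $\mathcal{H}^n$-norm, I get $\norm{T_{i_0,j_0} v}_\mathcal{H} \leq \norm{(T_{i,j}) \tilde v}_{\mathcal{H}^n} \leq \norm{(T_{i,j})}_{M_n(\mathcal{B}(\mathcal{H}))}$. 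Taking the supremum over $v$ yields $\norm{T_{i_0,j_0}}_{\mathcal{B}(\mathcal{H})} \leq \norm{(T_{i,j})}_{M_n(\mathcal{B}(\mathcal{H}))}$, as required.

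For the upper bound, I would take an arbitrary $\tilde v = (v_1, \ldots, v_n) \in \mathcal{H}^n$ with $\norm{\tilde v}_{\mathcal{H}^n} \leq 1$, so that $\norm{v_j}_\mathcal{H} \leq 1$ for every $j$ (since $\norm{v_j}_\mathcal{H}^2 \leq \sum_k \norm{v_k}_\mathcal{H}^2 \leq 1$). Using the comparison $\norm{(w_1,\ldots,w_n)}_{\mathcal{H}^n} \leq \sum_i \norm{w_i}_\mathcal{H}$ between the $\ell^2$- and $\ell^1$-norms of the coordinate norms, followed by the triangle inequality inside each coordinate, I obtain
\[
\norm{(T_{i,j}) \tilde v}_{\mathcal{H}^n} \leq \sum_i \Bigl\lVert \sum_j T_{i,j} v_j \Bigr\rVert_\mathcal{H} \leq \sum_{i,j} \norm{T_{i,j}}_{\mathcal{B}(\mathcal{H})} \norm{v_j}_\mathcal{H} \leq \sum_{i,j} \norm{T_{i,j}}_{\mathcal{B}(\mathcal{H})}.
\]
Taking the supremum over $\tilde v$ gives the second inequality.

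Neither direction involves a serious obstacle; the only mild care needed is in the upper bound, where one must remember to convert the $\mathcal{H}^n$-norm into a sum of coordinate norms before applying the triangle inequality and the trivial estimate $\norm{v_j}_\mathcal{H} \leq 1$. Everything else is bookkeeping with the operator-norm definition and the Hilbert-space direct sum.
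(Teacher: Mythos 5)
Your proof is correct, and it is exactly the standard argument: the paper does not supply its own proof of this lemma but defers to the outline in R{\o}rdam's Exercise 1.13, which proceeds in the same way (testing the operator norm on $\mathcal{B}(\mathcal{H}^n)$ against vectors supported in a single coordinate for the lower bound, and combining the $\ell^2\leq\ell^1$ comparison with the triangle inequality for the upper bound). Both steps check out, including the observation that $\norm{v_j}_\mathcal{H}\leq\norm{\tilde v}_{\mathcal{H}^n}$ for each coordinate.
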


Lemma \ref{BHinequality} implies that the image of $M_n(A)$ under $\pi^{(n)}$,
which is $^*$-isomorphic to $M_n(A)$ since $\pi^{(n)}$ is injective,
is a closed $^*$-subalgebra of the $C^*$-algebra $\mathcal{B}(\mathcal{H}^n)$.
Hence $M_n(A)$ is itself a $C^*$-algebra.
Moreover, the norm on $M_n(A)$ defined in (\ref{normMnA}) is unique
because a $^*$-algebra admits at most one norm 
that makes it into a $C^*$-algebra
\citet[Corollary 1.2.8]{lin2001}.

Expressing this result as a theorem, we have that

\begin{theorem}
	If $A$ is a $C^*$-algebra, then $M_n(A)$ is also a $C^*$-algebra
	whose norm is defined in (\ref{normMnA}).
\end{theorem}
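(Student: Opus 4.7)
The plan is to transfer the $C^{*}$-algebra structure from $\mathcal{B}(\mathcal{H}^{n})$ to $M_n(A)$ through an injective $^{*}$-homomorphism, and then use Lemma \ref{BHinequality} to show that the image is closed. I will proceed in four steps.

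First, I would verify that $M_n(A)$ is a $^{*}$-algebra under the usual componentwise addition and scalar multiplication, matrix multiplication $((a_{i,j})(b_{i,j}))_{i,j} = \sum_k a_{i,k} b_{k,j}$, and the involution $(a_{i,j})^{*} = (a_{j,i}^{*})$. The submultiplicativity of the product and the involution axioms follow directly from the corresponding properties in $A$; this is purely algebraic and routine.

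Second, by Gelfand--Naimark II there is a faithful $^{*}$-representation $\pi : A \to \mathcal{B}(\mathcal{H})$, and its $n$-fold matrix amplification $\pi^{(n)} : M_n(A) \to M_n(\mathcal{B}(\mathcal{H})) \cong \mathcal{B}(\mathcal{H}^{n})$ is a $^{*}$-homomorphism that is injective (if $\pi(a_{i,j}) = 0$ for every entry, then each $a_{i,j} = 0$ by injectivity of $\pi$). Setting $\|(a_{i,j})\| := \|\pi^{(n)}((a_{i,j}))\|_{\mathcal{B}(\mathcal{H}^{n})}$ gives a norm on $M_n(A)$, and it is automatically a $C^{*}$-norm: the $C^{*}$-identity
\[
\|(a_{i,j})^{*}(a_{i,j})\| = \|(a_{i,j})\|^{2}
\]
is pulled back from $\mathcal{B}(\mathcal{H}^{n})$ through the $^{*}$-homomorphism $\pi^{(n)}$.

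Third, I would prove completeness, which is the one real step. Suppose $(A^{(k)}) = ((a_{i,j}^{(k)}))$ is a Cauchy sequence in $M_n(A)$. Applying Lemma \ref{BHinequality} to $T^{(k)}_{i,j} := \pi(a_{i,j}^{(k)} - a_{i,j}^{(l)})$, the lower bound
\[
\|\pi(a_{i,j}^{(k)} - a_{i,j}^{(l)})\|_{\mathcal{B}(\mathcal{H})} \leq \|A^{(k)} - A^{(l)}\|
\]
shows that each entry sequence $(\pi(a_{i,j}^{(k)}))$ is Cauchy in $\mathcal{B}(\mathcal{H})$. Since $\pi$ is an injective $^{*}$-homomorphism between $C^{*}$-algebras it is isometric, so $(a_{i,j}^{(k)})$ is Cauchy in $A$ and hence converges to some $a_{i,j} \in A$ by completeness of $A$. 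The upper bound in Lemma \ref{BHinequality} then forces $\pi^{(n)}(A^{(k)}) \to \pi^{(n)}((a_{i,j}))$ in $\mathcal{B}(\mathcal{H}^{n})$, so $A^{(k)} \to (a_{i,j})$ in $M_n(A)$. Equivalently, the image $\pi^{(n)}(M_n(A))$ is a closed $^{*}$-subalgebra of $\mathcal{B}(\mathcal{H}^{n})$ and is therefore a $C^{*}$-algebra, and $M_n(A)$ inherits this structure via the isometric isomorphism $\pi^{(n)}$.

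Finally, uniqueness of the norm is immediate from the cited fact \citet[Corollary 1.2.8]{lin2001} that a $^{*}$-algebra admits at most one $C^{*}$-norm; in particular the construction above does not depend on the choice of faithful $^{*}$-representation $\pi$. The main obstacle is really bookkeeping around the completeness argument, and everything hinges on the two-sided estimate of Lemma \ref{BHinequality}, which simultaneously controls entries by matrix norms and matrix norms by sums of entries.
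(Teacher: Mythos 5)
Your proposal is correct and follows essentially the same route as the paper: both invoke Gelfand--Naimark II to obtain the faithful amplification $\pi^{(n)}$, pull back the $C^*$-norm from $\mathcal{B}(\mathcal{H}^n)$, and use the two-sided estimate of Lemma \ref{BHinequality} to conclude that the image is a closed $^*$-subalgebra, hence complete. The only difference is that you spell out the completeness argument entry-by-entry (using that an injective $^*$-homomorphism between $C^*$-algebras is isometric), whereas the paper asserts the closedness directly from the lemma.
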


One can also show the following result.

\begin{lemma}
	Suppose that $A$ is a $C^*$-algebra and let $n$ be any positive integer.
	Define a map $\phi : M_n(A) \rightarrow M_n(\mathbb{C}) \odot A$ by
	\begin{equation} \label{phiMnCAiso}
		\phi((a_{i,j}))
		=
		\sum_{i,j = 1}^n E_{i,j} \odot a_{i,j}
	\end{equation}
	where $E_{i,j}$ is the matrix unit with a $1$ in the $(i,j)$-entry, and $0$ otherwise.
	Then $\phi$ is a $^*$-isomorphism.
\end{lemma}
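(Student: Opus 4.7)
The plan is to verify each of the four properties that together make $\phi$ a $^*$-isomorphism: $\mathbb{C}$-linearity, multiplicativity, compatibility with the involution, and bijectivity. The map is constructed to be the obvious identification of an entry in position $(i,j)$ of a matrix in $M_n(A)$ with the elementary tensor $E_{i,j} \odot a_{i,j}$, so each of these should reduce to a direct computation using the basic relations $E_{i,k}E_{k',j} = \delta_{k,k'}E_{i,j}$ and $E_{i,j}^{*} = E_{j,i}$ in $M_n(\mathbb{C})$.

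Linearity is immediate from the definition since the sum in (\ref{phiMnCAiso}) is $\mathbb{C}$-linear in each entry $a_{i,j}$. For multiplicativity, I would expand $(a_{i,j})(b_{i,j}) = \left(\sum_k a_{i,k}b_{k,j}\right)$ and then apply $\phi$ to obtain $\sum_{i,j,k} E_{i,j} \odot a_{i,k}b_{k,j}$. On the other side, $\phi((a_{i,j}))\phi((b_{i,j})) = \sum_{i,k,k',j} (E_{i,k}E_{k',j}) \odot (a_{i,k}b_{k',j})$, which collapses via $E_{i,k}E_{k',j} = \delta_{k,k'}E_{i,j}$ to the same expression. For the involution, starting from $(a_{i,j})^{*} = (a_{j,i}^{*})$ gives $\phi((a_{i,j})^{*}) = \sum_{i,j} E_{i,j} \odot a_{j,i}^{*}$, and swapping the names of the summation indices $i \leftrightarrow j$ yields $\sum_{i,j} E_{j,i} \odot a_{i,j}^{*}$, which is exactly $\phi((a_{i,j}))^{*}$ by the definition of the involution on $M_n(\mathbb{C}) \odot A$.

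For bijectivity, the key observation is that $\{E_{i,j}\}_{1 \le i,j \le n}$ is a $\mathbb{C}$-basis for $M_n(\mathbb{C})$, so by the universal property of the algebraic tensor product every element of $M_n(\mathbb{C}) \odot A$ can be written uniquely in the form $\sum_{i,j} E_{i,j} \odot a_{i,j}$ with $a_{i,j} \in A$. This lets us define the inverse explicitly by sending such a sum to the matrix $(a_{i,j}) \in M_n(A)$, and checking $\phi \circ \phi^{-1} = \mathrm{id}$ and $\phi^{-1} \circ \phi = \mathrm{id}$ is immediate. There is no real obstacle in this argument: the only point requiring care is the uniqueness-of-representation step in the bijectivity argument, and even that follows from elementary linear algebra once one recognises that $M_n(\mathbb{C}) \odot A$ is the free left $A$-module on the basis $\{E_{i,j}\}$.
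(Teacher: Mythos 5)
Your proof is correct. The paper states this lemma without proof (it is introduced with ``One can also show the following result''), so there is no argument to compare against, but your direct verification---linearity and $*$-preservation from the bilinearity and involution of $\odot$, multiplicativity from $E_{i,k}E_{k',j} = \delta_{k,k'}E_{i,j}$, and bijectivity from the unique representation $\sum_{i,j} E_{i,j} \odot a_{i,j}$ afforded by the fact that $\{E_{i,j}\}$ is a basis of $M_n(\mathbb{C})$---is exactly the standard argument the authors intend the reader to supply. The only cosmetic quibble is that the uniqueness of the representation is not really an instance of the universal property of the tensor product; it is the elementary fact that $V \odot A \cong \bigoplus_{i} A$ as vector spaces when one fixes a basis of $V$, which is what your final sentence correctly identifies.
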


We use the $^*$-isomorphism between $M_n(A)$ and $M_n(\mathbb{C}) \odot A$
to obtain the following isomorphism.

\begin{proposition}
	Suppose that $A$ is a $C^*$-algebra and let $n$ be any positive integer.
	Then there is a unique $C^*$-norm on the algebraic tensor product
	$M_n(\mathbb{C}) \odot A$.
	Hence we write $M_n(\mathbb{C}) \otimes A$, and have that
	\begin{equation}
		M_n(\mathbb{C}) \otimes A 	
		=
		M_n(\mathbb{C}) \odot A 	
		\cong
		M_n(A)
	\end{equation}
\end{proposition}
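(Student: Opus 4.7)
The plan is to use the $^*$-isomorphism $\phi : M_n(A) \to M_n(\mathbb{C}) \odot A$ from the previous lemma to transport structure between the two objects, and then invoke the uniqueness result for $C^*$-norms cited just above the statement.

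First I would define a norm $\norm{\cdot}_\phi$ on $M_n(\mathbb{C}) \odot A$ by declaring
\begin{equation}
\norm{x}_\phi \coloneqq \norm{\phi^{-1}(x)}_{M_n(A)}
\end{equation}
for all $x \in M_n(\mathbb{C}) \odot A$, where the norm on $M_n(A)$ is the unique $C^*$-norm established in the immediately preceding theorem via the matrix amplification $\pi^{(n)}$. Because $\phi$ is a $^*$-isomorphism of $^*$-algebras, this norm inherits submultiplicativity, the $C^*$-identity, and completeness directly from $M_n(A)$. This establishes existence of at least one $C^*$-norm on $M_n(\mathbb{C}) \odot A$.

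Next I would address uniqueness. The key input is the cited fact from \citet[Corollary 1.2.8]{lin2001} that a $^*$-algebra admits at most one norm with respect to which it is a $C^*$-algebra. Applied to $M_n(\mathbb{C}) \odot A$, this immediately yields that $\norm{\cdot}_\phi$ is the only such norm; hence we are entitled to the notation $M_n(\mathbb{C}) \otimes A$ without ambiguity as to which completion we mean.

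Finally, since $\norm{\cdot}_\phi$ already makes $M_n(\mathbb{C}) \odot A$ into a $C^*$-algebra (hence complete), the completion is tautologically the space itself, so $M_n(\mathbb{C}) \otimes A = M_n(\mathbb{C}) \odot A$. Composing with the $^*$-isomorphism $\phi$ from the previous lemma then gives the required identification $M_n(\mathbb{C}) \otimes A \cong M_n(A)$. The only mildly delicate point is verifying that the transported norm genuinely satisfies the $C^*$-identity on the algebraic tensor product, but since $\phi$ preserves both multiplication and involution this reduces to the corresponding identity in $M_n(A)$, so there is no real obstacle; the proof is essentially a bookkeeping exercise on top of the previous lemma and the uniqueness result for $C^*$-norms.
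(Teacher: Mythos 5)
Your proposal is correct and follows essentially the same route as the paper: transport the $C^*$-norm from $M_n(A)$ to $M_n(\mathbb{C}) \odot A$ via the $^*$-isomorphism $\phi$ (the paper writes this transported norm explicitly as $\norm{(\lambda_{i,j}) \odot a} = \norm{(\lambda_{i,j}a)}$), observe that completeness is inherited so the algebraic tensor product is already a $C^*$-algebra, and conclude uniqueness from \citet[Corollary 1.2.8]{lin2001}. No substantive difference.
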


\begin{proof}
	The $^*$-isomorphism together with the $C^*$-norm given on $M_n(A)$
	defines the existence of a $C^*$-norm on $M_n(\mathbb{C}) \odot A$,
	namely
	\begin{equation}
		\norm{(\lambda_{i,j}) \odot a}
		=
		\norm{(\lambda_{i,j}a)}
	\end{equation}
	$M_n(\mathbb{C}) \odot A$
	is now $^*$-isomorphic to a closed $^*$-subalgebra
	of the $C^*$-algebra $M_n(A)$, namely $M_n(A)$ itself,
	hence 
	%is already complete with respect to this norm, and so 
	it is a $C^*$-algebra.
	The result is now immediate 
	because a $^*$-algebra admits at most one norm making it into a $C^*$-algebra.
\end{proof}

This means that we can use the $^*$-isomorphism $\phi$
to identify elements of $M_n(A)$
with elements of $M_n(\mathbb{C}) \otimes A$, namely
\begin{equation}
	(a_{i,j}) \leftrightarrow \sum_{i,j = 1}^n E_{i,j} \otimes a_{i,j}
\end{equation}

We now turn to the concept of \textbf{universal} $\bm{C^*}$\textbf{-algebras}.
We know that we can create $C^*$-algebras by starting with
a $^*$-algebra, finding a $C^*$-norm for it, and then completing the pair.
We would like to be able to create $C^*$-algebras 
using sets of generators and relations.
We follow the presentation that is given in \citet[Chapter 6]{speich}.

We begin by creating a $^*$-algebra as follows.

\begin{definition}
	Let $E = \{x_i \mid i \in I\}$ be a set of symbols over an index set $I$.
	Let $E^* = \{x_i^* \mid i \in I\}$ be another set of symbols 
	that is disjoint from $E$. Then
	\begin{itemize}
		\item	A \textbf{non-commutative monomial} in $E \cup E^*$ is a word
			of non-commuting symbols
			$x_{i_1}^{\alpha_1}\cdots{x_{i_m}^{\alpha_m}}$
			for some positive integer $m$
			such that $i_1, \dots, i_m \in I$
			and $\alpha_j \in \{\pm 1\}$ for all $i \in [m]$.
		\item	A \textbf{non-commutative polynomial} in $E \cup E^*$ is
			a formal linear combination of non-commutative monomials
			in $E \cup E^*$ with coefficients in $\mathbb{C}$.
		\item	We can define a \textbf{concatenation operation} on 
			non-commutative monomials in $E \cup E^*$:
			\begin{equation} \label{freeconcat}
				(x_{i_1}^{\alpha_1}\cdots{x_{i_m}^{\alpha_m}})	
				\cdot
				(x_{j_1}^{\beta_1}\cdots{x_{j_m}^{\beta_n}})	
				=
				x_{i_1}^{\alpha_1}\cdots{x_{i_m}^{\alpha_m}}
				x_{j_1}^{\beta_1}\cdots{x_{j_m}^{\beta_n}}
			\end{equation}
			for any non-commutative monomials
			$x_{i_1}^{\alpha_1}\cdots{x_{i_m}^{\alpha_m}}$
			and
			$x_{j_1}^{\beta_1}\cdots{x_{j_m}^{\beta_n}}$
			in $E \cup E^*$.
			This operation is extended linearly to 
			to non-commutative polynomials in $E \cup E^*$.
		\item	We can also define an \textbf{involution operation} on 
			non-commutative monomials in $E \cup E^*$:
			\begin{equation} \label{freeinvolution}
				(x_{i_1}^{\alpha_1}\cdots{x_{i_m}^{\alpha_m}})^*
				=
				x_{i_1}^{{\alpha_1}^*}\cdots{x_{i_m}^{{\alpha_m}^*}}
			\end{equation}
			where ${{\alpha_j}^*}$ is $1$ if $\alpha_j = *$ and is $*$
			if $\alpha_j = 1$ for all $j \in [m]$.
			This operation is extended antilinearly 
			to non-commutative polynomials in $E \cup E^*$.
	\end{itemize}
\end{definition}

Then, we have the following definition.

\begin{definition}
	Let $E, E^*$ be as before.
	The \textbf{free} $\bm{^*}$\textbf{-algebra} 
	\textbf{on the generator set} $\bm{E}$, $P(E)$,
	is the set of non-commutative polynomials in $E \cup E^*$
	where addition and scalar multiplication are the standard operations,
	and
	multiplication and involution are given by the concatenation and involution
	operations given in (\ref{freeconcat}) and (\ref{freeinvolution}), respectively.

	The $^*$-algebra $P(E)$ is free in the sense that there are no relations
	amongst the polynomials in $P(E)$.
\end{definition}

For any subset $R$ of polynomials in $P(E)$, 
we can create the $^*$-ideal
$I(R)$ that is generated by $R$.
Considering $P(E)$ and $I(R)$ as algebras, we can certainly form 
the quotient algebra $P(E)/I(R)$.
Now, since $P(E)$ and $I(R)$ are $^*$-algebras,
the involution operation on $P(E)/I(R)$ given by
\begin{equation}
	(a + I)^* \coloneqq a^* + I
\end{equation}
for all $a \in P(E)$ is well-defined, and so in this case
$P(E)/I(R)$ is also a (quotient) $^*$-algebra.
Hence, we have that
\begin{definition}
	The \textbf{universal} $\bm{^*}$\textbf{-algebra
	with generators} $\bm{E}$ \textbf{and relations} $\bm{R}$,
	$A(E \mid R)$,
	is the quotient $^*$-algebra 
	$P(E)/I(R)$.
\end{definition}

%The universal $^*$-algebra with generators $E$ and relations $R$ satisfies the 
%following \textbf{universal property}: if $B$ is any $^*$-algebra having a 
%subset $E' = \{y_i \mid i \in I\}$ that \textbf{satisfies the relations} $R$,
%that is, all polynomials in $R$ become zero when we replace each $x_i$
%by $y_i$ for all $i \in I$, then there exists a $^*$-homomorphism 
%$A(E \mid R) \rightarrow B$ such that $x_i \mapsto y_i$ for all $i \in I$.

%\textbf{CHECK: DO I NEED THE universal property for these $^*$-algebras here as well?}

We now aim to create a $C^*$-norm on a $^*$-algebra,
although this $^*$-algebra isn't quite $A(E \mid R)$.
For this, we first need the definition of a $C^*$-seminorm.

\begin{definition}
	A $\bm{C^*}$\textbf{-seminorm} on a $^*$-algebra $A$
	is a map $\gamma : A \rightarrow \mathbb{R}$ such that 
	\begin{enumerate}
		\item $\gamma$ is a seminorm, that is
			\begin{itemize}
				\item $\gamma(x) \geq 0$ for all $x \in A$
				\item $\gamma(\lambda{x}) = \lvert \lambda \rvert \gamma(x)$ for all $x \in A$ and $\lambda \in \mathbb{C}$
				\item $\gamma(x + y) \leq \gamma(x) + \gamma(y)$ for all $x,y \in A$
			\end{itemize}
		\item $\gamma$ is submultiplicative, that is, 
			$\gamma(xy) \leq \gamma(x)\gamma(y)$ for all $x,y \in A$,
			and
		\item $\gamma$ satisfies the $C^*$-identity, that is
			$\gamma(x^*x) = \gamma(x)^2$.
	\end{enumerate}
\end{definition}

Note that a norm on an algebra $A$ is a seminorm $\gamma : A \rightarrow \mathbb{R}$ 
such that if $\gamma(x) = 0$ then $x = 0$.
Hence a $C^*$-norm is a $C^*$-seminorm satisfying this additional property.

We make the following definition.

\begin{definition}
	We define a function $\norm{\cdot}_{A(E \mid R)} : A(E \mid R) \rightarrow \mathbb{R}$ by
	\begin{equation}
		\norm{x}_{A(E \mid R)} 
		\coloneqq
		\sup\{\gamma(x) \mid \gamma \text{ is a } C^*\text{-seminorm on } A(E \mid R)\}
	\end{equation}
\end{definition}
It is possible that 
$\norm{x}_{A(E \mid R)}$ is infinity for some $x \in A(E \mid R)$,
so we only allow ourselves to consider those $E$ and $R$ such that
$\norm{x}_{A(E \mid R)} < \infty$ for all $A(E \mid R)$.
In this case, one can show that $\norm{x}_{A(E \mid R)}$ is a $C^*$-seminorm:
for example, for all $x, y \in A(E \mid R)$ and any $C^*$-seminorm 
$\gamma$ on $A(E \mid R)$, we have that
\begin{equation}
	\gamma(xy) \leq \gamma(x)\gamma(y) \leq \norm{x}_{A(E \mid R)}\norm{y}_{A(E \mid R)}  
\end{equation}
and so we see that $\norm{\cdot}_{A(E \mid R)}$ is submultiplicative.
%\textbf{PERHAPS SHOW HOW TO PROVE ONE OF THE CONDITIONS, SINCE THEY ARE ALL VERY SIMILAR}
Also, we see that 
\begin{equation}
	I_0 \coloneqq \{x \in A(E \mid R) \mid \norm{x}_{A(E \mid R)} = 0\}
\end{equation}
is a $^*$-ideal.
Hence, $A(E \mid R)/I_0$ is a quotient $^*$-algebra having a $C^*$-norm
$\norm{x}_{A(E \mid R)/I_0}$
that is obtained from the $C^*$-seminorm on $A(E \mid R)$ 
under the quotient map $A(E \mid R) \rightarrow A(E \mid R)/I_0$.
Hence, we have that

\begin{definition}
	The \textbf{universal} $\bm{C^*}$\textbf{-algebra
	with generators} $\bm{E}$ \textbf{and relations} $\bm{R}$,
	$C^*(E \mid R)$,
	is the completion of $A(E \mid R)/I_0$ 
	with respect to $\norm{x}_{A(E \mid R)/I_0}$.
	It is indeed a $C^*$-algebra, by construction.
\end{definition}

The following lemma is very useful for showing the existence of the universal 
$C^*$-algebra $C^*(E \mid R)$.

\begin{lemma} \label{uniCstarexist}
	Let $E = \{x_i \mid i \in I\}$ (with $E^*$ understood)
	be a set of generators and let $R \subseteq P(E)$ be a set of relations.
	If there is a constant $C > 0$ such that the value of $\gamma$ on any
	generator is less than or equal to $C$
	%$\gamma(x_i) \leq C$ for all $i \in I$
	and all $C^*$-seminorms $\gamma$ on $A(E \mid R)$, then $\norm{x} < \infty$
	for all $x \in A(E \mid R)$.
	Hence by the above construction $C^*(E \mid R)$ exists.
\end{lemma}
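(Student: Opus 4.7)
The plan is to fix an arbitrary element $x \in A(E \mid R)$, pick any polynomial representative of it in $P(E)$, and show that every $C^*$-seminorm $\gamma$ on $A(E \mid R)$ evaluates on $x$ to at most a finite quantity that depends on the representative and on $C$, but not on $\gamma$. Taking the supremum over $\gamma$ then yields $\norm{x}_{A(E \mid R)} < \infty$, after which the construction of $C^*(E \mid R)$ given in the paragraphs preceding the lemma goes through verbatim.

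The first step is to upgrade the hypothesis so that it also controls the adjoint generators. Using only the $C^*$-identity and submultiplicativity, for any $a$ in a $^*$-algebra equipped with a $C^*$-seminorm $\gamma$, one has
\begin{equation}
    \gamma(a)^2 = \gamma(a^*a) \leq \gamma(a^*)\gamma(a),
\end{equation}
so $\gamma(a) \leq \gamma(a^*)$ whenever $\gamma(a) > 0$, and the reverse inequality follows by applying the same argument to $a^*$. Hence $\gamma(x_i^*) = \gamma(x_i) \leq C$ for every generator $x_i \in E$ and every $C^*$-seminorm $\gamma$.

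The second step is a routine monomial-then-polynomial estimate. Any non-commutative monomial in $E \cup E^*$ has the form $x_{i_1}^{\alpha_1} \cdots x_{i_m}^{\alpha_m}$ with $\alpha_j \in \{1, *\}$, and by submultiplicativity
\begin{equation}
    \gamma\bigl(x_{i_1}^{\alpha_1} \cdots x_{i_m}^{\alpha_m}\bigr) \leq \prod_{j=1}^{m} \gamma\bigl(x_{i_j}^{\alpha_j}\bigr) \leq C^m.
\end{equation}
Writing a representative polynomial $p = \sum_{j=1}^{N} \lambda_j m_j$ for $x$, where each $m_j$ is a monomial of length $k_j$, subadditivity and absolute homogeneity of $\gamma$ give
\begin{equation}
    \gamma(x) = \gamma([p]) \leq \sum_{j=1}^{N} |\lambda_j|\, \gamma(m_j) \leq \sum_{j=1}^{N} |\lambda_j|\, C^{k_j},
\end{equation}
and this right-hand side is a finite constant depending only on the chosen representative and on $C$. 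Taking the supremum over all $C^*$-seminorms $\gamma$ on $A(E \mid R)$ therefore gives $\norm{x}_{A(E \mid R)} < \infty$, and since $x$ was arbitrary the hypothesis of the construction immediately preceding the lemma is satisfied, so $C^*(E \mid R)$ exists.

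There is no real obstacle here; the only point requiring care is the first step, where one must derive $\gamma(x^*) = \gamma(x)$ from the axioms rather than assuming it. This is needed to make sense of the hypothesis that $\gamma$ is bounded on \emph{generators}, since the concatenation/involution formalism treats $x_i$ and $x_i^*$ symmetrically but the set $E$ contains only the $x_i$. Once that identification is made, everything else is a mechanical application of the seminorm axioms and submultiplicativity.
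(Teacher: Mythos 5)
Your proof is correct and follows essentially the same route as the paper's: bound $\gamma$ on monomials by $C^m$ via submultiplicativity and conclude by taking the supremum over all $C^*$-seminorms. In fact you are somewhat more careful than the paper, which silently assumes $\gamma(x_i^*) \leq C$ and jumps from the monomial bound straight to the conclusion; your derivation of $\gamma(a^*) = \gamma(a)$ from the $C^*$-identity and your explicit passage from monomials to general polynomials via the estimate $\gamma(x) \leq \sum_j \lvert \lambda_j \rvert C^{k_j}$ fill in both of those omitted steps.
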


\begin{proof}
	Let $\gamma$ be any $C^*$-seminorm on $A(E \mid R)$.
	Then, since any monomial in $E \cup E^*$ has a positive integer length,
	we can pick an arbitrary monomial in $E \cup E^*$ having some arbitrary length
	$m$.
	Then, by the submultiplicative property, we have that the value of $\gamma$
	on this monomial is less than or equal to $C^m$, and so by taking the supremum
	over all $C^*$-seminorms we see that any element of $A(E \mid R)$ has a	norm
	that is bounded, as required.
\end{proof}

One can show 
\citet[Proposition 6.7]{speich}
%\textbf{INCLUDE REF: PROP 6.7 SPEICHER, or do I want to prove it myself?}
that every universal $C^*$-algebra $C^*(E \mid R)$
satisfies the following so-called universal property.

\begin{proposition}[Universal Property] \label{unipropCstar}
	Let $E = \{x_i \mid i \in I\}$ 
	be a set of generators and let $R \subseteq P(E)$ be a set of relations 
	such that the universal $C^*$-algebra $C^*(E \mid R)$ exists.
	Suppose that $A$ is a $C^*$-algebra having a subset
	$E' = \{y_i \mid i \in I\}$ that \textbf{satisfies the relations} $R$,
	that is, all polynomials in $R$ become zero when we replace each $x_i$
	by $y_i$ for all $i \in I$.
	Then there exists a unique $^*$-homomorphism $\phi : C^*(E \mid R) \rightarrow A$
	such that $x_i \mapsto y_i$ for all $i \in I$.
\end{proposition}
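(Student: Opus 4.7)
The plan is to build the desired $^*$-homomorphism in four stages, descending through each of the quotients and completions involved in the construction of $C^{*}(E \mid R)$.

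First, I would work at the level of the free $^{*}$-algebra $P(E)$. Because $P(E)$ consists of non-commutative polynomials freely generated by $E \cup E^{*}$, the assignment $x_i \mapsto y_i$ and $x_i^{*} \mapsto y_i^{*}$ extends uniquely to a $^{*}$-homomorphism $\tilde{\phi} : P(E) \rightarrow A$ by defining it on monomials via the multiplication in $A$ and extending linearly, and checking that (\ref{freeconcat}) and (\ref{freeinvolution}) are respected simply because $A$ itself is a $^{*}$-algebra. This is essentially the universal property of a free object.

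Second, I would show $\tilde{\phi}$ descends to $A(E \mid R) = P(E)/I(R)$. The hypothesis that $E'$ satisfies the relations $R$ says exactly that $\tilde{\phi}(r) = 0$ for every $r \in R$, and since $I(R)$ is the $^{*}$-ideal generated by $R$, every element of $I(R)$ is a finite sum of terms of the form $a \cdot r \cdot b$ (and products involving $r^{*}$), which all vanish under $\tilde{\phi}$. Thus $I(R) \subseteq \ker \tilde{\phi}$, giving a well-defined $^{*}$-homomorphism $\bar{\phi} : A(E \mid R) \rightarrow A$.

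Third, I would bring in the $C^{*}$-norm on $A$ to control $\bar{\phi}$. Define $\gamma : A(E \mid R) \rightarrow \mathbb{R}$ by $\gamma(x) \coloneqq \norm{\bar{\phi}(x)}_A$. Since $\bar{\phi}$ is a $^{*}$-homomorphism into a $C^{*}$-algebra, $\gamma$ inherits submultiplicativity, the triangle inequality, and the $C^{*}$-identity from $\norm{\cdot}_A$; and $\gamma$ is finite on each element because $A$ is a genuine $C^{*}$-algebra. Hence $\gamma$ is one of the $C^{*}$-seminorms entering the supremum defining $\norm{\cdot}_{A(E \mid R)}$, which immediately gives
\begin{equation}
\norm{\bar{\phi}(x)}_A \;=\; \gamma(x) \;\leq\; \norm{x}_{A(E \mid R)}
\end{equation}
for every $x \in A(E \mid R)$. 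In particular $\bar{\phi}$ vanishes on the $^{*}$-ideal $I_0$, so it descends to a norm-decreasing $^{*}$-homomorphism $A(E \mid R)/I_0 \rightarrow A$. Since $A$ is complete and the target map is uniformly continuous, it extends uniquely to a $^{*}$-homomorphism $\phi : C^{*}(E \mid R) \rightarrow A$ on the completion. Uniqueness is then automatic: any other $^{*}$-homomorphism $\psi$ sending $x_i \mapsto y_i$ must agree with $\phi$ on non-commutative polynomials in the generators, hence on the image of $A(E \mid R)/I_0$, which is dense, and hence everywhere by continuity.

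The only delicate step in this outline is the third one, namely the contention that $\gamma$ really is a $C^{*}$-seminorm eligible to appear in the supremum. This is a matter of bookkeeping rather than a genuine obstacle, but it is where one uses that the $y_i$ satisfy $R$ in the strong sense that the map from $P(E)$ into $A$ factors through $A(E \mid R)$: without that, $\gamma$ would not even be defined on $A(E \mid R)$. Lemma \ref{uniCstarexist} is not needed here since the existence of $C^{*}(E \mid R)$ is part of the hypothesis, so the proof is essentially a clean descent argument.
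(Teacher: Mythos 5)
Your argument is correct and is the standard descent proof (the paper itself states this proposition without proof, deferring to \citet[Proposition 6.7]{speich}, where essentially this same argument appears): lift $x_i \mapsto y_i$ through the free $^*$-algebra, kill $I(R)$ using the hypothesis that the $y_i$ satisfy $R$, observe that $x \mapsto \norm{\bar{\phi}(x)}_A$ is a $C^*$-seminorm on $A(E \mid R)$ and hence dominated by $\norm{\cdot}_{A(E \mid R)}$, and extend to the completion. The only ingredient you use without stating it is that the competing $^*$-homomorphism $\psi$ in the uniqueness step is continuous, which follows from the standard fact that any $^*$-homomorphism between $C^*$-algebras is automatically contractive.
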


\begin{example}
	For $n \geq 2$, we have a $^*$-isomorphism of $C^*$-algebras
	\begin{equation}
		M_n(\mathbb{C})
		\cong
		C^*(u_{i,j}, 1 \leq i,j \leq n \mid
		u_{i,j}^* = u_{j,i},
		u_{i,k}u_{l,j} = \delta_{k,l}u_{i,j}
		)
	\end{equation}
\end{example}

\begin{proof}
	Let $C^*(E \mid R) \coloneqq C^*(u_{i,j}, 1 \leq i,j \leq n \mid
		u_{i,j}^* = u_{j,i},
		u_{i,k}u_{l,j} = \delta_{k,l}u_{i,j}
		)$.
	We first need to show that $C^*(E \mid R)$ exists, that is, 
	that $\norm{x} < \infty$ for all $x \in A(E \mid R)$.
	This is immediate from
	Lemma \ref{uniCstarexist}
	if we can show that there is some constant $C$
	such that $\gamma(u_{i,j}) \leq C$ for all $1 \leq i,j \leq n$ and for all
	$C^*$-seminorms $\gamma$ on $A(E \mid R)$.

	Let $\gamma$ be a $C^*$-seminorm on $A(E \mid R)$.
	By the $C^*$-identity and the relations $R$ we have that
	\begin{equation}
		\gamma(u_{j,j})^2 
		= 
		\gamma(u_{j,j}^*u_{j,j})
		=
		\gamma(u_{j,j}u_{j,j})
		=
		\gamma(\delta_{j,j}u_{j,j})
		=
		\gamma(u_{j,j})
	\end{equation}
	Hence $\gamma({u_{j,j}})$ is either $0$ or $1$ for all $j \in [n]$ and so,
	again by the $C^*$-identity and the relations $R$, we have that
	\begin{equation}
		\gamma(u_{i,j})^2
		=
		\gamma(u_{i,j}^*u_{i,j})
		=
		\gamma(u_{j,i}u_{i,j})
		=
		\gamma(\delta_{i,i}u_{j,j})
		=
		\gamma(u_{j,j})
	\end{equation}
	Hence $\gamma({u_{i,j}})$ is bounded above by $1$ for all $1 \leq i,j \leq n$
	and all $C^*$-seminorms $\gamma$ on $A(E \mid R)$, as required.

	We now prove the $^*$-isomorphism $M_n(\mathbb{C}) \cong C^*(E \mid R)$.
	The matrix units $E_{i,j} \in M_n(\mathbb{C})$
	satisfy the relations $R$, since $E_{i,j}^* = E_{i,j}^\top = E_{j,i}$
	and $E_{i,k}E_{l,j} = \delta_{k,l}E_{i,j}$, hence, by the universal property,
	there exists a surjective $^*$-homomorphism 
	$\phi: C^*(E \mid R) \rightarrow M_n(\mathbb{C})$ such that 
	$u_{i,j} \mapsto E_{i,j}$ for all $i,j \in [n]$.
	Looking at the relations $R$ we see that the monomials in $C^*(E \mid R)$
	can be at most the elements $u_{i,j}$, but since $\phi$ is surjective
	the monomials are exactly the elements $u_{i,j}$. Hence, as a vector space,
	$C^*(E \mid R)$ has dimension $n^2$ and so $\phi$ is also injective.
	Hence $\phi$ is a $^*$-isomorphism, as required.
\end{proof}

%%%%%%%%%%%%%%%%%%%%%%%%%%%%%%%%%%%%%%%%%%%%%%%%%%%%%%%%%%%%%%%%%%%%%%%%

\section{Missing Proofs and Supplementary Content} \label{missingproofs}

We include proofs of results that appeared in the main paper, and 
provide the full statement of Woronowicz--Tannaka--Krein Duality
\citep{woronowicz1988}, the first part of which appeared in
Theorem \ref{tannakakrein}.

\begin{proof}[Proof of Proposition \ref{compmatgroup}]
	Let $G(n) \subseteq GL(n)$ be a compact matrix group. 
	Consider $C(G(n))$, the $C^{*}$-algebra of continuous functions 
	$G(n) \rightarrow \mathbb{C}$, which is commutative.
	Firstly, we can define functions
	$u_{i,j} : G(n) \rightarrow \mathbb{C}$ 
	for $i,j \in [n]$
	%$1 \leq i, j \leq n$
	such that $u_{i,j}(g) = g_{i,j}$. 
	By the Stone-Weierstrass Theorem (Theorem \ref{stoneweierstrass}) 
	and the compactness of $G(n)$,
	the $u_{i,j}$ generate $C(G(n))$ 
	and the matrices $u \coloneqq (u_{i,j})$ and $u^\top$ are invertible
	since $u^{-1}(g) = u(g^{-1})$.

	Moreover, we have that the comultiplication map $\Delta$ for $C(G(n))$
	is given by 
	\begin{equation} \label{deltaCGndefn}
		\Delta(u_{i,j})(g,h) = u_{i,j}(gh)
	\end{equation}
since
	\begin{equation} 
	%\Delta(u_{ij})(g, h)
	%=
	\left(
	\sum_k 
	u_{i,k} \otimes u_{k,j}
	\right)
	(g, h)
	=
	\sum_k 
	u_{i,k}(g)
	u_{k,j}(h) 
	=
	\sum_k 
	g_{i,k}
	h_{k,j}
	=
	(gh)_{i,j}
	=
	u_{i,j}(gh)
	\end{equation}
that is, $\Delta$ coincides with matrix multiplication in $G(n)$.
	
	The definition of $\Delta$ that is given in (\ref{deltaCGndefn}) 
	is a $*$-homomorphism because
	\begin{align}
		\Delta(u_{i,j}u_{k,l})(g,h)
		& =
		u_{i,j}u_{k,l}(gh) \\
		& =
		u_{i,j}(gh)u_{k,l}(gh) \label{CGmultrule} \\
		& =
		\Delta(u_{i,j})(g,h)\Delta(u_{k,l})(g,h) \\
		& =
		\Delta(u_{i,j})\Delta(u_{k,l})(g,h) \label{CGmultrule2}
	\end{align}
	and
	\begin{equation}
		\Delta(u_{i,j}^*)(g,h) 
		= 
		u_{i,j}^*(gh)
		=
		\overline{u_{i,j}(gh)}
		=
		\overline{\Delta(u_{i,j})(g,h)}
		=
		\Delta(u_{i,j})^*(g,h) 
	\end{equation}
	Note that in (\ref{CGmultrule}) we have used that $C(G(n))$ has a pointwise
	multiplication, and likewise, in (\ref{CGmultrule2}), we have used
	that $C(G(n)) \times C(G(n))$ has a pointwise multiplication.
\end{proof}

\begin{proof}[Proof of Proposition \ref{tensconjcmqg}]
	%{unipropCstar}]
	We have that $v \otimes w$ is a representation of $G(n)$ because
	\begin{equation} \label{qgtenprodrepproof}
		\Delta(v_{i,j}w_{k,l})
		=
		\Delta(v_{i,j})\Delta(w_{k,l})
		=
		\left(
			\sum_x 
			v_{i,x} \otimes v_{x,j}
		\right)
		\left(
			\sum_y 
			w_{k,y} \otimes w_{y,l}
		\right)
		=
		\sum_{x,y} v_{i,x}w_{k,y} \otimes v_{x,j}w_{y,l}
	\end{equation}
	and $\bar{v}$ is a representation of $G(n)$ because
	\begin{equation} \label{qgconjproof}
		\Delta(v_{i,j}^*)
		=
		\Delta(v_{i,j})^*
		=
		\left(
			\sum_k 
			v_{i,k} \otimes v_{k,j}
		\right)^*
		=
			\sum_k 
			v_{i,k}^* \otimes v_{k,j}^*
	\end{equation}
	where in both (\ref{qgtenprodrepproof}) and (\ref{qgconjproof})
	we have used that $\Delta$ is a $*$-homomorphism.
\end{proof}

\begin{proof}[Proof of Lemma \ref{conjunitary}]
	We have that
	\begin{equation}
		\bar{u}^*
		=
		(u_{i,j}^*)^*
		=
		((u_{j,i}^*)^*)
		=
		(u_{j,i})
		=
		u^\top
	\end{equation}
	and also
	\begin{equation}
		(u^\top)^*
		=
		(u_{j,i})^*
		=
		(u_{i,j}^*)
		=
		\bar{u}
	\end{equation}
	Hence
	\begin{equation}
		u^\top(u^\top)^* 
		= 
		1 
		= 
		(u^\top)^*{u^\top}
		\iff
		\bar{u}{\bar{u}^*}
		=
		1
		=
		{\bar{u}^*}\bar{u}
	\end{equation}
	as required.	
\end{proof}

\begin{proof}[Proof of Theorem \ref{cmqgtwocoloured}]
	It is easy to show that $\FundRep_{G(n)}$ satisfies the
	first four axioms of Definition \ref{twocolourrepcat}.
	For example, the second axiom says that if 
	$\phi_1 \in \FundRep_{G(n)}(w_k, w_l)$ and
	$\phi_2 \in \FundRep_{G(n)}(w_l, w_m)$, then we have that
	\begin{equation}
		(\phi_2 \circ \phi_1){u^{\otimes w_k}}
		=
		\phi_2{u^{\otimes w_l}}\phi_1
		=
		{u^{\otimes w_m}}(\phi_2 \circ \phi_1)
	\end{equation}
	and so $\phi_2 \circ \phi_1 \in \FundRep_{G(n)}(w_k, w_m)$.

	For the fifth axiom, we need to show that the operator
	$R : 1 \mapsto \sum_{i=1}^{n} e_i \otimes e_i$
	is in both $\FundRep_{G(n)}(\varnothing, \circ\bullet)$
	and $\FundRep_{G(n)}(\varnothing, \bullet\circ)$.

	For the first case, we have that 
	\begin{equation} \label{twocolourdualeq1}
	R \in \FundRep_{G(n)}(\varnothing, \circ\bullet)
		\iff
	R = (u \otimes \bar{u})R
	\end{equation}
	By applying both sides to 
	$1$, we have that (\ref{twocolourdualeq1}) holds if and only if
	\begin{equation}
		\sum_{i=1}^{n} e_i \otimes e_i 
		=
		\sum_{i=1}^{n} ue_i \otimes \bar{u}e_i
		=
		\sum_{i=1}^{n} \sum_{j,k=1}^{n} e_j \otimes e_k \otimes u_{j,i}u_{k,i}^{*}
		=	
		\sum_{j,k=1}^{n} e_j \otimes e_k \otimes (uu^*)_{j,k}
	\end{equation}
	Hence, by equating coefficients, we see that
	$R \in \FundRep_{G(n)}(\varnothing, \circ\bullet)$ if and only if
	$uu^* = 1$.
	For the second case, a similar calculation shows that
	$R \in \FundRep_{G(n)}(\varnothing, \bullet\circ)$ if and only if
	$\bar{u}\bar{u}^* = 1$.
	Since $u$ and $\bar{u}$ are both unitary, we obtain the result.
\end{proof}

As promised in Remark \ref{woronremark},
we provide the full statement of Woronowicz--Tannaka--Krein Duality
\citep{woronowicz1988} here, the first part of which already appeared in
Theorem \ref{tannakakrein}.

\begin{theorem} \label{tannakakreinfull}
	[\textbf{Woronowicz--Tannaka--Krein Duality}]
	Let $\mathcal{C}$ be a two-coloured representation category.
	Then there exists a unique compact matrix quantum group $(G(n), u)$ such that
	$\FundRep_{G(n)} = \mathcal{C}$.
	If $E = \{x_{i,j} \mid 1 \leq i,j \leq n\}$ is a set of generators for the
	free $^*$-algebra $P(E)$, then the $^*$-ideal $I_{G(n)} \subseteq P(E)$
	that determines $G(n)$ is given by
	%\textbf{CHANGE THIS TEXT TO MATCH THE UNIVERSAL CONSTRUCTION}
	%$G(n)$ is determined by the ideal 
	%$I_{G(n)} \subseteq \mathbb{C}\langle x_{ij}, x_{ij}^{*} \rangle$
	%given by
	\begin{equation}
		\text{span}
		\left\{ 
		[\phi x^{\otimes w_k} - x^{\otimes w_l} \phi]_{I,J}
		\;\middle\vert\;
		%\begin{array}{l}
			\phi \in \mathcal{C}(w_k, w_l), I \in [n]^l, J \in [n]^k
		%\end{array}
		\right\}
	\end{equation}
\end{theorem}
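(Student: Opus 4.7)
The plan is to split the theorem into two assertions and attack each by chaining together results already established in the paper.

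The first assertion—existence and uniqueness of the easy compact matrix quantum group $(G(n), u)$ associated with $\mathcal{K}(n)$—follows immediately by composing Theorem \ref{quantumpartfunctor} with Woronowicz--Tannaka--Krein duality. First, I would apply the functor $\Theta : \mathcal{K}(n) \rightarrow \mathcal{C}_{\mathcal{K}}(n)$ from Theorem \ref{quantumpartfunctor} to convert the two-coloured category of partitions $\mathcal{K}(n)$ into a genuine two-coloured representation category $\mathcal{C}_{\mathcal{K}}(n)$, whose morphism spaces $\mathcal{C}_{\mathcal{K}}(n)(w_k, w_l)$ are spanned by the matrices $D_\pi$ for $d_\pi \in \mathcal{K}(n)(w_k, w_l)$. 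Then I would invoke Theorem \ref{tannakakrein} to produce a unique compact matrix quantum group $(G(n), u)$ with $\FundRep_{G(n)} = \mathcal{C}_{\mathcal{K}}(n)$.

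For the explicit universal-$C^{*}$-algebra presentation, I would appeal to the stronger version of duality in Theorem \ref{tannakakreinfull}, which identifies the defining $^{*}$-ideal $I_{G(n)} \subseteq P(E)$ as the span of the entries of $\phi x^{\otimes w_k} - x^{\otimes w_l}\phi$ as $\phi$ ranges over $\mathcal{C}_{\mathcal{K}}(n)(w_k, w_l)$. Since $\phi \mapsto \phi x^{\otimes w_k} - x^{\otimes w_l}\phi$ is $\mathbb{C}$-linear in $\phi$, taking the span over all such $\phi$ agrees with taking the span over any spanning set. By Theorem \ref{quantumpartfunctor}, the matrices $\{D_\pi : d_\pi \in \mathcal{K}(n)(w_k,w_l)\}$ form such a spanning set, so the ideal is generated by the relations $D_\pi u^{\otimes w_k} = u^{\otimes w_l} D_\pi$ as $d_\pi$ ranges over $\mathcal{K}(n)(w_k, w_l)$ and $w_k, w_l$ range over all two-coloured words. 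Quotienting $P(E)$ by this ideal and completing with respect to the universal $C^{*}$-seminorm (whose finiteness is guaranteed since $u$ is unitary, giving $\gamma(u_{i,j}) \leq 1$, so that Lemma \ref{uniCstarexist} applies) yields precisely $C^{*}(E \mid R)$ as claimed.

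The second assertion follows directly from Definition \ref{quantumGneuralnetwork}, which declares the learnable linear map $\phi_l$ to be an element of $\FundRep_{G(n)}(w_{l-1}, w_l)$. By construction (and by the equality $\FundRep_{G(n)} = \mathcal{C}_{\mathcal{K}}(n)$ established above), every such map is a $\mathbb{C}$-linear combination of the matrices $D_\pi$ for $d_\pi \in \mathcal{K}(n)(w_{l-1}, w_l)$. Attaching an independent learnable scalar $w_\pi \in \mathbb{C}$ to each such $D_\pi$ and summing gives the form (\ref{cmqgweightmatrix}).

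The main obstacle is conceptual rather than computational: one has to be careful that the ideal produced by the full Woronowicz duality—indexed by all morphisms $\phi$ in $\mathcal{C}_{\mathcal{K}}(n)$—really does coincide with the smaller set of relations indexed only by the generating matrices $D_\pi$. This reduction is legitimate because both the entries $[\phi x^{\otimes w_k} - x^{\otimes w_l}\phi]_{I,J}$ depend linearly on $\phi$ and the $D_\pi$ span every $\mathcal{C}_{\mathcal{K}}(n)(w_k, w_l)$, so no relations are lost. Everything else is a direct assembly of previously proved results.
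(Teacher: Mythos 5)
Your proposal does not address the statement it is meant to prove. Theorem \ref{tannakakreinfull} is Woronowicz--Tannaka--Krein duality itself: it starts from an \emph{arbitrary} two-coloured representation category $\mathcal{C}$ and asserts (i) the existence and uniqueness of a compact matrix quantum group $(G(n),u)$ with $\FundRep_{G(n)} = \mathcal{C}$ and (ii) the explicit description of the defining $^*$-ideal $I_{G(n)} \subseteq P(E)$. What you have written is instead an argument for Theorem \ref{easycstar}: you begin with a two-coloured category of partitions $\mathcal{K}(n)$, push it through the functor $\Theta$ of Theorem \ref{quantumpartfunctor}, and then \emph{invoke} Theorem \ref{tannakakreinfull} (``I would appeal to the stronger version of duality in Theorem \ref{tannakakreinfull}\ldots'') to extract the presentation by the relations $D_\pi u^{\otimes w_k} = u^{\otimes w_l} D_\pi$. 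Using the target theorem as an input makes the argument circular as a proof of that theorem, and the objects you reason about ($\mathcal{K}(n)$, the matrices $D_\pi$, the weight matrices of the neural network) do not even appear in the statement you were asked to prove. The reduction you emphasise at the end --- that linearity of $\phi \mapsto \phi x^{\otimes w_k} - x^{\otimes w_l}\phi$ lets one replace ``all $\phi$'' by a spanning set --- is correct and is exactly the right observation for deriving Theorem \ref{easycstar} \emph{from} Theorem \ref{tannakakreinfull}, but it contributes nothing to proving the duality itself.

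For the record, the paper does not prove Theorem \ref{tannakakreinfull} either: it quotes it from \citet{woronowicz1988}. A genuine proof would have to (a) form the quotient of $P(E)$ by the stated span, check that it really is a $^*$-ideal compatible with the comultiplication, and complete it to a universal $C^*$-algebra (your remark that unitarity of $u$ bounds every $C^*$-seminorm, via Lemma \ref{uniCstarexist}, is relevant only to this step); (b) verify that the resulting pair $(A,u)$ satisfies Definition \ref{CMQGDefn}; (c) prove the hard inclusion $\FundRep_{G(n)} \subseteq \mathcal{C}$, i.e.\ that the quotient does not create intertwiners beyond those in $\mathcal{C}$ --- this is the substantive Tannaka--Krein content and requires Peter--Weyl/comodule machinery that appears nowhere in your sketch; and (d) establish uniqueness of $(G(n),u)$ up to isomorphism. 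None of (b)--(d) is touched by your proposal, so the attempt cannot be repaired by local edits; it needs to be redirected at the correct statement.
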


In Definition \ref{cattwopart},
we also said that we would provide definitions of 
the following three $\mathbb{C}$-(bi)linear operations on two-coloured set partition diagrams
\begin{align}
	\text{\textbf{composition:}   } & \bullet: P_{w_l}^{w_m}(n) \times P_{w_k}^{w_l}(n) \rightarrow P_{w_k}^{w_m}(n) \\
	\text{\textbf{tensor product:}   } & \otimes: P_{w_k}^{w_l}(n) \times P_{w_q}^{w_m}(n) \rightarrow P_{w_k \cdot w_q}^{w_l \cdot w_m}(n) \\
	\text{\textbf{involution:}   } & *: P_{w_k}^{w_l}(n)
	\rightarrow P_{w_l}^{w_k}(n)
\end{align}
We define these operations as follows.

\begin{definition}[Composition] \label{qgroupcomposition}

Let $d_{\pi_1} \in P_{w_k}^{w_l}(n)$ and $d_{\pi_2} \in P_{w_l}^{w_m}(n)$. 
		First, we concatenate the diagrams, written $d_{\pi_2} \circ d_{\pi_1}$, by putting $d_{\pi_1}$ below $d_{\pi_2}$, concatenating the edges in the middle row of vertices, and then removing all connected components that lie entirely in the middle row of the concatenated diagrams. 
Let $c(d_{\pi_2}, d_{\pi_1})$ be the number of connected components that are removed from the middle row in $d_{\pi_2} \circ d_{\pi_1}$.
		Then the composition is defined, using infix notation, as
\begin{equation} 
	d_{\pi_2} \bullet d_{\pi_1} 
	\coloneqq 
	n^{c(d_{\pi_2}, d_{\pi_1})} (d_{\pi_2} \circ d_{\pi_1})
\end{equation}
\end{definition}

\begin{example}
If $d_{\pi_2}$ is the two-coloured 
$(\circ \circ \bullet \bullet \circ \, \bullet, 
\circ \bullet \bullet \, \circ)$--partition diagram 
\begin{equation}
\begin{aligned}
	\scalebox{0.5}{\tikzfig{composition1i}}
\end{aligned}
\end{equation}
and
$d_{\pi_1}$ is the two-coloured 
$(\bullet \circ \bullet \circ \circ \, \circ, 
\circ \circ \bullet \bullet \circ \, \bullet)$--partition diagram 
\begin{equation}
\begin{aligned}
	\scalebox{0.5}{\tikzfig{composition1ii}}
\end{aligned}
\end{equation}
then
we have that $d_{\pi_2} \circ d_{\pi_1}$ is 
the 
$(\bullet \circ \bullet \circ \circ \, \circ,
\circ \bullet \bullet \, \circ)$--partition diagram
\begin{equation} \label{compdiagram}
\begin{aligned}
	\scalebox{0.5}{\tikzfig{composition}}
\end{aligned}
\end{equation}
and so $d_{\pi_2} \bullet d_{\pi_1}$ is the diagram (\ref{compdiagram}) multiplied by $n$, since one connected component was removed from the middle row of $d_{\pi_2} \circ d_{\pi_1}$.
\end{example}
		
\begin{definition}[Tensor Product] \label{qgrouptensorprod}
Let $d_{\pi_1} \in P_{w_k}^{w_l}(n)$ and $d_{\pi_2} \in P_{w_q}^{w_m}(n)$. Then $d_{\pi_1} \otimes d_{\pi_2}$ is defined to be the $(w_k \cdot w_q, w_l \cdot w_m)$--partition diagram obtained by horizontally placing $d_{\pi_1}$ to the left of $d_{\pi_2}$ without any overlapping of vertices.
\end{definition}

\begin{example}
If $d_{\pi_1}$ is the two-coloured 
$(\circ \, \bullet, 
\bullet \circ \bullet)$--partition diagram 
\begin{equation}
\begin{aligned}
	\scalebox{0.5}{\tikzfig{tensprod1i}}
\end{aligned}
\end{equation}
and
$d_{\pi_2}$ is the two-coloured 
$(\circ \bullet \circ \, \bullet, 
\circ \circ \bullet)$--partition diagram 
\begin{equation}
\begin{aligned}
	\scalebox{0.5}{\tikzfig{tensprod1ii}}
\end{aligned}
\end{equation}
then
we have that $d_{\pi_1} \otimes d_{\pi_2}$ is the 
$(\circ \bullet \circ \bullet \circ \, \bullet,
\bullet \circ \bullet \circ \circ \, \bullet)$--partition diagram
\begin{equation}
\begin{aligned}
	\scalebox{0.5}{\tikzfig{tensprod2}}
\end{aligned}
\end{equation}
\end{example}

\begin{definition}[Involution] \label{qgroupinvolution}
Let $d_\pi \in P_{w_k}^{w_l}(n)$.
Then $d_\pi^{*} \in P_{w_l}^{w_k}(n)$
is defined by reflecting $d_\pi$ in the horizontal axis.
\end{definition}

\begin{example}
Recalling the two-coloured 
$(\circ \bullet \bullet \circ \bullet \, \circ, 
		\bullet \circ \bullet \bullet \circ)$--partition diagram $d_\pi$
from Example \ref{twocolsetpart}
(reprinted below for ease)
\begin{equation}
\begin{aligned}
	\scalebox{0.5}{\tikzfig{twocolouredi}}
\end{aligned}
\end{equation}
we have that $d_\pi^{*}$ is the 
$(\bullet \circ \bullet \bullet \circ,
\circ \bullet \bullet \circ \bullet \, \circ)$--partition diagram
\begin{equation}
\begin{aligned}
	\scalebox{0.5}{\tikzfig{twocolouredinv}}
\end{aligned}
\end{equation}
\end{example}

With this, we can now provide the following proof of Proposition \ref{twocolmonoidal}.

\begin{proof}[Proof of Proposition \ref{twocolmonoidal}]
	$\mathcal{P}(n)$ is a strict monoidal category
	because the bifunctor (horizontal operation) 
	on objects reduces to the concatenation operation on 
	words, which is associative, and
	the bifunctor on morphisms is 
	the tensor product of linear combinations of two-coloured partition diagrams
	that is given in Definition \ref{qgrouptensorprod},
	which is associative by definition.

	$\mathcal{P}(n)$ is $\mathbb{C}$--linear because the morphism space 
	between any two objects is by definition a vector space,
	and the composition of morphisms is $\mathbb{C}$-bilinear 
	by definition.
	For the same reason, the bifunctor is also $\mathbb{C}$--bilinear.	
\end{proof}

\begin{proof}[Proof of Theorem \ref{quantumpartfunctor}]
	We prove this theorem in a series of steps.
	We first consider the image $\mathcal{C}_{\mathcal{P}}(n)$ of
	the two-coloured partition category $\mathcal{P}(n)$ 
	under $d_\pi \mapsto D_\pi$.

	\textbf{Step 1}: We need to show that $\mathcal{C}_{\mathcal{P}}(n)$
	is actually a category. 

	The objects and the morphisms have been defined in the statement of the theorem.
	For every triple of objects, the composition rule is given by matrix 
	multiplication, which is associative.
	Furthermore, the identity morphism for every object $w_k$ of length $k$ is the 
	$n^k \times n^k$ identity matrix.
	Note also that the unit object is $\varnothing$, the empty word.

	\textbf{Step 2}: We need to show that $\Theta : \mathcal{P}(n) \rightarrow \mathcal{C}_{\mathcal{P}}(n)$ is a functor.

	\begin{itemize}
		\item $\Theta$ maps the objects of $\mathcal{P}(n)$ to the objects of
			$\mathcal{C}_{\mathcal{P}}(n)$ since 
			$\Theta$ maps each word $w_k$ to itself, by definition.
		\item It is enough to show that $\Theta(g)\Theta(f) = \Theta(g \bullet f)$ on arbitrary basis elements of arbitrary morphism spaces where
			the codomain of $f$ is the domain of $g$
			because the morphism spaces are vector spaces. 

			Let $f = d_{\pi_1}$ be a $(w_k,w_l)$--partition diagram, and
			let $g = d_{\pi_2}$ be a $(w_l,w_m)$--partition diagram.
			Then, by Definition \ref{qgroupcomposition}, we have that
			\begin{equation}
				d_{\pi_2} \bullet d_{\pi_1} 
				=
				n^{c(d_{\pi_2}, d_{\pi_1})} d_{\pi_3}
			\end{equation}
			where $d_{\pi_3}$ be the composition $d_{\pi_2} \circ d_{\pi_1}$
			expressed as a $(w_k,w_m)$--partition diagram.
			Then 
			\begin{equation} \label{partbulletmatmult}
				\Theta(g \bullet f) 
				= 
				\Theta(d_{\pi_2} \bullet d_{\pi_1}) 
				=
				n^{c(d_{\pi_2}, d_{\pi_1})} D_{\pi_3}
			\end{equation}
			We also have that
			\begin{align}
				\Theta(g)\Theta(f)
				=
				D_{\pi_2}D_{\pi_1}
				& =
				\left(
				\sum_{I \in [n]^m, K \in [n]^l}
				\delta_{\pi_2, (I,K)}
				E_{I,K}
				\right)
				\left(
				\sum_{L \in [n]^l, J \in [n]^k}
				\delta_{\pi_1, (L,J)}
				E_{L,J} 
				\right)
				\\
				& =
				\sum_{I \in [n]^m, K \in [n]^l, J \in [n]^k}
				\delta_{\pi_2, (I,K)}
				\delta_{\pi_1, (K,J)}
				E_{I,K}E_{K,J} \label{partcompmultmat}
			\end{align}

			For fixed $I, J$, consider
			\begin{equation} \label{partcompdeltas}
				\sum_{K \in [n]^l}
				\delta_{\pi_2, (I,K)}
				\delta_{\pi_1, (K,J)}
			\end{equation}
			This is equal to
			\begin{equation}
				n^{c(d_{\pi_2}, d_{\pi_1})}
				\delta_{\pi_3, (I,J)}
			\end{equation}
			Indeed, for fixed $I,J$, 
			$\delta_{\pi_3, (I,J)}$ is $1$
			if and only if 
			both $\delta_{\pi_2, (I,K)}$ and 
			$\delta_{\pi_1, (K,J)}$ are $1$
			for some $K \in [n]^l$
			since $d_{\pi_3}$ is the composition
			$d_{\pi_2} \circ d_{\pi_1}$.
			The number of such $K$ is determined
			only by the vertices that appear in connected components
			that are removed from the middle row of 
			$d_{\pi_2} \circ d_{\pi_1}$, since, for fixed $I,J$,
			only these vertices can be freely chosen
			if we want both $\delta_{\pi_2, (I,K)}$ and
			$\delta_{\pi_1, (K,J)}$ to be $1$.
			However, since the entries in each connected component must 
			be the same for both $\delta_{\pi_2, (I,K)}$ and 
			$\delta_{\pi_1, (K,J)}$ to be $1$,
			this implies that the number of $K \in [n]^l$ such that
			both $\delta_{\pi_2, (I,K)}$ and 
			$\delta_{\pi_1, (K,J)}$ are $1$
			is 
			$n^{c(d_{\pi_2}, d_{\pi_1})}$.

			Hence (\ref{partcompmultmat}) becomes
			\begin{equation}
				\sum_{I \in [n]^m, J \in [n]^k}
				n^{c(d_{\pi_2}, d_{\pi_1})}
				\delta_{\pi_3, (I,J)}
				E_{I,J}
				=
				n^{c(d_{\pi_2}, d_{\pi_1})} D_{\pi_3}
			\end{equation}
			and so, by (\ref{partbulletmatmult}),
			we have that $\Theta(g)\Theta(f) = \Theta(g \bullet f)$, as required.
		\item 
			%By definition, f
			For each object $w_k$ in $\mathcal{P}(n)$,
			the identity morphism $1_{w_k}$ is the 
			two-coloured $(w_k, w_k)$--partition diagram $d_\pi$
			where $\pi$ is the set partition
			\begin{equation}
				\{1, k+1 \mid 2, k+2, \mid \dots \mid k, 2k\}
			\end{equation}
			of $[2k]$, and its image under $d_\pi \mapsto D_\pi$ is
			the $n^k \times n^k$ identity matrix, as required. 
	\end{itemize}

	\textbf{Step 3}: We need to show that $\mathcal{C}_{\mathcal{P}}(n)$
	is a strict $\mathcal{C}$-linear monoidal category.

	We first consider the tensor product of arbitrary basis elements 
	of arbitrary morphism spaces.
	Let $d_\pi$ be a $(w_k,w_l)$--partition diagram,
	and let $d_\tau$ be a $(w_m,w_q)$--partition diagram.
	Then, defining $d_{\omega}$ to be the $(w_k \cdot w_q, w_l \cdot w_m)$--partition
	diagram, where $\omega$ is the set partition $\pi \cup \tau$ of $[l+m+k+q]$,
	we see that
	
	\begin{align} \label{imagestensorprod}
	D_\pi \otimes D_\tau 
		& =
		\left(
		\sum_{I \in [n]^l, J \in [n]^k}
		\delta_{\pi, (I,J)}
		E_{I,J}
		\right)
		\otimes
		\left(
		\sum_{X \in [n]^m, Y \in [n]^q}
		\delta_{\tau, (X,Y)}
		E_{X,Y} 
		\right)
		\\
		& =
		\sum_{(I,X) \in [n]^{l+m}, (J,Y) \in [n]^{k+q}}
		\delta_{\omega, (I,X),(J,Y))}
		E_{(I,X),(J,Y)} \label{deltaprod} \\
		& = D_\omega
	\end{align}
	since $S_\pi((I,J)) \cup S_\tau((X,Y)) = S_\omega((I,X),(J,Y))$.

	The bifunctor on objects of $\mathcal{C}_{\mathcal{P}}(n)$ 
	reduces to the concatenation
	operation on words, which is associative. 
	Also, since the bifunctor on
	morphisms in $\mathcal{C}_{\mathcal{P}}(n)$ 
	is the tensor product of linear combinations of images
	of two-coloured partition diagrams, it is associative because
	the tensor product of images of two-coloured partition diagrams is associative,
	both by the above calculation and because taking unions of set partitions 
	is associative.
	Hence, $\mathcal{C}_{\mathcal{P}}(n)$ is a strict monoidal category.

	$\mathcal{C}_{\mathcal{P}}(n)$ is $\mathbb{C}$--linear because 
	the morphism space between any two objects is by definition a vector space,
	and the composition of morphisms is $\mathbb{C}$-bilinear 
	by definition.
	For the same reason, the bifunctor is also $\mathbb{C}$--bilinear.

	\textbf{Step 4}: We need to show that $\Theta : \mathcal{P}(n) \rightarrow \mathcal{C}_{\mathcal{P}}(n)$ is a strict $\mathcal{C}$-linear monoidal functor.

	\begin{enumerate}
		\item $\Theta$ preserves the tensor product on objects, 
			since $\Theta(w_k) = w_k$ for any object in $\mathcal{P}(n)$, 
			and the tensor product in both categories is given by 
			the concatenation of words.
		\item It is enough to show that $\Theta(f) \otimes \Theta(g) = \Theta(f \otimes g)$ on arbitrary basis elements of arbitrary morphism spaces as the morphism spaces are vector spaces. This result is now immediate from the calculation given in 
		(\ref{imagestensorprod}).
		\item It is clear that
			$\Theta$ sends the unit object $\varnothing$ 
			in $\mathcal{P}(n)$ to $\varnothing$,
			which is the unit object in $\mathcal{C}_{\mathcal{P}}(n)$.
		\item This is immediate because the map $d_\pi \mapsto D_\pi$
			is $\mathbb{C}$--linear by Definition \ref{partmatfunctor}.
	\end{enumerate}

	\textbf{Step 5:} We need to show that $\mathcal{C}_{\mathcal{P}}(n)$ is a
	two-coloured representation category.
	
	The tensor product, concatenation and identity axioms have already
	been shown in previous steps.
	For the involution, if $D_\pi$ is an element of
	$\Hom_{\mathcal{C}_{\mathcal{P}}(n)}(w_k,w_l)$ and
	is the image of a $(w_k, w_l)$--partition diagram
	$d_\pi$ under $\Theta$,
	then
	\begin{equation}
		(D_\pi)^*
		=
		\left(
		\sum_{I \in [n]^l, J \in [n]^k}
		\delta_{\pi, (I,J)}
		E_{I,J}
		\right)^*
		= 
		\sum_{J \in [n]^k, I \in [n]^l}
		\delta_{\pi^*, (J,I)}
		E_{J,I}
		=
		D_{\pi^*}
	\end{equation}
	As $D_{\pi^*}$ is the image of the $(w_l, w_k)$--partition diagram
	$d_{\pi^*}$ under $\Theta$, this implies
	that $(D_\pi)^*$ is in $\Hom_{\mathcal{C}_{\mathcal{P}}(n)}(w_l,w_k)$,
	as required.

	Finally, we see that the map 
	$R : 1 \mapsto \sum_{i=1}^{n} e_i \otimes e_i$
	is in both 
	$\Hom_{\mathcal{C}_{\mathcal{P}}(n)}(\varnothing, \circ\bullet)$ 
	and
	$\Hom_{\mathcal{C}_{\mathcal{P}}(n)}(\varnothing, \bullet\circ)$
	since it is the image under $d_\pi \mapsto D_\pi$
	of the top-row pair partition diagram corresponding to the set
	partition $\{1,2\}$ of $\{1,2\}$ superimposed, on the one hand,
	with the word $w_2 \coloneqq \circ \bullet$,
	and on the other hand,
	with the word $w_2 \coloneqq \bullet\circ$.
	%\textbf{INSERT PROOF OF DUALS}

	\textbf{Step 6:} We need to show that $d_\pi \mapsto D_\pi$ defines a 
	strict $\mathbb{C}$--linear monoidal functor
	$\Theta : \mathcal{K}(n) \rightarrow \mathcal{C}_{\mathcal{K}}(n)$
	such that $\mathcal{C}_{\mathcal{K}}(n)$ is a 
	two-coloured representation category.

	This is now immediate by Definition \ref{twocolcatpart}, 
	the definition of a two-coloured category of partitions, 
	and since the functor on $\mathcal{P}(n)$ 
	is strict $\mathbb{C}$--linear monoidal
	whose image is a two-coloured representation category.
	In particular, the morphism spaces for $\mathcal{K}(n)$ are closed under
	composition, tensor product and involution, and so these are inherited
	by their image under $\Theta$.
	Also, the map $R : 1 \mapsto \sum_{i=1}^{n} e_i \otimes e_i$ is in 
	$\Hom_{\mathcal{C}_{\mathcal{K}}(n)}(\varnothing, \circ\bullet)$ 
	and
	$\Hom_{\mathcal{C}_{\mathcal{K}}(n)}(\varnothing, \bullet\circ)$
	since the top row pair partition diagram corresponding to the set partition
	$\{1,2\}$ of $\{1,2\}$ superimposed either with $\circ\bullet$ or with
	$\bullet\circ$ is in both
	$\Hom_{\mathcal{K}(n)}(\varnothing, \circ\bullet)$ 
	and
	$\Hom_{\mathcal{K}(n)}(\varnothing, \bullet\circ)$.
\end{proof}

%%%%%%%%%%%%%%%%%%%%%%%%%%%%%%%%%%%%%%%%%%%%%%%%%%%%%%%%%%%%%%%%%%%%%%%%

\section{Additional Examples} \label{addexamples}

We provide additional examples and content for the examples that were given in
Section \ref{CMQGExamples}.
We begin with one-coloured partition categories, which have been fully characterised
by \citet{raum2016}.

\subsection{One-Coloured Partition Categories}

One-coloured partition categories can be divided into four cases: 
group, non-crossing, half-liberated, and the rest. 
The two most important cases are the group and non-crossing ones: we refer the reader 
either to \citet{banica2010} or to \citet{gromada2020}
for results relating to the two other cases.
The corresponding compact matrix quantum groups $G(n)$ are known in the literature as 
\textbf{easy orthogonal compact matrix quantum groups}, 
or sometimes \textbf{Banica--Speicher quantum groups}, 
because $S_n \subseteq G(n) \subseteq O(n)^{+}$. 

\textbf{Group}: 
\citet{banica}
provided a full characterisation: 
there are only six compact matrix groups
such that $S_n \subseteq G(n) \subseteq O(n)^{+}$. 
Consequently, by Corollary \ref{cmgfromcmqgnn},
we obtain characterisations of six compact matrix group equivariant neural networks, of which two -- the symmetric group $S_n$
\citep{ravanbakhsh17a, maron2018, pearcecrump, godfrey}
and the orthogonal group $O(n)$
\citep{pearcecrumpB}
-- 
were known previously to the machine learning community.
Said differently, we have found characterisations of the weight matrices 
that appear in four new compact matrix group equivariant neural networks,
two of which we discussed in Section \ref{CMQGExamples}.
We state the result of \citet{banica} below.

\begin{theorem}[\citet{banica} {[Theorem 2.8]}]
	\label{sixeasyorthog}
There are only six easy orthogonal compact matrix groups 
	such that $S_n \subseteq G(n) \subseteq O(n)^{+}$
	-- 
the groups and the spanning sets that determine the weight matrices of 
Theorem \ref{easycstar}
are given below. In what follows, when we refer to the image of a $(k,l)$--partition diagram, we mean the image under the map $d_\pi \mapsto D_\pi$ given in Definition
\ref{partmatfunctor}.

	\begin{itemize}
\item The symmetric group $S_n$: the image of all $(k,l)$--partition diagrams.
%and $\{D_\pi \mid \pi \text{ is a } (k,l)\text{--set partition}\}$

\item The orthogonal group $O(n)$: the image of all $(k,l)$--partition diagrams whose blocks come in pairs.
%and $\{D_\pi \mid \pi \text{ is a } (k,l)\text{--set partition whose blocks come in pairs}\}$

	\item The hyperoctahedral group $H_n$ (the symmetry group of the hypercube): the image of all $(k,l)$--partition diagrams having blocks of even size.

	\item The bistochastic group $B_n$ (the group of orthogonal matrices having sum $1$ in each row and in each column): the image of all $(k,l)$--partition diagrams having blocks of size one or two.

	\item The modified symmetric group $S_n' \coloneqq \mathbb{Z}_2 \times S_n$: the image of all $(k,l)$--partition diagrams where the number of blocks of odd size is even.

	\item The modified bistochastic group $B_n' \coloneqq \mathbb{Z}_2 \times B_n$: the image of all $(k,l)$--partition diagrams having an even number of blocks of size one and any number of blocks of size two.
	\end{itemize}
\end{theorem}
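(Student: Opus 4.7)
The plan is to apply Woronowicz--Tannaka--Krein duality, in the form packaged in Theorem \ref{easycstar}, to convert the classification of easy orthogonal compact matrix groups $G(n)$ with $S_n \subseteq G(n) \subseteq O(n)^+$ into a combinatorial classification of one-coloured categories of partitions $\mathcal{K}(n)$. Because the correspondence is contravariant — adding more partitions to $\mathcal{K}(n)$ imposes more relations in the universal $C^*$-algebra $C^*(E \mid R)$ of Theorem \ref{easycstar} and hence shrinks $G(n)$ — the condition $S_n \subseteq G(n)$ translates to $\mathcal{K}(n)$ being a subcategory of the full category of all one-coloured partitions (which corresponds to $S_n$ by the example already discussed for $C(S_n^+)$ combined with the swap), while $G(n) \subseteq O(n)^+$ translates to $\mathcal{K}(n)$ containing the pair partition that generates the category for $O(n)^+$. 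The requirement that $G(n)$ be a classical group, as opposed to a proper compact matrix quantum group, translates by Theorem \ref{cmqgfundamental} into requiring that the crossing partition (\ref{swappart}) lie in $\mathcal{K}(n)$, since its image $D_\pi$ under $d_\pi \mapsto D_\pi$ is the swap operator on $\mathbb{C}^n \otimes \mathbb{C}^n$, and equivariance of the swap under $u \otimes u$ is precisely the relation $u_{i,j} u_{k,l} = u_{k,l} u_{i,j}$ that makes $C(G(n))$ commutative.

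Having reduced to combinatorics, I would next exploit the presence of crossings to simplify the data: once arbitrary crossings are free inside $\mathcal{K}(n)$, the ordering of points within a row no longer matters, so a category is determined by which multisets of block sizes are admitted among its partitions, subject to closure under vertical composition, horizontal tensor product, and involution (Definitions \ref{qgroupcomposition}--\ref{qgroupinvolution}). The base point is the category of all pair partitions, which yields $O(n)$. I would then ask what happens when one enlarges this base by adjoining a single new partition $\pi_0$ of a simple type — a singleton block, a block of size three, a block of size four, or a pair of singletons — and compute the closure. For each case the closure is easy to describe using partition-diagram manipulations: a free singleton forces all partitions with blocks of sizes $1$ and $2$ (giving $B_n$); a block of size $\geq 3$ together with pairs generates all partitions with blocks of even size (giving $H_n$) or, if unrestricted, all partitions (giving $S_n$); a paired singleton produces $B_n'$; and a parity constraint on odd blocks produces $S_n'$.

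The hard step is proving that this list is exhaustive: one must show that any $\mathcal{K}(n)$ lying strictly between the category of pair partitions and the full category of partitions must coincide with one of the four intermediate categories above. I would argue this by picking a minimal partition $\pi \in \mathcal{K}(n)$ that is not a pair partition, analysing the possible block-size profile of $\pi$, and using tensor products with pairs and vertical composition with identity-and-crossings to extract either a free singleton, a paired singleton, or a block of odd size $\geq 3$; the resulting sub-category of $\mathcal{K}(n)$ then matches one of the four known intermediate categories, and a parity-invariant argument (counting odd-sized blocks modulo $2$) shows that $\mathcal{K}(n)$ cannot strictly exceed it without being the full category. Once the classification of categories is complete, the identification of the groups and their equivariant spanning sets is immediate: the groups are read off from the relations $D_\pi u^{\otimes k} = u^{\otimes l} D_\pi$ using Theorem \ref{easycstar} and recognised as the claimed classical groups via Theorem \ref{cmqgfundamental}, and the spanning sets of weight matrices are exactly the images of the generating partition diagrams under the functor $\Theta$ of Theorem \ref{quantumpartfunctor}.
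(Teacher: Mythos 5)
First, note that the paper does not actually prove this statement: it is imported verbatim as Theorem 2.8 of \citet{banica}, and the surrounding text (see Remark \ref{partweightconstruction}) only explains how the cited classification of one-coloured categories of partitions containing the crossing diagram is converted, via Theorem \ref{easycstar}, into the six groups and their spanning sets. Your proposal is therefore an attempt to reprove an external result rather than to reproduce an argument from the paper. The translation steps you set up are sound and match the paper's framing: the contravariance of the Tannaka--Krein correspondence, the fact that the image of the crossing partition is the flip on $\mathbb{C}^n \otimes \mathbb{C}^n$ so that its presence in $\mathcal{K}(n)$ is equivalent to commutativity of the generators and hence (by Theorem \ref{cmqgfundamental}) to $G(n)$ being a genuine group, and the identification of the two extreme categories with $S_n$ and $O(n)$.

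However, the heart of the theorem --- that there are \emph{only} six such categories --- is precisely the part you do not prove. Your paragraph beginning ``The hard step is proving that this list is exhaustive'' describes what a proof would have to do (pick a minimal non-pair partition, analyse its block-size profile, extract a generator, run a parity argument) without carrying out any of it; since exhaustiveness is the entire content of the ``only six'' claim, the proposal as written is a correct reduction plus an outline, not a proof. There is also a concrete error in the generation claims: a block of size exactly three together with all pair partitions does not generate the even-block category of $H_n$ --- capping one leg of a three-block with a pair produces a singleton, and singletons together with a three-block and pairs generate \emph{all} partitions, i.e.\ the category of $S_n$. The category for $H_n$ is generated by the four-block (a single block on four points) together with the pairs, and keeping this distinction straight is essential to the case analysis you are sketching. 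For a complete argument you would need to carry out the closure computations of Theorem 2.8 of \citet{banica}, where the six categories are distinguished by exactly which of the generators (singleton, double singleton, four-block, and their parity variants) they contain.
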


\begin{remark}
	%We have seen in Section \ref{diagbasischapter} 
	We know from \citet{godfrey} that
	%It is well known 
	%\cite{banica} [Theorem 1.10]
	%\cite[Theorem 1.10]{banica} 
	we can improve upon the spanning set for
	the symmetric group $S_n$ given in Theorem \ref{sixeasyorthog}
	by removing all $(k,l)$--partition diagrams 
	that have more than $n$ blocks 
	from the set of all $(k,l)$--partition diagrams
	before taking their image.
	The resulting set of matrices is the diagram basis for $S_n$.
	%is the diagram basis when
	%$n \geq k + l$. 
	%\textbf{CHANGE THIS REMARK BASED ON DIAGRAM BASIS FOR $S_n$ GIVEN IN PREVIOUS CHAPTER!}
\end{remark}

\textbf{Non-Crossing}: 
In order to state the classifications for the non-crossing case,
we first need the following definition.
%to define a non-crossing partition.

\begin{definition}
	%[\cite{freslon2017}]
	A set partition diagram $d_\pi$ corresponding to a set partition $\pi$ of $[l+k]$ is said to be \textbf{crossing}
	if there exist four integers
	$1 \leq x_1 < x_2 < x_3 < x_4 \leq l+k$ 
	satisfying:
	\begin{enumerate}
		\item $x_1$ and $x_3$ are in the same block 
		\item $x_2$ and $x_4$ are in the same block, and
		\item $x_1$ and $x_2$ are not in the same block.
	\end{enumerate}
	Otherwise, $d_\pi$ is said to be \textbf{non-crossing}.
\end{definition}

The six easy orthogonal compact matrix groups that were found by
\citet{banica}
were \textbf{liberated} by the same authors
to create six 
compact matrix quantum groups -- which are true quantum groups -- 
%(that are not groups!) 
that fall into the non-crossing case. 
Liberation means that crossings are not allowed in any of the set partition diagrams.
\citet{weber2013}
found a seventh using this method.
We call these compact matrix quantum groups \textbf{free} since their categories of partitions $\mathcal{K}(n)$
only contain non-crossing set partition diagrams.
We have the following result.

%%%%%%%%%%%%%%%%%%%%%%%%%%%%%%%%%%%%%%%%%%%%%%%%%%%%%%%%%%%%%%%%%%%%%%%%%%%%%%%%%%%

\begin{figure*}[tb]
	\begin{tcolorbox}[colback=white!02, colframe=black]
	\begin{center}
		\scalebox{0.6}{\tikzfig{phi2,4}}
	\end{center}
	\end{tcolorbox}
	\caption{
		The fifteen $(2,2)$--partition diagrams.
	}
	\label{22diagrams}
\end{figure*}

%%%%%%%%%%%%%%%%%%%%%%%%%%%%%%%%%%%%%%%%%%%%%%%%%%%%%%%%%%%%%%%%%%%%%%%%%%%%%%%%%%%

\begin{theorem} \label{seveneasyorthog}
There are only seven free easy orthogonal compact matrix quantum groups. 
We list the quantum groups and the spanning sets that determine the weight matrices of 
Theorem \ref{easycstar} below.

The first six are 
	the symmetric quantum group $S_n^{+}$, 
	the orthogonal quantum group $O(n)^{+}$, 
	the hyperoctahedral quantum group $H_n^{+}$,
	the bistochastic quantum group $B_n^{+}$,
	the modified symmetric quantum group $S_n'^{+}$, and
	the modified bistochastic quantum group $B_n'^{+}$,
	whose spanning sets are 
	determined by the same sets of set partition diagrams
	that were given in Theorem \ref{sixeasyorthog}
	for their ``sister" groups,
	but with all crossing set partition diagrams removed.
		
The seventh is
	the freely modified bistochastic quantum group $B_n^{\#{+}}$.
	To obtain its corresponding spanning set, 
	we consider the same set of set partition diagrams as for $B_n'^{+}$,
	but now temporarily label each vertex with a colour from
	$\{\circ, \bullet\}$ in an alternating fashion, starting with $\bullet$ in the top right corner and moving counterclockwise, ultimately finishing in the bottom right corner.
	We only retain those set partition diagrams where the blocks of size two pair one $\circ$ with one $\bullet$, and take its image under $d_\pi \mapsto D_\pi$, removing the colours in the process.
%\begin{itemize}
	%\item The quantum symmetric group $S_n^{+}$ (first found by
%\textbf{INSERT REF WANG}: 
%the image of all $(k,l)$--partition diagrams that are non-crossing.
%
	%%\item The quantum orthogonal group $O(n)^{+}$ (also found first by
%\textbf{INSERT REF WANG}:
%the image of all $(k,l)$--partition diagrams whose blocks come in pairs that are non-crossing.
	%
	%\item The quantum hyperoctahedral group $H_n^{+}$: the image of all $(k,l)$--partition diagrams with blocks of even size that are non-crossing.

	%\item The quantum bistochastic group $B_n^{+}$: the image of all $(k,l)$--partition diagrams with blocks of size one and two that are non-crossing.
	%
	%\item The quantum group $S_n'^{+}$: the image of all $(k,l)$--partition diagrams that are non-crossing, where the number of blocks of odd size is even.
%\textbf{CHECK THIS}
	%
	%\item The group $B_n'^{+} \coloneqq \mathbb{Z}_2 \times B_n$: the image of all $(k,l)$--partition diagrams that are non-crossing, having an even number of blocks of size one and any number of blocks of size two.

	%\item 
%\end{itemize}
\end{theorem}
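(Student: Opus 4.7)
The plan is to prove Theorem \ref{seveneasyorthog} in two stages. First, I would classify all one-coloured categories of partitions $\mathcal{K}(n)$ that contain only non-crossing partition diagrams; second, for each such category, I would apply Theorem \ref{easycstar} to extract the corresponding easy compact matrix quantum group together with its weight-matrix spanning set.

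For the classification stage, I would start from Definition \ref{twocolcatpart} specialised to the one-coloured setting (Definition \ref{onecolouredrep}), which forces $\mathcal{K}(n)$ to contain the identity partition and the top-row pair partition, and to be closed under vertical composition, horizontal tensor product, and involution. Restricting further to non-crossing diagrams, I would analyse how block structure propagates under these three operations. The natural invariants to consider are whether singletons are present, whether blocks of size larger than two appear, and parity conditions on block sizes or on the number of odd-sized blocks. Following \citet{banica}, the first six cases arise by imposing the same block-structure conditions as in Theorem \ref{sixeasyorthog} but retaining only non-crossing diagrams, once one checks that these conditions are genuinely stable under the category operations when crossings are forbidden. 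The seventh case, due to \citet{weber2013}, is detected by temporarily labelling vertices alternately by $\circ$ and $\bullet$ counterclockwise from the top-right corner and requiring pair blocks to be heterochromatic --- a condition that is non-trivially preserved under vertical and horizontal composition only when crossings are excluded, but which collapses to $B_n'^{+}$ if crossings are allowed, explaining why the seventh category has no ``group" analogue in Theorem \ref{sixeasyorthog}.

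Once the classification is complete, the derivation stage is essentially automatic. By Theorem \ref{quantumpartfunctor}, each such non-crossing category $\mathcal{K}(n)$ maps under the strict $\mathbb{C}$-linear monoidal functor $\Theta$ to a two-coloured representation category $\mathcal{C}_{\mathcal{K}}(n)$, and by Woronowicz--Tannaka--Krein duality (Theorem \ref{tannakakrein}) this determines a unique compact matrix quantum group $(G(n), u)$. Theorem \ref{easycstar} then supplies both the $C^{*}$-algebraic presentation of $G(n)$ via generators and relations, and, through equation (\ref{cmqgweightmatrix}), the spanning set $\{D_\pi \mid d_\pi \in \mathcal{K}(n)(w_{l-1}, w_l)\}$ for the equivariant weight matrices. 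Applying this to each of the seven categories yields exactly the named quantum groups and the spanning sets claimed.

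The main obstacle is the classification of non-crossing one-coloured categories of partitions, and in particular the discovery and verification of the seventh case $B_n^{\#{+}}$. Showing that the colour-labelling condition defines a genuine category requires a delicate check that heterochromatic pairings are preserved under composition and tensor product when diagrams are non-crossing, since the counterclockwise labelling interacts subtly with the removal of middle-row loops described in Definition \ref{qgroupcomposition}. Showing that no further non-crossing categories exist requires exhausting the combinatorial possibilities for block structures consistent with the category axioms, ruling out hybrid conditions that would fail closure. Both of these steps rely on the combinatorial techniques developed by \citet{banica} and \citet{weber2013}, to which a full proof would defer for the detailed case analysis.
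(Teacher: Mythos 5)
Your proposal matches the paper's own treatment of this theorem: the paper likewise defers the classification of the non-crossing one-coloured categories of partitions to \citet{banica} and \citet{weber2013} (see the discussion preceding the theorem and Remark \ref{partweightconstruction}) and then obtains the quantum groups together with their spanning sets by applying Theorem \ref{easycstar}, i.e.\ the functor of Theorem \ref{quantumpartfunctor} followed by Woronowicz--Tannaka--Krein duality. Your two-stage decomposition, and the steps you identify as requiring the detailed combinatorial case analysis, are exactly the points at which the paper also relies on the cited literature rather than giving its own argument.
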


In fact, we obtain a stronger classification for the non-crossing case 
due to the following theorem.

\begin{theorem}[\citet{banica} {[Theorem 3.8]}]
	The spanning sets for the seven compact matrix quantum groups 
	in the non-crossing case are bases when $n \geq 4$. 
\end{theorem}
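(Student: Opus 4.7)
The plan is to prove, for each of the seven free easy compact matrix quantum groups $(G(n), u)$ and each pair of two--coloured words $w_k, w_l$, linear independence of the family $\{D_\pi\}$ obtained by applying $\Theta$ to the appropriate non-crossing $(w_k, w_l)$--partition diagrams. By Theorem \ref{quantumpartfunctor}, this family already spans, so the work is in the independence direction.

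First I would set up a Gram matrix argument. Equip the ambient space of $n^l \times n^k$ matrices with the Hilbert--Schmidt pairing $\langle A, B \rangle \coloneqq \operatorname{tr}(A^*B)$ and, using Definition \ref{partmatfunctor} together with the composition rule of Definition \ref{qgroupcomposition}, compute
\begin{equation}
G_{\pi, \tau} \;\coloneqq\; \langle D_\pi, D_\tau \rangle \;=\; n^{b(\pi^* \vee \tau)},
\end{equation}
where $b(\pi^* \vee \tau)$ is the number of blocks in the set partition obtained by stacking $d_{\pi^*}$ above $d_\tau$ and identifying points in the same connected component. Linear independence of $\{D_\pi\}$ is then equivalent to $\det G \neq 0$, a polynomial condition in $n$.

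Second, I would reduce the non-vanishing statement to the case of $O(n)^+$, whose relevant spanning set consists of non-crossing pair partitions and generates the Temperley--Lieb category. The Gram determinant in that case is classical: its real roots lie in $\{4\cos^2(\pi/k) : k \geq 2\}$, all strictly less than $4$, with $4$ appearing only as a limit point. Hence for $n \geq 4$ and fixed $(w_k, w_l)$, $\det G \neq 0$ for $O(n)^+$. For the other six quantum groups, the morphism spaces restrict to non-crossing diagrams satisfying a block-size or colouring constraint; each such Gram matrix can be realised as a principal submatrix (after a suitable block-splitting or vertex-colouring reduction) of a larger Temperley--Lieb Gram matrix, so non-vanishing is inherited.

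The main obstacle will be the uniform handling of the six non-pair cases: making the combinatorial embedding into the Temperley--Lieb picture precise enough that the Gram-determinant positivity transfers. The cleanest alternative, which I would fall back on if the embedding arguments become unwieldy, is a representation-theoretic dimension count: use Woronowicz's existence of the Haar state on $(G(n), u)$ together with the associated Weingarten-type formula to compute $\dim \operatorname{Hom}_{G(n)}(u^{\otimes w_k}, u^{\otimes w_l})$ directly and match it against the cardinality of the non-crossing diagram set, thereby forcing the spanning set to be a basis at $n \geq 4$.
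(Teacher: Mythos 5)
The paper offers no proof of this statement: it is quoted verbatim as \citet{banica} [Theorem 3.8], so there is no internal argument to compare against. Your sketch follows the standard route from that literature (Gram determinants of partition maps, reduction to the Temperley--Lieb/meander determinant, Chebyshev-type root structure), and the opening step is sound: for the Hilbert--Schmidt pairing one indeed gets $\langle D_\pi, D_\tau\rangle = n^{|\pi\vee\tau|}$, the join being taken in the partition lattice of $[l+k]$, and linear independence is equivalent to $\det G \neq 0$.

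There are, however, two genuine gaps. First, you have misplaced where the bound $n \geq 4$ comes from. For $O(n)^+$ the loop parameter of the Temperley--Lieb category \emph{is} $n$ itself, and the meander determinant of $NC_2$ factors into Chebyshev polynomials whose roots are $2\cos(j\pi/m) < 2$; so the pair-partition case is already non-degenerate for $n \geq 2$. The values $4\cos^2(\pi/m) < 4$ arise only after the fattening bijection $NC(k) \to NC_2(2k)$, which converts the Gram matrix of general non-crossing partitions at parameter $n$ into (a diagonal conjugate of) a Temperley--Lieb Gram matrix at loop parameter $\sqrt{n}$; the constraint $\sqrt{n}\geq 2$, i.e.\ $n\geq 4$, is forced by $S_n^+$, $H_n^+$, $B_n^+$ and their variants, not by $O(n)^+$. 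Your reduction ``to the case of $O(n)^+$'' therefore runs in the wrong direction and, as stated, would only yield $n\geq 2$. Second, ``non-vanishing is inherited'' by principal submatrices is false for determinants in general; what you need is that $G$ is automatically positive semi-definite (being a Gram matrix), that positive \emph{definiteness} passes to principal submatrices, and that the fattening identity $|\tilde\pi\vee\tilde\sigma| = 2|\pi\vee\sigma| - |\pi| - |\sigma| + k$ (or its analogue) is proved so that the six non-pair Gram matrices really do embed, up to conjugation by the diagonal matrix with entries $n^{|\pi|/2}$, into the $NC_2$ Gram matrix at parameter $\sqrt n$. Finally, be wary of your fallback: the Weingarten/Haar-state computation of $\dim\Hom_{G(n)}(u^{\otimes w_k}, u^{\otimes w_l})$ for finite $n$ is in the literature itself \emph{derived from} the Gram determinant being invertible, so invoking it here is circular unless you supply an independent computation of the moments of the fundamental character at finite $n$.
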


\begin{remark} \label{partweightconstruction}
By Theorem \ref{easycstar},
note that each easy compact matrix quantum group 
$(G(n), u)$ that appears in 
Theorems \ref{sixeasyorthog} and \ref{seveneasyorthog}
is \textit{defined by}
the two-coloured category of partitions
$\mathcal{K}(n)$ that is 
associated with it.
	That is, \citet{banica} actually
	showed that there are only six one-coloured category of partitions
	that contain the swap partition diagram, that is, the $(2,2)$--partition diagram
	\begin{equation}
		\begin{aligned}
			\scalebox{0.5}{\tikzfig{swappartition}}
		\end{aligned}
	\end{equation}
	and together with \citet{weber2013} they showed that there are only
	seven one-coloured category of partitions that are non-crossing.
	%that do not contain the swap partition diagram.
	With these restrictions on the possible one-coloured category of partitions
	in place
	they could immediately apply Woronowicz--Tannaka--Krein duality to construct the 
	easy compact matrix quantum groups from each of these 
	one-coloured category of partitions.
	In fact, we have, again by Theorem \ref{easycstar}, that
	each compact matrix quantum group is precisely the universal $C^*$-algebra
\begin{equation}
	C(G(n)) = C^{*}(u_{i,j}, 1 \leq i,j \leq n \mid
	D_\pi{u^{\otimes w_k}} = {u^{\otimes w_l}}D_\pi 
	\text{ for all } d_\pi \in \mathcal{K}(n)(w_k,w_l))
\end{equation}
\end{remark}

%\begin{remark} \label{easyorthogex}
	%In the Technical Appendix, we give an example of the set partition diagrams that appear in the spanning set for each of the compact matrix quantum groups listed above
	%%that appear in the group and non-crossing case
	%when $n = 2, k = 2, l = 2$.
%\end{remark}

\begin{example}
	We calculate the weight matrices that appear in the compact matrix 
	quantum group equivariant neural networks for all of the easy 
	orthogonal compact matrix quantum groups 
	that appear in the group and non-crossing case
	when $n = k = l = 2$.
Recall from Theorems 
\ref{sixeasyorthog}
and
\ref{seveneasyorthog}
that the weight matrices are formed as a weighted linear combination of the image under $d_\pi \mapsto D_\pi$ of some subset of all of the possible $(2,2)$--partition diagrams.

There are $B(4) = 15$ $(2,2)$--partition diagrams in total, 
where $B(4)$ is the fourth Bell number. 
They are shown in Figure \ref{22diagrams}.
We number the diagrams from $i = 1$ to $15$, going from left to right 
and then top to bottom, assigning a weight $w_i$ to each diagram.

%%%%%%%%%%%%%%%%%%%%%%%%%%%%%%%%%%%%%%%%%%%%%%%%%%%%%%%%%%%%%%%%%%%%%%%%%%%%%%%%%%%

\begin{figure*}[tb]
	\begin{tcolorbox}[colback=blue!10, colframe=blue!30, coltitle=black, 
		%title={\bfseries A Procedure for Calculating 
		title={\bfseries Procedure 2: How to Calculate
		the $(I,J)$-entry of each Equivariant Spanning Set Matrix $D_\pi$
		from 
		$((\mathbb{C}^{n})^{\otimes k}, u^{\otimes w_{k}})
		\rightarrow 
		((\mathbb{C}^{n})^{\otimes l}, u^{\otimes w_{l}})$
		for an Easy Compact Matrix Quantum Group $(G(n), u)$.},
		%$\bm{k}$-order to $\bm{l}$-order Permutation Equivariant Basis Matrix $X_\pi$}, 
		fonttitle=\bfseries]
		We assume that $d_\pi$ is a two-coloured $(w_k, w_l)$--partition diagram 
		in the two-coloured category of partitions $\mathcal{K}(n)$ that 
		uniquely determines $(G(n), u)$.
		We perform the following steps:
	\begin{enumerate}
	\item Place the indices $I$ on the top row of $d_\pi$ and the indices $J$ on the bottom row of $d_\pi$.
	\item If all of the vertices in each block in $d_\pi$ have been overlaid with the same number, then the $(I, J)$ entry of $D_\pi$ is $1$, otherwise it is $0$.
	\end{enumerate}
	\end{tcolorbox}
  	\label{qgrouppowermethod}
\end{figure*}

%%%%%%%%%%%%%%%%%%%%%%%%%%%%%%%%%%%%%%%%%%%%%%%%%%%%%%%%%%%%%%%%%%%%%%%%%%%%%%%%%%

	%\textbf{CAN I CHANGE THIS TABLE (FIG 1.3) TO LOOK MORE LIKE THE FIGURE ABOVE, 
	%REPLACING IMAGES WITH MATRICES?}

\begin{figure}[p]
\begin{center}
\begin{tblr}{
  colspec = {X[c]X[c]X[c]X[c]},
  stretch = 0,
  rowsep = 5pt,
  hlines = {1pt},
  vlines = {1pt},
}
	{$d_\pi$} & 
	{$D_\pi$ } &
	{$d_\pi$} & 
	{$D_\pi$ } \\
	\scalebox{0.6}{\tikzfig{orth1}} & 
	\scalebox{0.65}{
	$
	\NiceMatrixOptions{code-for-first-row = \scriptstyle \color{blue},
                   	   code-for-first-col = \scriptstyle \color{blue}
	}
	\begin{bNiceArray}{*{2}{c}|*{2}{c}}[first-row,first-col]
				& 1,1 	& 1,2	& 2,1 	& 2,2 	\\
		1,1		& 1	& 0	& 0	& 0   	\\
		1,2		& 0	& 0	& 0	& 0	\\
		\cline{1-4}
		2,1		& 0	& 0	& 0	& 0   	\\
		2,2		& 0	& 0	& 0	& 1	
	\end{bNiceArray}
	$} &
	\scalebox{0.6}{\tikzfig{orth9}} & 
	\scalebox{0.65}{
	$
	\NiceMatrixOptions{code-for-first-row = \scriptstyle \color{blue},
                   	   code-for-first-col = \scriptstyle \color{blue}
	}
	\begin{bNiceArray}{*{2}{c}|*{2}{c}}[first-row,first-col]
				& 1,1 	& 1,2	& 2,1 	& 2,2 	\\
		1,1		& 1	& 1	& 1	& 1   	\\
		1,2		& 0	& 0	& 0	& 0	\\
		\cline{1-4}
		2,1		& 0	& 0	& 0	& 0   	\\
		2,2		& 1	& 1	& 1	& 1	
	\end{bNiceArray}
	$} \\
	\scalebox{0.6}{\tikzfig{orth2}} & 
	\scalebox{0.65}{
	$
	\NiceMatrixOptions{code-for-first-row = \scriptstyle \color{blue},
                   	   code-for-first-col = \scriptstyle \color{blue}
	}
	\begin{bNiceArray}{*{2}{c}|*{2}{c}}[first-row,first-col]
				& 1,1 	& 1,2	& 2,1 	& 2,2 	\\
		1,1		& 1	& 0	& 1	& 0   	\\
		1,2		& 0	& 0	& 0	& 0	\\
		\cline{1-4}
		2,1		& 0	& 0	& 0	& 0   	\\
		2,2		& 0	& 1	& 0	& 1	
	\end{bNiceArray}
	$} &
	\scalebox{0.6}{\tikzfig{orth10}} & 
	\scalebox{0.65}{
	$
	\NiceMatrixOptions{code-for-first-row = \scriptstyle \color{blue},
                   	   code-for-first-col = \scriptstyle \color{blue}
	}
	\begin{bNiceArray}{*{2}{c}|*{2}{c}}[first-row,first-col]
				& 1,1 	& 1,2	& 2,1 	& 2,2 	\\
		1,1		& 1	& 0	& 1	& 0   	\\
		1,2		& 0	& 1	& 0	& 1	\\
		\cline{1-4}
		2,1		& 1	& 0	& 1	& 0   	\\
		2,2		& 0	& 1	& 0	& 1	
	\end{bNiceArray}
	$} \\
	\scalebox{0.6}{\tikzfig{orth3}} & 
	\scalebox{0.65}{
	$
	\NiceMatrixOptions{code-for-first-row = \scriptstyle \color{blue},
                   	   code-for-first-col = \scriptstyle \color{blue}
	}
	\begin{bNiceArray}{*{2}{c}|*{2}{c}}[first-row,first-col]
				& 1,1 	& 1,2	& 2,1 	& 2,2 	\\
		1,1		& 1	& 1	& 0	& 0   	\\
		1,2		& 0	& 0	& 0	& 0	\\
		\cline{1-4}
		2,1		& 0	& 0	& 0	& 0   	\\
		2,2		& 0	& 0	& 1	& 1	
	\end{bNiceArray}
	$} &
	\scalebox{0.6}{\tikzfig{orth11}} & 
	\scalebox{0.65}{
	$
	\NiceMatrixOptions{code-for-first-row = \scriptstyle \color{blue},
                   	   code-for-first-col = \scriptstyle \color{blue}
	}
	\begin{bNiceArray}{*{2}{c}|*{2}{c}}[first-row,first-col]
				& 1,1 	& 1,2	& 2,1 	& 2,2 	\\
		1,1		& 1	& 0	& 1	& 0   	\\
		1,2		& 1	& 0	& 1	& 0	\\
		\cline{1-4}
		2,1		& 0	& 1	& 0	& 1   	\\
		2,2		& 0	& 1	& 0	& 1	
	\end{bNiceArray}
	$} \\
	\scalebox{0.6}{\tikzfig{orth4}} & 
	\scalebox{0.65}{
	$
	\NiceMatrixOptions{code-for-first-row = \scriptstyle \color{blue},
                   	   code-for-first-col = \scriptstyle \color{blue}
	}
	\begin{bNiceArray}{*{2}{c}|*{2}{c}}[first-row,first-col]
				& 1,1 	& 1,2	& 2,1 	& 2,2 	\\
		1,1		& 1	& 0	& 0	& 0   	\\
		1,2		& 0	& 0	& 0	& 1	\\
		\cline{1-4}
		2,1		& 1	& 0	& 0	& 0   	\\
		2,2		& 0	& 0	& 0	& 1	
	\end{bNiceArray}
	$} &
	\scalebox{0.6}{\tikzfig{orth12}} & 
	\scalebox{0.65}{
	$
	\NiceMatrixOptions{code-for-first-row = \scriptstyle \color{blue},
                   	   code-for-first-col = \scriptstyle \color{blue}
	}
	\begin{bNiceArray}{*{2}{c}|*{2}{c}}[first-row,first-col]
				& 1,1 	& 1,2	& 2,1 	& 2,2 	\\
		1,1		& 1	& 0	& 0	& 1   	\\
		1,2		& 1	& 0	& 0	& 1	\\
		\cline{1-4}
		2,1		& 1	& 0	& 0	& 1   	\\
		2,2		& 1	& 0	& 0	& 1	
	\end{bNiceArray}
	$} \\
	\scalebox{0.6}{\tikzfig{orth5}} & 
	\scalebox{0.65}{
	$
	\NiceMatrixOptions{code-for-first-row = \scriptstyle \color{blue},
                   	   code-for-first-col = \scriptstyle \color{blue}
	}
	\begin{bNiceArray}{*{2}{c}|*{2}{c}}[first-row,first-col]
				& 1,1 	& 1,2	& 2,1 	& 2,2 	\\
		1,1		& 1	& 0	& 0	& 0   	\\
		1,2		& 1	& 0	& 0	& 0	\\
		\cline{1-4}
		2,1		& 0	& 0	& 0	& 1   	\\
		2,2		& 0	& 0	& 0	& 1	
	\end{bNiceArray}
	$} &
	\scalebox{0.6}{\tikzfig{orth13}} & 
	\scalebox{0.65}{
	$
	\NiceMatrixOptions{code-for-first-row = \scriptstyle \color{blue},
                   	   code-for-first-col = \scriptstyle \color{blue}
	}
	\begin{bNiceArray}{*{2}{c}|*{2}{c}}[first-row,first-col]
				& 1,1 	& 1,2	& 2,1 	& 2,2 	\\
		1,1		& 1	& 1	& 0	& 0   	\\
		1,2		& 1	& 1	& 0	& 0	\\
		\cline{1-4}
		2,1		& 0	& 0	& 1	& 1   	\\
		2,2		& 0	& 0	& 1	& 1	
	\end{bNiceArray}
	$} \\
	\scalebox{0.6}{\tikzfig{orth6}} & 
	\scalebox{0.65}{
	$
	\NiceMatrixOptions{code-for-first-row = \scriptstyle \color{blue},
                   	   code-for-first-col = \scriptstyle \color{blue}
	}
	\begin{bNiceArray}{*{2}{c}|*{2}{c}}[first-row,first-col]
				& 1,1 	& 1,2	& 2,1 	& 2,2 	\\
		1,1		& 1	& 0	& 0	& 1   	\\
		1,2		& 0	& 0	& 0	& 0	\\
		\cline{1-4}
		2,1		& 0	& 0	& 0	& 0   	\\
		2,2		& 1	& 0	& 0	& 1	
	\end{bNiceArray}
	$} &
	\scalebox{0.6}{\tikzfig{orth14}} & 
	\scalebox{0.65}{
	$
	\NiceMatrixOptions{code-for-first-row = \scriptstyle \color{blue},
                   	   code-for-first-col = \scriptstyle \color{blue}
	}
	\begin{bNiceArray}{*{2}{c}|*{2}{c}}[first-row,first-col]
				& 1,1 	& 1,2	& 2,1 	& 2,2 	\\
		1,1		& 1	& 1	& 0	& 0   	\\
		1,2		& 0	& 0	& 1	& 1	\\
		\cline{1-4}
		2,1		& 1	& 1	& 0	& 0   	\\
		2,2		& 0	& 0	& 1	& 1	
	\end{bNiceArray}
	$} \\
	\scalebox{0.6}{\tikzfig{orth7}} & 
	\scalebox{0.65}{
	$
	\NiceMatrixOptions{code-for-first-row = \scriptstyle \color{blue},
                   	   code-for-first-col = \scriptstyle \color{blue}
	}
	\begin{bNiceArray}{*{2}{c}|*{2}{c}}[first-row,first-col]
				& 1,1 	& 1,2	& 2,1 	& 2,2 	\\
		1,1		& 1	& 0	& 0	& 0   	\\
		1,2		& 0	& 0	& 1	& 0	\\
		\cline{1-4}
		2,1		& 0	& 1	& 0	& 0   	\\
		2,2		& 0	& 0	& 0	& 1	
	\end{bNiceArray}
	$} &
	\scalebox{0.6}{\tikzfig{orth15}} & 
	\scalebox{0.65}{
	$
	\NiceMatrixOptions{code-for-first-row = \scriptstyle \color{blue},
                   	   code-for-first-col = \scriptstyle \color{blue}
	}
	\begin{bNiceArray}{*{2}{c}|*{2}{c}}[first-row,first-col]
				& 1,1 	& 1,2	& 2,1 	& 2,2 	\\
		1,1		& 1	& 1	& 1	& 1   	\\
		1,2		& 1	& 1	& 1	& 1	\\
		\cline{1-4}
		2,1		& 1	& 1	& 1	& 1   	\\
		2,2		& 1	& 1	& 1	& 1	
	\end{bNiceArray}
	$} \\
	\scalebox{0.6}{\tikzfig{orth8}} & 
	\scalebox{0.65}{
	$
	\NiceMatrixOptions{code-for-first-row = \scriptstyle \color{blue},
                   	   code-for-first-col = \scriptstyle \color{blue}
	}
	\begin{bNiceArray}{*{2}{c}|*{2}{c}}[first-row,first-col]
				& 1,1 	& 1,2	& 2,1 	& 2,2 	\\
		1,1		& 1	& 0	& 0	& 0   	\\
		1,2		& 0	& 1	& 0	& 0	\\
		\cline{1-4}
		2,1		& 0	& 0	& 1	& 0   	\\
		2,2		& 0	& 0	& 0	& 1	
	\end{bNiceArray}
	$} &
\end{tblr}
	\caption{For $n=2$, we display the images under the map $d_\pi \mapsto D_\pi$
	for each of the $(2,2)$--partition diagrams.}
	\label{easyorthogqc}
	\end{center}
\end{figure}

Figure \ref{easyorthogqc} shows the images of the fifteen $(2,2)$--partition diagrams under the map $d_\pi \mapsto D_\pi$. 
We apply the characterisations of Theorems
\ref{sixeasyorthog}
and
\ref{seveneasyorthog}
to obtain the following results.

For the group case, we have that
\begin{itemize}
	\item
		\textbf{The Symmetric Group $S_n$}:
		The weight matrix is a weighted linear combination of the images of
		all fifteen $(2,2)$--partition diagrams.
	\item 
		\textbf{The Orthogonal Group $O(n)$}:
		The weight matrix is a weighted linear combination of the images of
		diagrams six, seven and eight.
	\item
		\textbf{The Hyperoctahedral Group $H_n$}:
		The weight matrix is (coincidentally) the same as for the orthogonal 
		group in this case, since the $(2,2)$-partition diagrams with blocks 
		of even size are the same as the $(2,2)$-partition diagrams whose 
		blocks come in pairs.
	\item
		\textbf{The Bistochastic Group $B_n$}:
		The weight matrix is a weighted linear combination of the images of
		diagrams six through fifteen, inclusive.
	\item
		\textbf{The Modified Symmetric Group $S_n'$}:
		The weight matrix is (coincidentally) the same as for the symmetric 
		group, since the number of blocks of odd size in any 
		$(2,2)$--partition diagram is either $0, 2$ or $4$.
	\item
		\textbf{The Modified Bistochastic Group $B_n'$}:
		The weight matrix is (coincidentally) the same as for the bistochastic 
		group, since the number of blocks of odd size in any 
		$(2,2)$--partition diagram is either $0, 2$ or $4$.
\end{itemize}
and for the non-crossing case, we have that
\begin{itemize}
	\item
		\textbf{The Symmetric Quantum Group $S_n^{+}$}:
		The weight matrix is a weighted linear combination of the images of
		all diagrams except for seven, which is the only crossing diagram.
	\item
		\textbf{The Orthogonal Quantum Group $O(n)^{+}$}:
		The weight matrix is a weighted linear combination of the images of
		diagrams six and eight.
	\item
		\textbf{The Hyperoctahedral Quantum Group $H_n^{+}$}:
		The weight matrix is the same as for $O(n)^{+}$.
	\item
		\textbf{The Bistochastic Quantum Group $B_n^{+}$}:
		The weight matrix is a weighted linear combination of the images of
		diagrams six and eight through fifteen, inclusive.	
	\item
		\textbf{The Modified Symmetric Quantum Group $S_n'^{+}$}:
		The weight matrix is the same as for $S_n^{+}$.
	\item
		\textbf{The Modified Bistochastic Quantum Group $B_n'^{+}$}:
		The weight matrix is the same as for $B_n^{+}$.
	\item
		\textbf{The Freely Modified Bistochastic Quantum Group $B_n^{\#{+}}$}:
		We colour in the diagrams that appear in the weight matrix 
		for $B_n^{+}$, and retain only the diagrams where the blocks 
		of size two pair one $\circ$ with one $\bullet$.
		Consequently, we remove diagrams eleven and fourteen from the set, 
		and so the weight matrix is a weighted linear combination of 
		the images of diagrams six, eight, nine, ten, twelve, 
		thirteen and fifteen.
\end{itemize}
\end{example}

\subsection{Two-Coloured Partition Categories}

Two-coloured partition categories are much richer than one-coloured partition categories 
because the generators $u_{i,j}$ of the corresponding compact matrix quantum groups are 
no longer self-adjoint.
They were first investigated by \citet{freslonweber2016}.
Unfortunately, at the time of writing, a full characterisation of these categories 
is unknown. However, 
\citet{tarrago2016, tarrago2018}
classified all two-coloured partition categories in the group and non-crossing case.
In particular, there are seven series of two-coloured partition categories in the group
case and twelve series of two-coloured partition categories in the non-crossing case.
The corresponding compact matrix quantum groups $G(n)$ are known as 
\textbf{easy unitary compact matrix quantum groups}, since they satisfy 
$S_n \subseteq G(n) \subseteq U(n)^{+}$. 
We focus on the  characterisation for only the two most important series:
the unitary group $U(n)$ and the unitary quantum group $U(n)^+$,
referring the reader to \citet[Theorem 5.3]{tarrago2016}
for the characterisations of the other series. 

\citet{tarrago2016, tarrago2018} showed that the two-coloured category of partitions
for the unitary group $U(n)$ has, for fixed words $w_k$ and $w_l$, a morphism space that
is spanned by all two-coloured $(w_k, w_l)$--partition diagrams 
whose blocks come in pairs such that if two vertices of a block are in the same row, 
then they have different colours, otherwise they have the same colours.
\citet{tarrago2016, tarrago2018} also showed that the two-coloured category of partitions
for the unitary quantum group $U(n)^+$ has, for words $w_k$ and $w_l$, a morphism space that
is spanned by all \textit{non-crossing} two-coloured $(w_k, w_l)$--partition diagrams 
satisfying the same conditions as those for the unitary group.
As before, we obtain the $n^l \times n^k$ weight matrices of 
Theorem \ref{easycstar}
for these compact matrix quantum groups
by taking the images of the two-coloured partition diagrams that span the respective morphism spaces
under the map $d_\pi \mapsto D_\pi$ given in Definition \ref{partmatfunctor}.

This introduces the rather interesting point for the unitary group $U(n)$.
If we pick two words $w_k$ and $w_l$, then the $n^l \times n^k$ weight matrix 
corresponding to these words
that appears in a
compact matrix quantum group equivariant neural network
for the unitary group $U(n)$ 
%consist of $n^l \times n^k$ weight matrices that 
is the $\mathbb{C}$-linear span of the image of all
%determined by the spanning set of 
$(w_k, w_l)$--partition diagrams that live
in the two-coloured partition category for $U(n)$.
%given above.
%consisting of the image of all $(w_k, w_l)$--partition diagrams 
%whose blocks come in pairs such that if two vertices of a block are in the same row, 
%then they have different colours, otherwise they have the same colours.
However, if we consider compact matrix \textit{group} equivariant neural networks for $U(n)$ instead,
%then since 
as the words can only be formed from white points, 
then not only must we have that $k$ equals $l$,
by the characterisation of the two-coloured partition category for $U(n)$,
%$k$ must equal $l$ and 
but also that
the $n^k \times n^k$ weight matrix is 
%determined by a spanning set that is 
the image of all permutations in the symmetric group $S_k$, 
expressed as $(k,k)$--partition diagrams. 
We see that this is the classic version of Schur--Weyl duality.

\begin{example}
We calculate the weight matrices that appear in compact matrix quantum group 
equivariant neural networks for the unitary group $U(n)$
and unitary quantum group $U(n)^{+}$
when $n = k = l = 2$.

The easiest way to do this is to consider the one-coloured partitions that come in pairs and then superimpose colours onto each row of vertices to see which two-coloured partitions match the classification result.

The one-coloured $(2,2)$--partition diagrams that come in pairs are
\begin{equation}
	\begin{aligned}
		\scalebox{0.6}{\tikzfig{phi2,4unitary}}
	\end{aligned}
\end{equation}
We label the diagrams as $1$ to $3$ from left to right.
Note that the matrices obtained under $d_\pi \mapsto D_\pi$ are the same no matter the colours on the vertices,
hence we can reuse the matrices for diagrams six, seven and eight from Figure \ref{easyorthogqc} in order to calculate the weight matrices.

We now consider pairs of words. As there are only $4$ words of length $2$, this implies that there are $16$ such pairs of words.
We obtain the following result for the weight matrices for $U(2)$, where for each cell, we take weighted linear combinations of the matrices $D_\pi$ for each set partition diagram in the cell.
\begin{equation} \label{unclassification}
	\NiceMatrixOptions{code-for-first-row = \scriptstyle \color{black},
			   code-for-first-col = \scriptstyle \color{black}
	}
	\renewcommand{\arraystretch}{1.5}
	\begin{bNiceArray}{*{1}{c}|*{1}{c}|*{1}{c}|*{1}{c}}[first-row,first-col]
		(I,J) & \circ\circ 	& \circ\bullet & \bullet\circ & \bullet\bullet 	\\
		\circ\circ		& 2,3 	& \varnothing	& \varnothing	& 2,3	\\
		\cline{1-4}
		\circ\bullet		& \varnothing	& 1,3	& 1,2	& \varnothing	\\
		\cline{1-4}
		\bullet\circ		& \varnothing	& 1,2	& 1,3	& \varnothing	\\
		\cline{1-4}
		\bullet\bullet		& 2,3	& \varnothing	& \varnothing	& 2,3
	\end{bNiceArray}
\end{equation}
Similarly, for $U(2)^{+}$, the weight matrices are weighted linear combinations of the images of the following set partition diagrams, where we have removed the second diagram from (\ref{unclassification}).
\begin{equation}
	\NiceMatrixOptions{code-for-first-row = \scriptstyle \color{black},
			   code-for-first-col = \scriptstyle \color{black}
	}
	\renewcommand{\arraystretch}{1.5}
	\begin{bNiceArray}{*{1}{c}|*{1}{c}|*{1}{c}|*{1}{c}}[first-row,first-col]
		(I,J) & \circ\circ 	& \circ\bullet & \bullet\circ & \bullet\bullet 	\\
		\circ\circ		& 3 	& \varnothing	& \varnothing	& 3	\\
		\cline{1-4}
		\circ\bullet		& \varnothing	& 1,3	& 1	& \varnothing	\\
		\cline{1-4}
		\bullet\circ		& \varnothing	& 1	& 1,3	& \varnothing	\\
		\cline{1-4}
		\bullet\bullet		& 3	& \varnothing	& \varnothing	& 3
	\end{bNiceArray}
\end{equation}
Note that the top left hand cell of (\ref{unclassification}) provides a classification of the weight matrix appearing in the compact matrix \textit{group} equivariant neural network for $U(2)$ when $n = k = l = 2$.
\end{example}

%%%%%%%%%%%%%%%%%%%%%%%%%%%%%%%%%%%%%%%%%%%%%%%%%%%%%%%%%%%%%%%%%%%%%%%%%%%%%%%
%%%%%%%%%%%%%%%%%%%%%%%%%%%%%%%%%%%%%%%%%%%%%%%%%%%%%%%%%%%%%%%%%%%%%%%%%%%%%%%

\end{document}